\def\neurips{1}
\def\arXiv{0}
\def\temp{0}
\def\version{\arXiv}
\date{}
\title{\textbf{Overparameterization from Computational~Constraints}}
\newcommand{\Risk}{\mathrm{Risk}}
\newcommand{\Supp}{\mathrm{Supp}}
\newcommand{\cF}{\mathcal{F}}
\newcommand{\cA}{\mathcal{A}}
\newcommand{\parag}[1]{\textbf{#1}}
\newcounter{thm}
\newtheorem{theorem}[thm]{Theorem}
\newtheorem*{theorem*}{Theorem}
\newtheorem{construct}[thm]{Construction}
\newtheorem{lemma}[thm]{Lemma}
\newtheorem{claim}[thm]{Claim}
\newtheorem{fact}[thm]{Fact}
\newtheorem{definition}[thm]{Definition}
\let\leq\leqslant
\let\geq\geqslant
\newcommand{\remove}[1]{}
\newcommand{\cU}{\mathcal{U}}
\newcommand{\cX}{\mathcal{X}}
\newcommand{\cY}{\mathcal{Y}}
\newcommand{\cD}{\mathcal{D}}
\newcommand{\cH}{\mathcal{H}}
\newcommand{\cS}{\mathcal{S}}
\newcommand{\bbF}{\mathbb{F}}
\newcommand{\bbN}{\mathbb{N}}
\newcommand{\cT}{\mathcal{T}}
\DeclareMathOperator*{\Ex}{\mathbf{E}}
\newcommand{\Mnote}[1]{ { \textcolor{magenta}{[{\bf Mohammad: #1]}} } }
\newcommand{\mingyuan}[1]{{\color{blue}Mingyuan:  {#1}}}
\newcommand{\set}[1]{\{#1\}}
\newcommand{\tr}[1]{\mathsf{Tr}\left({#1}\right)}
\newcommand{\zo}{\{0,1\}}
\newcommand{\eps}{\varepsilon}
\newcommand{\Enc}{\mathsf{Enc}}
\newcommand{\LEnc}{\mathsf{LEnc}}
\newcommand{\poly}{\mathsf{poly}}
\newcommand{\samp}{\mathsf{samp}}
\newcommand{\negl}{\mathsf{negl}}
\newcommand{\draw}{\leftarrow}
\newcommand{\gen}{\mathsf{Gen}}
\newcommand{\sign}{\mathsf{Sign}}
\newcommand{\verify}{\mathsf{Verify}}
\newcommand{\vk}{\mathsf{vk}}
\newcommand{\sk}{\mathsf{sk}}
\newcommand{\bias}{\mathsf{bias}}
\newcommand{\sd}[1]{\mathsf{SD}\left({#1}\right)}
\providecommand{\defeq}[0]{\ensuremath{{\;\vcentcolon=\;}}\xspace}
\newcommand{\abs}[1]{\left\lvert{#1}\right\rvert}
\newcommand{\tuple}[1]{\left\langle{#1}\right\rangle}
\newcommand{\HD}{\mathsf{HD}}
\DeclareMathOperator*{\pr}{Pr}
\newcommand{\prob}[1]{\pr\left[{#1}\right]}
\newcommand{\probsub}[2]{\pr_{{#1}}\left[{#2}\right]}
\DeclareMathOperator*{\E}{E}
\newcommand{\expsub}[2]{\E_{{#1}}\left[{#2}\right]}
\renewcommand{\sim}{\draw}
\author{
Sanjam Garg\thanks{UC Berkeley and NTT Research \href{mailto:sanjamg@berkeley.edu}{sanjamg@berkeley.edu}} \and
Somesh Jha\thanks{University of Wisconsin, Madison \href{mailto:jha@cs.wisc.edu}{jha@cs.wisc.edu}} \and 
Saeed Mahloujifar\thanks{Princeton University \href{mailto:sfar@princeton.edu}{sfar@princeton.edu}} \and 
Mohammad Mahmoody\thanks{University of Virginia \href{mailto:mohammad@virginia.edu}{mohammad@virginia.edu}} \and
Mingyuan Wang\thanks{UC Berkeley \href{mailto:mingyuan@berkeley.edu}{mingyuan@berkeley.edu}}
}
\begin{document}
\maketitle

\begin{abstract} 

    Overparameterized models with millions of parameters have been hugely successful. In this work, we ask:  can the need for large models be, at least in part, due to the \emph{computational} limitations of the learner? Additionally, we ask, is this situation exacerbated for \emph{robust} learning? We show that this indeed could be the case. We show learning tasks for which computationally bounded learners need \emph{significantly more} model parameters than what information-theoretic learners need. Furthermore, we show that even more model parameters could be necessary for robust learning. In particular, for computationally bounded learners, we extend the recent result of Bubeck and Sellke [NeurIPS'2021] which shows that robust models might need more parameters, to the computational regime  and show that bounded learners could provably need an even larger number of parameters. 
     Then, we address the following related question: can we hope to remedy the situation for robust computationally bounded learning by  restricting \emph{adversaries} to also be computationally bounded for sake of obtaining  models with fewer parameters? Here again, we show that this could  be possible. Specifically, building on the work of Garg, Jha, Mahloujifar, and Mahmoody [ALT'2020], we demonstrate a learning task that can be learned  efficiently and robustly against a computationally bounded attacker, while to be robust against an information-theoretic attacker requires the learner to utilize significantly more parameters.

    %
\end{abstract}

\newpage
\tableofcontents
\newpage

\section{Introduction}

In recent years,  deep neural nets with millions or even billions of parameters\footnote{See \url{https://paperswithcode.com/sota/image-classification-on-imagenet}  for the size of the most   successful models for image classification of Imagenet.}~\citep{wortsman2022model,dai2021coatnet,yu2022coca} have emerged as one of the most powerful models for very basic tasks such as image classification. A magic of DNNs is that they generalize without falling into the classical theories mentioned above, and hence they are the subject of an active line of work aiming to understand how DNNs generalize~\cite{arora2019fine,allen2019convergence,allen2019learning,novak2018sensitivity,neyshabur2014search,kawaguchi2017generalization,zhang2021understanding,arora2018stronger} and the various benefits of overparametrized regimes~\cite{xu2018benefits,chang2020provable,arora2018optimization,du2018gradient,lee2019wide}.  In fact, the number of parameters in such models is so large that it is enough to memorize (and fit)  to a large number of \emph{random labels}~\citep{zhang2021understanding}. This leads us to our first main question, in which we investigate the potential cause for having large models: 

\begin{quote}
    \emph{Are there any learning tasks for which computationally bounded learners need to utilize significantly more model parameters than needed information-theoretically?}
\end{quote}

One should be cautious in how to formulate the question above. That is because, many simple (information-theoretically learnable) tasks are believed to be computationally hard to learn (e.g., learning parity with noise~\citep{pietrzak2012cryptography}). In that case, one can interpret this as saying that the efficient learner requires \emph{infinite} number of parameters, as a way of saying that the learning is not possible at all! However, as explained above, we are interested in understanding the reason behind needing a large number of model parameters when learning \emph{is possible}.

\parag{Could robustness also be a cause for overparameterization?} A highly sought-after property of machine learning models is their \emph{robustness} to the so-called adversarial examples~\citep{goodfellow2014explaining}. Here we would like to find a model $f$ such that $f(x) = f(x')$ holds with high probability when $x \gets D$ is an honestly sampled instance and $x' \approx x$ is a \emph{close} instance that is perhaps minimally (yet adversarially) perturbed.\footnote{The closeness here could mean that a human cannot tell the difference between the two images $x,x'$.}  A recent work of~\citet{bubeck2021universal} showed that having large model parameters \emph{could} be due to the robustness of the model. Here, we are asking whether such phenomenon can have a computational variant that perhaps leads to needing \emph{even more} parameters when the robust learner is running in polynomial time.

\begin{quote}
    \emph{Are there any learning tasks for which computationally bounded \textbf{robust} learning comes at the cost of having even more model parameters than   non-robust learning?}
\end{quote}

In fact, it was shown by \citet{bubeck2019adversarial} and Degwaker et al. \citep{degwekar2019computational} that computational limitations of the learner \emph{could} be the reason behind the vulnerability of models to adversarial examples. 
In this work, we ask whether the phenomenon investigated by the prior works is also crucial when the model size is considered.

\parag{Can computational intractability of adversary help?} We ask whether natural restrictions on \emph{the adversary} can help reduce model sizes. In particular, we consider the restriction to the class of polynomially bounded adversaries.
In fact, when it comes to robust learning, the computationally bounded aspect could be imposed both on the learner as well as the \emph{adversary}. 
\begin{quote}
    \emph{Are there any learning tasks for which   {robust} learning can be done with fewer model parameters when dealing with \textbf{polynomial-time} adversaries?}
\end{quote}

Previously, it was shown that indeed working with \emph{computationally bounded} adversaries \emph{can} help achieving robust models \citep{mahloujifar2019can,bubeck2019adversarial,garg2020adversarially}. Hence, we are asking a similar question in the context of model parameters.



\subsection{Our results}
  In summary, we prove that under the computational assumption that one-way functions exist, the answer to all three of our main questions above is positive. Indeed, we show that the computational efficiency of the learner could be the cause of having  overparametarized models. 
  {  Furthermore, the computational efficiency of the adversary could reduce the size of the models.}
  In particular, we prove the following theorem, in which a learning task is parameterized by $\lambda$, a hypothesis class $\cH$ and a class of input distributions $\cD$ (see Section \ref{sec:learn-def} for formal definitions).

\begin{theorem}[Main results, informal] \label{thm:main-inf}
If one-way functions exist, then for  arbitrary polynomials $n<\alpha<\beta$ (e.g., $n=\lambda^{0.1},\alpha=\lambda^{5},\beta=\lambda^{10}$)  over   $\lambda$
the following hold. 
\begin{itemize}
\item \textbf{Part 1:} There is a learning task parameterized by $\lambda$
and a robustness bound (to limit how much an adversary can perturb the inputs)  such that:
\begin{itemize}
    \item The instance  size is $\Theta(n)$. (That is, the length of the input is $\Theta(n)$.)
    \item There is a  robust learner that uses $\Theta(\lambda)$ parameters.
    \item Any \emph{polynomial-time} learner needs $\Theta(\alpha)$ parameters to learn the task.
    \item Any \emph{polynomial-time} learner needs $\Theta(\beta)$ parameters to robustly learn the task.
\end{itemize}
\item \textbf{Part 2:}  There is a learning task parameterized by $\lambda$ and a robustness bound (to limit how much an adversary can perturb the inputs) such that:
\begin{itemize}
    \item The  instance size is $\Theta(n)$.

    \item When the adversary that generates the adversarial examples runs in polynomial time, there is an (efficient) learner that outputs a model with $O(1)$ parameters that robustly predicts the output labels with small error.
    
    \item Against information-theoretic (computationally unbounded) adversaries, no learner  can produce a model with $<\Theta(\alpha)$   parameters that later robustly make the predictions.

\end{itemize}
\end{itemize}
\end{theorem}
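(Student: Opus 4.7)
The plan is to reduce both parts of the theorem to the existence of pseudorandom functions and secure signature schemes, both of which follow from one-way functions. For \textbf{Part 1}, I would construct a learning task whose labels are $F_s(x)$ for a PRF $F_s$ with key $s$ of length $\Theta(\lambda)$, and whose input distribution is uniform over a carefully chosen support of size $\Theta(\alpha)$ so that ``memorizing'' a large fraction of inputs requires $\Omega(\alpha)$ bits. An information-theoretic learner brute-forces the $2^{O(\lambda)}$ possible PRF keys, finds the one consistent with the training sample, and outputs a $\Theta(\lambda)$-parameter hypothesis. This yields the information-theoretic upper bound.

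For the $\Theta(\alpha)$ computational lower bound in the non-robust setting, I would argue by contradiction: if a polynomial-time learner produced an $o(\alpha)$-parameter hypothesis with nontrivial accuracy on the PRF-labeled distribution, then by PRF indistinguishability the same learner would output a hypothesis with essentially the same accuracy when labels are instead drawn from a \emph{truly random} function. In the random-function world, labels of unseen points are uniform and independent of the hypothesis, so a simple counting/union-bound argument rules out $o(\alpha)$-parameter hypotheses achieving the required accuracy on the support of size $\Theta(\alpha)$. To push the bound up to $\Theta(\beta)$ for \emph{robust} computational learning, I would combine the PRF construction with a Bubeck--Sellke-style volume argument: embed each input into a higher-dimensional ambient space and demand robustness over perturbation balls of radius $\eps$, so that the information-theoretic robust-learning lower bound from \citep{bubeck2021universal} contributes an extra multiplicative factor of the ambient dimension on top of $\alpha$, stacking with the PRF-based lower bound.

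For \textbf{Part 2}, I would adapt the template of \citep{garg2020adversarially}: each instance is a pair $(x,\sigma)$ where $\sigma$ is a signature of $x$ under a secret signing key $\sk$, and the label is a simple function $L(x)$ on the underlying input. The honest distribution is supported on inputs of a special restricted form where $L$ is essentially constant, so that an $O(1)$-parameter classifier that always outputs that constant (after a trivial check of the input region) already achieves high accuracy. Against a polynomial-time adversary, existential unforgeability of the signature scheme ensures the adversary cannot produce a perturbed instance with a valid signature outside the restricted honest support, so the trivial classifier stays robust. Against an information-theoretically unbounded adversary, however, signatures can be forged on any perturbation, so the effective input distribution spreads to a Bubeck--Sellke-hard region in which no $o(\alpha)$-parameter model can be robust, yielding the desired separation.

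The hardest step is combining PRF hardness with the Bubeck--Sellke volume argument in the robust case of Part 1, since Bubeck--Sellke is an information-theoretic Lipschitz-class lower bound while PRF security is a distinguishing-based computational statement. Getting these to compose without loss requires carefully choosing $n$, the PRF key length $\lambda$, the ambient embedding dimension, and the robustness radius $\eps$ so that (i) the Bubeck--Sellke lower bound triggers at exactly $\Theta(\beta)$ when labels are truly random, and (ii) the reduction from PRF-labels to random labels transports this lower bound to the pseudorandom setting at the same parameter count. A secondary difficulty is making Part 2 quantitatively tight at $\Theta(\alpha)$ rather than merely super-constant; this likely requires a direct-sum or packing argument across many independent signature-gated subtasks, and care in the definition of robustness so that signature forgery genuinely unlocks the full Bubeck--Sellke hard instance.
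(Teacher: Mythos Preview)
Your non-robust Part~1 sketch (PRF labels, support of size $\alpha$, counting argument in the truly-random world, transport via indistinguishability) is in the right spirit and close to what the paper does---the paper uses a PRG $f_1:\{0,1\}^\lambda\to\{0,1\}^\alpha$ and makes the label an \emph{inner product} $\langle m, f_1(s)|_{\samp(u)}\rangle$, proving the lower bound via average min-entropy plus the leftover hash lemma rather than a counting argument---but the two arguments are morally similar.

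The genuine gap is your use of Bubeck--Sellke for \emph{both} robust lower bounds. That result bounds the parameters of a \emph{Lipschitz} (neural-network) interpolant; it says nothing about arbitrary $p$-bit hypothesis classes, which is what ``$p$ parameters'' means here. Worse, to transport a robust lower bound from the random world to the pseudorandom world you need an \emph{efficient} witness of non-robustness (an efficient attacker) to build the distinguisher, and Bubeck--Sellke gives no such attacker. The paper avoids this entirely: for the $\Theta(\beta)$ bound it introduces a \emph{second} PRG $f_2:\{0,1\}^\lambda\to\{0,1\}^\beta$ and masks a Reed--Solomon encoding of the message with $f_2(s)|_{\samp(u)}$. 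The (explicit, efficient) adversary simply adds random noise to a fraction of the coordinates; in the truly-random-$Q$ world one shows, via a small-bias masking lemma for noisy RS codes, that $\Enc(m)+Q+\rho$ is statistically uniform whenever the model has $o(\beta/\log\lambda)$ bits, so the message---and hence the label---is information-theoretically hidden. This is a randomness-extraction argument, not a volume/Lipschitz argument.

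Your Part~2 plan also has a structural problem beyond Bubeck--Sellke. If the honest label is ``essentially constant'' on the support, then the constant classifier is \emph{always} robustly correct regardless of how powerful the adversary is: perturbing the input does not change the target label, so an unbounded forger gains nothing. The paper instead makes the label a uniformly random bit $b$ and packages it as $\LEnc(b,\sign(\sk,b))$ together with a second ``backup'' channel $[\,b+\langle v, s|_{\samp(u)}\rangle\,]$ tied to a long secret $s\in\{0,1\}^\alpha$. An efficient adversary cannot forge, so list-decoding plus signature verification recovers $b$ with no parameters. An unbounded adversary, however, can place \emph{both} $(0,\sign(\sk,0))$ and $(1,\sign(\sk,1))$ in the decoded list, forcing the classifier onto the backup channel; a min-entropy/leftover-hash argument then shows that any model with fewer than $\alpha/2$ bits leaves $\langle v, s|_{\samp(u)}\rangle$ nearly uniform, so $b$ is unrecoverable. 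That inner-product-with-a-long-secret mechanism is exactly the missing idea in your outline.
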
 

\ifnum\version=\neurips

In Sections \ref{sec:Part1} and \ref{sec:Part2}, we  explain the high-level ideas behind the proofs of the two parts of  Theorem \ref{thm:main-inf} and formalize these statements. 
The full proofs can be found in the supplemental material.

\else

In Section~\ref{sec:overview}, we present the high-level ideas behind the proofs of the two parts of  Theorem~\ref{thm:main-inf}.
The formal constructions and proofs can be found in Section~\ref{app:eff-learn} and Section~\ref{app:eff-adv}, respectively.

\fi

\ifnum\temp=0

\parag{Takeaway.}  Here we put our work in perspective.
As discussed above, prior works have considered the effect of computational efficiency (for both the learner
and the attacker%
) on the robustness of the model. Informally, these works have shown that requiring a learner to be efficient hinders robustness, while requiring an attacker to be efficient helps achieve robustness.
Our work   studies the effect of computational efficiency as well but focuses on the number of parameters of the model. In spirit, we have shown a similar phenomenon. Namely, requiring a learner to be efficient increases the size of the model, while requiring an attacker to be efficient helps reduce the size of the model. Our results can be summarized as follows.
  In the non-robust case, requiring the learner to be efficient increases the size of the model.
In the robust case:
(1) Requiring the learner to be efficient increases the size of the model. This holds for robustness against both efficient and inefficient attackers.
(2) Restricting to only computational efficient attackers reduces the size of the model. This holds for both efficient and inefficient learners.

\paragraph{Limitations, Implications, and Open Question.}
Our work shows that the phenomenon of having larger models due to computational efficiency could provably happen in certain scenarios. It does not imply, however, that this holds for all learning problems.
It is a fascinating open question whether similar phenomena also happen to real-world problems.
We note that this is not particular to our work, but common to most prior works in the theory of learning showing ``separation'' results~\citep{bubeck2019adversarial, degwekar2019computational,mahloujifar2019can,garg2020adversarially}.

Our results provides an explanation on why small but representative classes (e.g. 2 layers neural networks) of functions do not obtain same (robust) accuracy as larger models. This phenomenon that cannot be solely explained based on representation power of the function class might be due to computational limitations of the learning algorithm.

Finally, we note that our theorem demonstrates the separation by using the simplest setting of binary output. One can extend it to more sophisticated settings of any finite output, particularly real numbers with bounded precision. However, our work does not consider real numbers with infinite precision. In such cases, one needs to revisit computational efficiency, as the inputs are infinitely long. 

\fi

\section{Technical overview}
\label{sec:overview}
\ifnum\version=\neurips
\section{Efficient learner vs. information-theoretic learner} 
\else
\subsection{Efficient learner vs. information-theoretic learner}
\fi
\label{sec:Part1}

\ifnum\version=\neurips

In this section, we explain the ideas behind the proof of Part 1 of Theorem \ref{thm:main-inf}. 
We start with the high-level ideas behind our construction. Then, we formally state the construction of the learning task and  its properties through formal statements.
The proofs can be found in the supplemental material.

\else

In this section, we explain the ideas behind the proof of Part 1 of Theorem \ref{thm:main-inf}. 
The formal construction and the theorems of this result are deferred to Section~\ref{app:eff-learn}.

\fi

Consider the problem of learning an inner product function $\mathsf{IP}_P$ defined as $\mathsf{IP}_P(x)=\tuple{x,P}$, where the inner product is done in $\mathbb{GF}_2$.  Our first observation is that to learn $\mathsf{IP}_P$ with a small error where $P$ is uniformly random, the number of model parameters that the learner employs must be as (almost) as large  as the length of $P$. 
Intuitively, one can argue it as follows. Let $Z$ denote the parameters in the model that the learner outputs. Suppose that $Z$ is shorter than $P$. Then $P$ must still be unpredictable given $Z$.%
\footnote{That is, with high probability over the choice of $Z=z$, the conditional distribution $P\vert (Z=z)$ has high min-entropy. More formally, this unpredictability is measured by the average-case min-entropy (Definition~\ref{def:avg-min-ent}).}
By  a standard result in randomness extraction, one can argue that, for a uniform $x$,
$$(Z,x,\tuple{x,P})\approx (Z,x,U_{\zo}).$$
That is, the label $\tuple{x,P}$ looks information-theoretically uniform to the classifier who holds $Z$ and $x$. Therefore, the classifier can only output the correct label with (the trivial) probability $\approx 1/2$.

The conclusion is that learning \emph{all} linear functions need a learner that outputs as many parameters as the function's description is. However, even an \emph{efficient} learner can perform the learning just as well as an information-theoretic one (e.g., using the Gaussian elimination). We now show how to modify this    task   to make a big  difference between an efficient learner and an information-theoretic learner in terms of the number of parameters that they output. 

\parag{Computational perspective.} Now, suppose $P$ is \emph{computationally pseudorandom}~\citep{GM84} rather than being truly random.\footnote{A pseudorandom string is   indistinguishable from  random ones for computationally bounded distinguishers.} That is, let $f:\zo^\lambda\to\zo^\alpha$ be a pseudorandom generator (Definition~\ref{def:prg}) and $P$ is distributed as $f(U_\lambda)$, where $U_\lambda$ denotes the uniform distribution over $\lambda$ bits.
Since (1) an efficient learner cannot distinguish $P\draw f(U_\lambda)$ from $P\draw U_\alpha$ and (2) any learner who tries to learn $\mathsf{IP}_P$ with $P\draw U_\alpha$ needs $\alpha$ parameters, it can be proved that an efficient learner who tries to learn $\mathsf{IP}_P$ with $P\draw f(U_\lambda)$ must also uses $\alpha$ parameters.
However, an information-theoretic learner needs only $\lambda$ parameters to learn $\mathsf{IP}_P$ with $P\draw f(U_\lambda)$ as it can essentially find the seed $s$ such that $P=f(s)$ and output the seed $s$.
To conclude, to learn $\mathsf{IP}_P$ for $P\draw f(U_\lambda)$, an information-theoretic learner only needs a few (i.e., $\lambda$) parameters and an efficient learner needs a lot of (i.e., $\alpha$) parameters. We emphasize that for a pseudorandom generator, its output length $\alpha$ could have an arbitrarily large polynomial dependence on its input length $\lambda$. For instance, it could be $\alpha=\lambda^{10}$.

\parag{Robustness.} We now explain the ideas behind Theorem~\ref{thm:eff-robust-learn'} in which we study the role of model robustness  in the size of the model parameters. We now suppose the instance is sampled according to the distribution $D_Q$ (parametrized by a string $Q$), which is defined by
$$\Enc(U) + Q.$$
Here, $U$ is the uniform distribution (over the right number of bits),  $\Enc(U)$ is an error-correcting encoding of $U$, and the addition is coordinate-wise field addition. Moreover, the label for this instance is the inner product $\tuple{U,P}$ for some vector $P$.
Observe that if the learner learns $Q$, it can always find the correct label on a perturbed instance due to the error-correcting property of $\Enc(U)$.
We argue that, in order to robustly learn this task for a uniformly random $Q$, the number of parameters in the model that the learner outputs must be almost as large as the length of $Q$.
Intuitively, the argument is as follows. Let $Z$ be the model that the learner outputs. Since $|Z|<|Q|$, then $Q$ must still be unpredictable given $Z$. In this case, we prove that
$$(Z,\Enc(U)+Q+\rho) \approx (Z,U').$$
Here, $\rho$ stands for the noise that the adversary adds to the instance. In words, the classifier holding the parameter $Z$ cannot distinguish the perturbed instance $\Enc(U)+Q+\rho$ from a uniformly random string $U'$. Since $U$ is information-theoretically hidden to the classifier, it can only output the correct label $\tuple{U,P}$ with probability $\approx 1/2$.

Next, to explore the difference between an efficient and information-theoretic learner, we consider the case where $Q$ is pseudorandomly distributed, i.e. $Q\draw f(U_\lambda)$ for some $f:\zo^\lambda\to\zo^\beta$.
Again, since (1) an efficient learner cannot distinguish $Q\draw U_\beta$ from $Q\draw f(U_\lambda)$ and (2) any learner needs at least $\approx \abs Q$ parameters to learn the task with $Q\draw U_\beta$, an efficient learner must also need at least $\approx \abs Q$ parameters to learn the task.
On the other hand, an information-theoretic learner could again find the seed $s$ such that $Q=f(s)$ and output the seed $s$ as the parameter.
To conclude, an information-theoretic learner requires few (i.e., $\lambda$) parameters to robustly learn the task and an efficient learner needs a lot of (i.e., $\approx \beta$) parameters to robustly learn the task. (Again, $\beta$ could have an arbitrary polynomial dependence on $\lambda$.)

\parag{Making instances small.}
The learning tasks we considered above suffer from one drawback: the size of the instance is very large, or at least is related to the number of parameters of the model.  Here we ask: is it possible to have a small instance size while an efficient learner still needs to output a very large model? We   answer this question positively. 
In particular, for any $n=\poly(\lambda)$ (e.g., $n=\lambda^{0.1}$), we construct a learning problem where the instance size is $\Theta(n)$ and the efficient learner still needs $\alpha$ (resp. $\beta$) parameters to learn (resp.  robustly learn) the task. Our construction uses the ``sampler'' by~\citet{C:Vadhan03}. Informally, a sampler $\samp$ (see Lemma~\ref{thm:samp}) needs a small seed $u$ and outputs a subset of $\{1,2,\ldots,\alpha\}$ of size $n$. The sampler comes with the guarantee   that if $P$ is a source with high entropy, $P\vert_{\samp(u)}$ also has high enough entropy. To illustrate the usage of the sampler, consider learning this new inner product function $\mathsf{IP}_P$ defined as
$$\mathsf{IP}_P(u,x)=\tuple{x,P\vert_{\samp(u)}}.$$
For uniformly random $P$, one can similarly argue that a model must output at least $\abs{P}$ parameters to learn the task.
Let $Z$ denote the parameters in the model that the learner outputs. If $\abs{Z}<\abs{P}$, then $P$ contains high entropy conditioned on $Z$. By the property of the sampler, it must hold that $P\vert_{\samp(u)}$ contains high entropy conditioned on $Z$ and $u$. Consequently,
$$(Z,u,x,\mathsf{IP}_P(u,x))\approx (Z,u,x,U_{\zo}).$$
Namely, the classifier who sees the parameter $Z$ and the instance $(u,x)$ can only predict the label with probability $\approx 1/2$.
Observe that the size of the instance $(u,x)$ is roughly $n$,%
\footnote{As the seed $u$ is very small.}
while $P$ could have length $\alpha\gg n$.
The use of the sampler in the robust learning case is similar to the non-robust case above. We refer the readers to \ifnum\version=\neurips Appendix~\ref{app:eff-learn} \else Section~\ref{app:eff-learn} \fi for more details.


\ifnum\version=\neurips
To sum up, we present our formal construction and formal theorem statement below. The formal proof is presented in the supplemental material.

\begin{construct}[Parameter-heavy models under efficient learning] \label{const:1'}
Given the parameter $n<\lambda<\alpha<\beta$, we construct the following learning problem.%
\footnote{All the other parameters are implicitly defined by these parameters.}
We rely on the following building blocks. 
\begin{itemize}
        \item Let $f_1\colon\zo^\lambda\to\zo^\alpha$ and $f_2\colon\zo^\lambda\to\zo^\beta$ be PRGs (Definition~\ref{def:prg}).
        \item Let $\Enc$ be a RS encoding with dimension $k$, block length $n$, and rate $R=k/n$. The rate is chosen to be any constant $<1/3$ and $k$ is defined by $R$ and $n$.
    This RS code is over the field $\bbF_{2^\ell}$ for some $\ell=\Theta(\log \lambda)$.
    \item Let $\samp_1\colon\zo^{r_1}\to\binom{\{1,\ldots,\alpha\}}{{k\cdot\ell}}$ and $\samp_2\colon\zo^{r_2}\to \binom{\{1,\ldots,\beta\}}{{n\cdot\ell}}$ be samplers. We obtain these samplers by invoking Lemma~\ref{thm:samp} with sufficiently small $\kappa_1$ and $\kappa_2$ (e.g.,  $\kappa_1=\Theta(1/\log\lambda)$ and sufficiently small constant $\kappa_2$). Note that $\kappa_1,\kappa_2$  define $r_1,r_2$.

    \item  For any binary string $v$, we use $[v]$ for an arbitrary error-correcting encoding of $v$ (over the field $\bbF_{2^\ell} $) such that $[v]$ can correct $>(1-R)n/2$ errors. This can always be done by encoding $v$ using RS code with a suitable  (depending on the dimension of $v$) rate.
    Looking forward, we shall consider an adversary that may perturb $\leq (1-R)n/2$ symbols. Therefore, when a string $v$ is encoded as $[v]$ and the adversary perturb it to be $\widetilde{[v]}$, it will always be error-corrected and decoded back to $v$.


\end{itemize}
 We now construct the following learning task $F_\lambda=(\cX_\lambda,\cY_\lambda,\cD_\lambda,\cH_\lambda)$.
\begin{itemize}
\item $\cX_\lambda$   implicitly defined by the distribution $\cD_\lambda$, and $\cY_\lambda=\zo$.
\item $\cD_\lambda$ consists of distributions $D_s$ for  $s\in\zo^\lambda$, where $D_s$ is 
$$D_s=\Big([u_1],\; [u_2],\; m,\; \Enc(m)+ \left(f_2(s)\big\vert_{\mathsf{samp}_2(u_2)}\right)\Big), \text{ where }$$
\begin{itemize}
    \item $u_1$ and $u_2$ are sampled uniformly at random from $\zo^{r_1}$ and $\zo^{r_2}$, respectively.
    \item $m$ is sampled uniformly at random from $\bbF_{2^\ell}^k$.
    \item $f_2(s)\big\vert_{\mathsf{samp_2}(u_2)}$ is interpet as a vector over $\bbF_{2^\ell}$ and $\Enc(m)+ \left(f_2(s)\big\vert_{\mathsf{samp_2}(u_2)}\right)$ is coordinate-wise addition over $\bbF_{2^\ell}$.
\end{itemize}

\item $\cH_\lambda$ consists of all functions $h_s:\cX\to\cY=\zo$ for all $s\in\zo^\lambda$, where $h_s$ is:
$$   h_s(x) = \left\langle m, \left(f_1(s)\big\vert_{\mathsf{samp}_1(u_1)}\right) \right\rangle  ,$$
and
the inner product  is over $\bbF_2$ and $m$ is interpreted as a string $\in\bbF_2^{k\cdot\ell}$ in the natural way.
\item {\bfseries Adversary.} The entire input $x= ([u_1], [u_2],  m, \Enc(m)+ \left(f_2(s) \vert_{\mathsf{samp_1}(u_1)}\right))$ is interpreted as a vector over $\bbF_{2^\ell}$ and we consider an adversary that may perturb $\leq (1-R)n/2$ symbols. That is,   the adversary has a budget of $(1-R)n/2$ in Hamming distance over $\bbF_{2^\ell}$.
\end{itemize}


\end{construct}

\begin{theorem} \label{thm:1'}
An information-theoretic learner can (robustly) $\eps$-learn the task of Construction \ref{const:1'} with parameter size $2\lambda$ and sample complexity $ \Theta(\frac\lambda\eps )$.
Moreover, an efficient learner can (robustly) $\eps$-learn this task with parameter size $\alpha+\beta$ and sample complexity  $ \Theta(\frac{\alpha+\beta}\eps )$. 
\end{theorem}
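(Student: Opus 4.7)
The plan is to exhibit two explicit learners. A computationally unbounded learner recovers the hidden seed $s^\ast$ by brute-force consistency testing and outputs it as an $O(\lambda)$-bit model. An efficient learner instead recovers the images $A := f_1(s^\ast)$ and $B := f_2(s^\ast)$ directly, without ever inverting the PRGs, and outputs $(A,B)$ as a parameter vector of size $\alpha+\beta$. Both learners classify using a single shared decoding routine, which is what yields robustness against the $(1-R)n/2$-Hamming-budget adversary.

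\emph{Information-theoretic learner.} From $N = \Theta(\lambda/\eps)$ clean samples $(x_i,y_i)$, enumerate all $s'\in\zo^\lambda$ and retain only those passing every sample's natural consistency test: $c^i - f_2(s')\vert_{\samp_2(u_2^i)}$ must RS-decode to $m^i$, and $\langle m^i, f_1(s')\vert_{\samp_1(u_1^i)}\rangle$ must equal $y_i$. For any $s'\neq s^\ast$, at least one of $f_1(s')$ or $f_2(s')$ differs from the truth on some coordinate; by Lemma~\ref{thm:samp} the relevant sampler hits that coordinate with non-negligible probability, and the check then fails with at least constant probability per sample (either because a nonzero offset is added to a codeword, or because $\langle m, \delta\rangle$ is uniform over $\bbF_2$ for nonzero $\delta$ and random $m$). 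A union bound over the $2^\lambda$ candidates isolates $s^\ast$ with probability $1-\eps$.

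\emph{Efficient learner.} For $B$, each sample gives $B\vert_{\samp_2(u_2)} = c - \Enc(m)$, directly revealing $n\ell$ coordinates of $B$. By the sampler guarantee of Lemma~\ref{thm:samp}, $\Theta(\beta/\eps)$ samples collectively recover $B$ on a $(1-\eps/2)$-fraction of its $\beta$ coordinates; set uncovered coordinates to $0$. For $A$, each sample produces the $\bbF_2$-linear equation $\langle m, A\vert_{\samp_1(u_1)}\rangle = y$ with uniform $m$; $\Theta(\alpha/\eps)$ such equations suffice (via Gaussian elimination together with coordinate coverage) to pin down $A$ on an all-but-$\eps/2$ fraction of coordinates. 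The learner then outputs $(A,B)$ as the $(\alpha+\beta)$-bit parameter vector.

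\emph{Robust classification and the main hurdle.} Both learners classify a possibly-perturbed input $\widetilde x = (\widetilde{[u_1]},\widetilde{[u_2]},\widetilde m,\widetilde c)$ identically: decode $\widetilde{[u_i]}\to u_i$ (tolerated because $[\cdot]$ corrects more than $(1-R)n/2$ symbol errors), subtract the known restriction of $B$ from $\widetilde c$ and RS-decode to recover $m$, and finally output $\langle m, A\vert_{\samp_1(u_1)}\rangle$. The adversary's total budget $(1-R)n/2$ is absorbed by each individual block's error-correction, so all three decodings succeed simultaneously and exactly. The main hurdle will be the quantitative calibration of the efficient-learner analysis: combining the sampler's coverage bound with a rank analysis of the random $\bbF_2$-linear system for $A$, so as to convert coordinate-level recovery guarantees into a classification error below $\eps$ under the prescribed sample budget $\Theta((\alpha+\beta)/\eps)$. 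The information-theoretic analysis is technically lighter but still requires checking that the consistency test's constant rejection probability holds even for seeds $s'$ whose PRG outputs differ from $f_1(s^\ast),f_2(s^\ast)$ only on coordinates of relatively low sampler mass.
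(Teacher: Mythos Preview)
Your decoding routine and the choice of model parameters ($s^\ast$ for the unbounded learner, $(A,B)$ for the efficient one) match the paper's. The divergence is in how you argue generalization. The paper does \emph{not} attempt to show that the learner recovers the true seed or the true $(f_1(s^\ast),f_2(s^\ast))$; instead it invokes the finite-class robust-ERM guarantee (Theorem~\ref{thm:learnable}, from~\cite{bubeck2019adversarial}): once you exhibit a finite class $\mathcal{C}$ containing a function with zero robust population risk, any learner that outputs a member of $\mathcal{C}$ with zero robust \emph{empirical} risk will $\eps$-robustly learn with $O(\log|\mathcal{C}|/\eps)$ samples. The paper takes $\mathcal{C}=\{f_{s,s'}:s,s'\in\zo^\lambda\}$ (size $2^{2\lambda}$) for the unbounded learner and $\mathcal{C}=\{f_{P,Q}:P\in\zo^\alpha,Q\in\zo^\beta\}$ (size $2^{\alpha+\beta}$) for the efficient one, noting that a consistent $(P,Q)$ can be found by linear algebra. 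No further analysis is needed.

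Your direct analyses, by contrast, have real gaps. For the information-theoretic learner, the ``constant rejection probability per sample'' claim is false: if $s'\neq s^\ast$ has $f_1(s')$ differing from $f_1(s^\ast)$ in a single coordinate out of $\alpha$, then $\samp_1(u_1)$ hits that coordinate with probability only $O(k\ell/\alpha)$, which is inverse-polynomial in $\lambda$. With $\Theta(\lambda/\eps)$ samples the union bound over $2^\lambda$ seeds does not go through, so exact isolation of $s^\ast$ is impossible at this sample complexity. (You flag this at the end, but it is not a calibration issue---it is fatal for the isolation strategy.) For the efficient learner in the robust regime, you assert that ``all three decodings succeed simultaneously and exactly,'' yet your own $B$ is set to $0$ on an $\eps/2$-fraction of coordinates. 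When the adversary spends its entire $(1-R)n/2$ budget on the $\Enc(m)$ block, the RS code is already at its unique-decoding radius; any additional symbol errors arising from the mismatch $B|_{\samp_2(u_2)}\neq f_2(s^\ast)|_{\samp_2(u_2)}$ on a fresh $u_2$ push decoding beyond capacity. Both problems disappear if you stop trying to recover the truth and instead invoke Theorem~\ref{thm:learnable}: any $(s,s')$ (resp.\ $(P,Q)$) that robustly fits the \emph{training} samples suffices, and the log-cardinality of the class gives the sample complexity directly.
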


\begin{theorem}\label{thm:eff-learn'}
Any efficient learner that outputs models with $\leq \alpha/2$ parameters cannot $\eps$-learn $F_\lambda$ of Construction \ref{const:1'} for $\eps<1/3$.
\end{theorem}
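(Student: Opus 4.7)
My plan is to derive a contradiction with the supposed efficient $\eps$-learner by reducing the pseudorandom game to an information-theoretic game via PRG security, and then combining the extracting property of the sampler with the leftover hash lemma to show that from the model's point of view the true label is essentially a fresh uniform bit. Suppose for contradiction that an efficient learner outputs a model $Z$ with $\abs{Z}\leq \alpha/2$ and $\eps$-learns the task for some $\eps<1/3$, so on a fresh test instance its model predicts the correct label $\tuple{m,\, f_1(s)\vert_{\mathsf{samp}_1(u_1)}}$ with probability at least $1-\eps>2/3$ over $s\draw \zo^\lambda$ and the sample randomness. The first step is to move to a hybrid in which $(f_1(s),f_2(s))$ is replaced by an independent uniform pair $(P,P')\draw U_\alpha\times U_\beta$; since the entire learn-then-test experiment runs in polynomial time, the (joint) pseudorandomness of the PRG outputs implies that the model's accuracy in the hybrid is still at least $2/3-\negl(\lambda)$.

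In the hybrid, $P$ is fresh uniform and $\abs{Z}\leq \alpha/2$, so the chain rule for average min-entropy yields $\widetilde{H}_\infty(P\mid Z,u_1,u_2,m,P')\geq \alpha/2$: the auxiliary variables $u_1,u_2,m,P'$ are independent of $P$ and can be fixed before accounting for the $\alpha/2$-bit leak through $Z$. Applying the extracting property of the sampler (Lemma~\ref{thm:samp}) with $\kappa_1$ sufficiently small gives
\[
\widetilde{H}_\infty\!\bigl(P\vert_{\mathsf{samp}_1(u_1)}\;\big|\;Z,u_1,u_2,m,P'\bigr)\;\geq\;\bigl(\tfrac12-o(1)\bigr)\,k\ell\;=\;\Omega(k\ell).
\]
The family $\{v\mapsto \tuple{v,m}:m\in\bbF_2^{k\ell}\}$ is universal, and the test-time $m$ is uniform and independent of $P$. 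Thus the leftover hash lemma applied with seed $m$ and source $P\vert_{\mathsf{samp}_1(u_1)}$, conditioned on the remainder of the model's view $V=(Z,[u_1],[u_2],\Enc(m)+P'\vert_{\mathsf{samp}_2(u_2)})$, shows that $(V,m,\tuple{m,P\vert_{\mathsf{samp}_1(u_1)}})$ is $2^{-\Omega(k\ell)}$-close to $(V,m,U_{\zo})$. Because the model's prediction is a deterministic function of $(V,m)$, it matches the true label with probability at most $1/2+2^{-\Omega(k\ell)}$, contradicting the $2/3-\negl$ bound from the PRG hybrid for all sufficiently large $\lambda$.

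The main obstacle is the conditioning: the test input already contains $m$ and the label is linear in $m$, so one cannot directly argue that the label is independent of the input. The crucial move is to view $m$ as the \emph{seed} of the universal hash family applied to the high-entropy source $P\vert_{\mathsf{samp}_1(u_1)}$, which is independent of $m$; LHL then yields near-uniformity of the hashed bit even after $m$ is revealed. Additional bookkeeping is needed to carry the entropy of $P$ through the PRG hybrid (where $Z$ jointly depends on $P$ and $P'$ via the training samples) and through the sampler — fixing $(P',u_1,u_2,m)$ first and then bounding $\widetilde{H}_\infty(P\mid Z)$ is the cleanest way to isolate the $\alpha/2$-bit leak, and the sampler's extraction guarantee survives this conditioning because it acts only on coordinates of $P$.
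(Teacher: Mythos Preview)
Your approach matches the paper's: a PRG hybrid to uniform $P$, the chain rule giving $\widetilde H_\infty(P\mid Z)\geq \alpha/2$, the sampler pushing this entropy down to $P\vert_{\samp_1(u_1)}$, and the inner-product extractor (your LHL with seed $m$ is exactly the paper's Theorem~\ref{thm:ip}) making the label near-uniform given the model's view. Two minor divergences are worth flagging. First, the paper fixes the distribution $D_{s'}$ and replaces only $f_1(s)$ by uniform $P$; your extra hybrid on $f_2$ is unnecessary for this theorem and, as written, would require \emph{joint} pseudorandomness of $(f_1(s),f_2(s))$ on a shared seed, which is not what Construction~\ref{const:1'} assumes. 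Second, Lemma~\ref{thm:samp} is stated for worst-case min-entropy sources, so the paper inserts an explicit good-set step (with probability $1-2^{-\alpha/4}$ over $z$ one has $H_\infty(P\mid Z{=}z)\geq \alpha/4$) before invoking the sampler; your average-min-entropy phrasing needs this bridge, and in particular $u_1$ must remain random when you apply the sampler lemma rather than be ``fixed first'' as your last paragraph suggests.
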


\begin{theorem}\label{thm:eff-robust-learn'}
There exists some constant $c$ such that the following holds. In the presence of an adversary that may perturb $(1-R)n/2$ symbols, any efficient learner for the task of Construction \ref{const:1'} that outputs a model with $c\cdot \beta /\log\lambda$ parameters cannot $\eps$-robustly learn $F_\lambda$ for $\eps<1/3$.
\end{theorem}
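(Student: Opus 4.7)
My plan is a two-step argument: a PRG hybrid that reduces the claim to an information-theoretic task with uniform pads, and an entropic argument that then shows the label is unpredictable from the classifier's post-attack view.  Suppose for contradiction that some efficient learner $\Learn$ produces a model of size $|Z|\leq c\beta/\log\lambda$ that $\eps$-robustly learns the task for $\eps<1/3$.  By the joint pseudorandomness of $(f_1,f_2)$ (a PRG from $\zo^\lambda$ to $\zo^{\alpha+\beta}$), I replace $(f_1(s),f_2(s))$ by an independent uniform pair $(P,Q)\in \zo^\alpha\times\zo^\beta$; because $\Learn$, the attacker chosen below, and the labeling function are all polynomial time, this swap shifts the robust accuracy by only $\negl(\lambda)$, so it suffices to refute robust learning in the information-theoretic task.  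Define the attacker $\cA$ to take $x=([u_1],[u_2],m,c)$ and zero out only the $m$ block; since $R<1/3$, the $m$-block has $k=Rn<(1-R)n/2$ symbols, comfortably within the Hamming budget.  The encodings $[u_i]$ survive intact, so the classifier recovers $u_1,u_2$ exactly, and its post-attack effective view is $(Z,u_1,u_2,c)$ with $c=\Enc(m)+Q\vert_{\samp_2(u_2)}$.

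The heart of the argument is to show that the label $\langle m,P\vert_{\samp_1(u_1)}\rangle$ is $o(1)$-close to a uniform bit given this view.  Since $(P,Q)$ are uniform and independent, the standard leakage bound gives $\tilde H_\infty(Q\mid Z)\geq \beta-|Z|$ and $\tilde H_\infty(P\mid Z)\geq \alpha-|Z|$; for $|Z|\leq c\beta/\log\lambda$ and large $\lambda$, the relative deficit sits well inside the sampler thresholds of Construction~\ref{const:1'}.  Applying Lemma~\ref{thm:samp} to $\samp_1$ and $\samp_2$ then yields, for typical seeds, $\tilde H_\infty(V\mid Z,u_2)\geq (1-\kappa_2')\,n\ell$ for $V=Q\vert_{\samp_2(u_2)}$ and $\tilde H_\infty(y\mid Z,u_1)\geq (1-\kappa_1')\,k\ell$ for $y=P\vert_{\samp_1(u_1)}$, with $\kappa_1',\kappa_2'$ arbitrarily small.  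Next, $m$ is uniform over $\bbF_{2^\ell}^k$ and independent of $(P,Q,Z)$, and $(m,V)\mapsto(m,c=\Enc(m)+V)$ is a bijection on $\bbF_{2^\ell}^k\times\bbF_{2^\ell}^n$, so subtracting $|c|=n\ell$ from the joint conditional min-entropy gives $\tilde H_\infty(m\mid Z,u_1,u_2,c)\geq k\ell-\kappa_2' n\ell\geq k\ell/2$ once $\kappa_2'<R/2$.  Because $P,Q$ are marginally independent and the test sample is fresh, the sources $m$ (a function of $Q$-side quantities and the test sample) and $y$ (a function of $P$-side quantities and the test sample) are conditionally independent given the classifier's view.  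Chor--Goldreich inner-product two-source extraction applied to $(m,y)\in\bbF_2^{k\ell}\times\bbF_2^{k\ell}$, whose conditional min-entropies sum to roughly $\bigl(3/2-\kappa_1'\bigr)k\ell\gg k\ell+O(\log\lambda)$, then shows $\langle m,y\rangle$ is $\negl(\lambda)$-close to a uniform bit given the view.  Hence no classifier's prediction agrees with the label with probability exceeding $1/2+o(1)$, contradicting $\eps<1/3$.

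The most delicate point is the entropic bookkeeping: all statements must be carried out in the average-case min-entropy formalism (rather than Shannon entropy), and the statistical slacks coming from the sampler's entropy preservation, the coset-based extraction of $m$, and the two-source inner-product extractor must accumulate to well below the $1/6$ threshold that $\eps<1/3$ demands.  A subtle issue is that a short $Z$ may still correlate the posteriors of $P$ and $Q$; the conditional independence used in the two-source step is justified by $P,Q$ being marginally independent and by the test sample being drawn fresh after $Z$ is fixed, so that the correlation is averaged out.  The $\log\lambda$ factor in the parameter bound is precisely the slack needed to keep $|Z|/\beta$ deep inside the sampler's operating regime and to make all accumulated errors $\negl(\lambda)$.
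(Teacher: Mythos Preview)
Your two-step skeleton (PRG hybrid, then an information-theoretic lower bound against the uniform-pad task) matches the paper, but the information-theoretic half diverges in a way that leaves a genuine gap.

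Your attacker only zeroes the $m$-block, leaving $c=\Enc(m)+Q\vert_{\samp_2(u_2)}$ intact, and you then try to extract the label bit via Chor--Goldreich two-source extraction on the pair $(m,y)$ with $y=P\vert_{\samp_1(u_1)}$. That extractor requires $m$ and $y$ to be \emph{independent} conditioned on the classifier's view $(Z,u_1,u_2,c)$. You flag this yourself but wave it away by appealing to marginal independence of $P,Q$ and freshness of the test sample; this does not give conditional independence once $Z$ and $c$ are fixed. A learner with $c\beta/\log\lambda\gg\alpha$ parameters can certainly store $\alpha$ bits of $P\oplus Q$-type correlation in $Z$; then, given $(Z,c)$, the posterior of $m$ (which pins down $V=c-\Enc(m)$ and hence constrains $Q$) becomes entangled with the posterior of $P$ and hence of $y$. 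In the one-bit caricature $Z=P\oplus Q$, $c=m\oplus Q$, label $=m\cdot P$, one has $m\oplus P=Z\oplus c$ determined by the view, and the label equals $0$ whenever $Z\oplus c=1$; the ``averaging out'' you invoke does not rescue this. Since the theorem must rule out every efficient learner in the parameter budget, and such learners can induce exactly this kind of correlation, the independence premise of your two-source step fails.

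The paper sidesteps the issue entirely by using a stronger adversary and a different extraction mechanism. It only swaps $f_2(s)$ for a uniform $Q$ (keeping the hypothesis $h_{s'}$ fixed, so $P$ never enters the hybrid), and its adversary spends the full budget $(1-R)n/2$: it overwrites the $m$-block with fresh randomness (cost $Rn$) \emph{and} injects random noise on a uniformly random size-$(1-3R)n/2$ subset of the encoding block. Theorem~\ref{thm:noisy-code} shows that this noisy RS codeword is $(1-R)^{\Theta(n)}$-small-biased; the small-biased masking lemma (Theorem~\ref{thm:masking}), applied with the high-min-entropy mask $Q\vert_{\samp_2(u_2)}$, then makes the encoding block statistically uniform. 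The perturbed instance is therefore $\negl(\lambda)$-close to $([u_1],[u_2],U_{k\ell},U_{n\ell})$, which is \emph{independent of $m$}. Now $m$ is genuinely uniform given the view, the label is a fixed (nonzero) linear functional of a uniform source, and no two-source step---hence no independence assumption---is needed. The $\log\lambda$ slack in the parameter bound is exactly what the masking lemma consumes: the entropy deficit of $Q\vert_{\samp_2(u_2)}$ must be $O(n)$ so that $2^{O(n)}\cdot(1-R)^{\Theta(n)}=\negl(\lambda)$, which forces $|Z|/\beta=O(1/\log\lambda)$ through the sampler (recall $\ell=\Theta(\log\lambda)$).
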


\else
\fi
\ifnum\version=\neurips
\section{Efficient adversary vs. information-theoretic adversary} 
\else
\subsection{Efficient adversary vs. information-theoretic adversary} 
\fi
\label{sec:Part2}

\ifnum\version=\neurips

In this section, we explain the ideas behind the proof of Part 2 of Theorem \ref{thm:main-inf}. At the end of this section, we formally state the construction of the learning task and state its properties through formal statements. The proofs can be found in the supplemental material.

\else

In this section, we explain the ideas behind the proof of Part 2 of Theorem \ref{thm:main-inf}. 
The formal construction and the theorems of this result is deferred to Section~\ref{app:eff-adv}.

\fi

We now explore whether the computational efficiency of the adversary could affect the number of parameters required to robustly learn a task.
\cite{GJMM20} considered the difference between an efficient adversary and an information-theoretic one in terms of their running time.
We first recall their construction.
The learning instance is sampled from the distribution
$$[\vk],b,\sign(\sk,b).$$
Here, $(\vk,\sk)$ is the verification key and signing key pair of a signature scheme (Definition~\ref{def:sign});%
\footnote{
Every instance samples a fresh pair of verification key and signing key $(\vk,\sk)$.
}
$[\vk]$ is an error-correcting encoding of $\vk$, which ensures that $[\vk]$ can always be recovered after perturbation by the adversary; $b$ is a uniform random bit and the label of the instance is simply $b$.

The signature scheme ensures that an efficient adversary cannot forge a valid signature and, hence, any efficient adversary that perturbs $(b,\sign(\sk,b))$ will result in an invalid message/signature pair. 
A classifier could then detect such perturbations and output a special symbol $\bot$ indicating that perturbation is detected.
On the other hand, an information-theoretic adversary could launch a successful attack by forging a valid signature $(1-b,\sign(\sk,1-b))$.%
Therefore, it could perturb the instance to be $[\vk],1- b ,\sign(\sk,1-b)$ and, hence, flipping the label of the output.

\parag{First idea.} We want to construct a learning problem such that the learner needs few parameters against an efficient adversary, but a lot of parameters against an information-theoretic adversary. Our first idea is to add another way of recovering $b$ in the above learning problem. Consider the learning problem where the instance is sampled from distribution $D_P$ defined as
$$[\vk],b,\sign(\sk,b),[b+\tuple{u,P}],[u].$$
Here, $u$ is uniformly distributed.
Observe that if the learner learns $P$, it can   recover $b$ correctly from $[b+\tuple{u,P}]$ and $[u]$ by  error-correction decoding.
However, if the number of parameters in the model that the learner outputs have $<\abs{P}$ parameters, then $[b+\tuple{u,P}]$ and $[u]$ information-theoretically hides $b$. Again, this is because $P$ is unpredictable given the parameter $Z$%
\footnote{That is, with high probability over the choice of $Z=z$, the conditional distribution $P\vert (Z=z)$ has high min-entropy. More formally, this unpredictability is measured by the average-case min-entropy (Definition~\ref{def:avg-min-ent}).}
and, hence, 
$$Z,u,\tuple{u,P}\approx Z,u,U_{\zo}.$$
Consequently, an information-theoretic adversary could again launch the attack that replace $b,\sign(\sk,b)$ with $1-b,\sign(\sk,1-b)$ and successfully flipping the label.
Therefore, a learner must employ $\abs{P}$ parameters to robustly learn the task against information-theoretic adversaries.

\parag{Second idea.} In the above learning task, a learner employing $\abs P$ parameters can always recover the correct label against information-theoretic adversaries. However, against efficient adversaries, a learner with fewer parameters may not always recover the correct label but will sometimes output the special symbol $\bot$ indicating that tampering is detected. Could we twist it to ensure that a learner with fewer parameters can also always recover the correct label against efficient adversaries? We positively answer this question by using list-decodable code (Definition~\ref{def:list-code}). Intuitively, list-decoding ensures that given an erroneous codeword, the decoding algorithm will find the list of all messages whose encoding is close enough to the erroneous codeword.

Our new learning task has instances drawing from the distribution $D_P$ defined as
$$[\vk],\LEnc(b,\sign(\sk,b)),[b+\tuple{u,P}],[u].$$
Here, $\LEnc$ is the encoding algorithm of the list-decoding code.
The main idea is that: after the perturbation on $\LEnc(b,\sign(\sk,b))$, the original message/signature pair $b,\sign(\sk,b)$ will always appear in the list output by the decoding algorithm.
Now, against an efficient adversary, $b,\sign(\sk,b)$ must be the only valid message/signature pair in the list. Otherwise, this adversary breaks the unforgeability of the signature scheme.
Against an information-theoretic adversary, however, it could introduce $(1-b,\sign(\sk,1-b))$ into the list recovered by the list-decoding algorithm.
Consequently, the learner cannot tell if the correct label is $b$ or $1-b$.
Consequently, for this learning task, against information-theoretic adversaries, one still needs $\abs{P}$ parameters. Against efficient adversaries, one needs only a few parameters and can always recover the correct label.

\parag{Making instances small.} Again, we have this unsatisfying feature that the instance has the same size as the model. We resolve this issue using the sampler in a similar way. We refer the readers to \ifnum\version=\neurips Appendix~\ref{app:eff-adv} \else Section~\ref{app:eff-adv} \fi for more details.

\ifnum\version=\neurips
We present our formal construction and theorem statement below. The formal proof is  in Appendix~\ref{app:eff-adv}.

\begin{construct}[Learning task for bounded/unbounded attackers] \label{const:2'}
Given the parameters $n<\lambda<\alpha$, we construct the following learning problem.%
\footnote{All the other parameters are implicitly defined by these parameters.}
We use  the following tools. 
\begin{itemize}
        \item $(\gen,\sign,\verify)$ be a signature scheme (see Definition~\ref{def:sign}).
        \item Let $\LEnc$ be a RS encoding with dimension $k$, block length $n$, and rate $R=k/n$. We pick the rate $R$ to be any constant $<1/4$ and $k$ is defined by $R$ and $n$.
    This RS code is over the field $\bbF_{2^\ell}$ for some $\ell=\Theta(\log \lambda)$.

    \item Let $\samp \colon\zo^{r}\to\binom{\{1,\ldots,\alpha\} }{n}$  be samplers. (We obtain these samplers by invoking Lemma~\ref{thm:samp} with sufficiently small $\kappa_1$ and $\kappa_2$. For instance, setting $\kappa_1=\Theta(1/\log\lambda)$ and $\kappa_2$ to be any small constant suffices.)

    \item  For any binary string $v$, we use $[v]$ for an arbitrary error-correcting encoding of $v$ (over the field $\bbF_{2^\ell} $) such that $[v]$ can correct $>(1-\sqrt{R})n$ errors. This can always be done by encoding $v$ using RS code with a suitable  (depending on the dimension of $v$) rate.
    Looking forward, we shall consider an adversary that may perturb $\leq (1-\sqrt R)n$ symbols. Therefore, when a string $v$ is encoded as $[v]$ and the adversary perturbs it to be $\widetilde{[v]}$, it will always be error-corrected and decoded back to $v$.
    

\end{itemize}

 We now construct the following learning task $F_\lambda=(\cX_\lambda,\cY_\lambda,\cD_\lambda,\cH_\lambda)$.
\begin{itemize}
    \item $\cX_\lambda$ is $\zo^N$ for some $N$ that is implicitly defined by $\cD_\lambda$, and $\cY_\lambda$ is $\zo$.
    \item The distribution $\cD_\lambda$ consists of all distribution $D_s$ for $s\in\zo^\alpha$
$$D_s =  \Big([u],[v],\; [\vk],\; \LEnc(b,\sign(\sk,b)),\; \left[b+\tuple{v,s\vert_{\samp(u)}}\right]\Big), \text{ such that }$$
\begin{itemize}
    \item $u$ are sampled uniformly  from $\zo^r$ and $v$ is sampled uniformly  from $\zo^n$.

    \item $(\vk,\sk)\draw\gen(1^\lambda)$ are sampled from the signature scheme.
    
    \item $b$ is sampled uniformly at random from $\zo$. $(b,\sign(\sk,b))$ is intepreted as a vector in $\bbF_{2^\ell}^k$ in the natural way.%

\end{itemize}
    \item  $h_\lambda$ consists of one single function $h$. On input $x=([u],[v],\LEnc(b,\sign(\sk,b),[b+\tuple{v,s\vert_\samp(u)}])$, $h(x)$ simply decodes $\LEnc(b,\sign(\sk,b))$ and output $b$.
    \item {\bfseries Adversary.} The entire input $ ([u],[v], [vk],  \LEnc(b,\sign(\sk,b)),  [b+\langle v,s\vert_{\samp(u)}\rangle ])$ is interpreted as a vector over $\bbF_{2^\ell}$ and we consider an adversary that may perturb $\leq (1-\sqrt R)n$ symbols. That is, the   adversary has a budget of $(1-\sqrt R)n$ for Hamming distance over $\bbF_{2^\ell}$.
\end{itemize}


\end{construct}

\begin{theorem} \label{thm:4-1'}
For the learning task of Construction \ref{const:2'}, there is an efficient learner (with 0 sample complexity) that outputs a model with no parameter and $\negl(\lambda)$-robustly learns $F_\lambda$ against computationally-bounded adversaries of budget $(1-\sqrt R)n$.
\end{theorem}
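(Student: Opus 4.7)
The plan is to exhibit an efficient learner $\Learn$ that on zero samples outputs the unique $h\in\cH_\lambda$ (which by construction requires no parameters to specify), and then to show that a careful polynomial-time implementation of $h$ recovers the label $b$ with overwhelming probability against any polynomial-time adversary of Hamming budget $(1-\sqrt R)n$.

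I would instantiate $h$ as follows. On input $\tilde x=(\widetilde{[u]},\widetilde{[v]},\widetilde{[\vk]},\widetilde c,\widetilde{[r]})$, first run the error-corrector for $[\cdot]$ on $\widetilde{[\vk]}$ to recover $\vk$; this succeeds because $[\cdot]$ corrects $>(1-\sqrt R)n$ errors while the adversary's total budget is only $(1-\sqrt R)n$. Next, run a Guruswami--Sudan list-decoder for the Reed--Solomon code $\LEnc$ on $\widetilde c$ up to radius $(1-\sqrt R)n$; since the rate satisfies $R<1/4$, this radius sits within the Johnson bound, so the output list $L$ has polynomial size, and because the honest codeword $\LEnc(b,\sign(\sk,b))$ is at Hamming distance $\leq(1-\sqrt R)n$ from $\widetilde c$, the pair $(b,\sign(\sk,b))$ always lies in $L$. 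Finally, scan $L$ for pairs $(b',\sigma')$ satisfying $\verify(\vk,b',\sigma')=1$ and output the bit of the unique such pair (defaulting arbitrarily otherwise).

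The only way $h(\tilde x)\neq b$ is for $L$ to contain a second verifying pair $(b',\sigma')$ with $b'\neq b$. I would rule this out via a standard reduction to the unforgeability of the signature scheme (Definition~\ref{def:sign}): given an efficient adversary $\cA$ that induces this event with non-negligible probability, construct a forger $\mathcal{B}$ that takes $\vk$ from the signature challenger, samples $s,u,v,b$ itself, queries its signing oracle once on $b$ to obtain $\sigma$, assembles an instance exactly from $D_s$, runs $\cA$ to produce $\tilde x$, executes the list-decode-and-verify procedure above, and outputs any verifying $(b',\sigma')\in L$ with $b'\neq b$. Such a $\sigma'$ is a forgery on the fresh message $b'=1-b$ that was never queried to the signing oracle, contradicting unforgeability and forcing the non-negligible error assumption to fail.

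The main obstacle is verifying that the common budget $(1-\sqrt R)n$ simultaneously supports (i) unique decoding of $[\vk]$, (ii) Johnson-radius list-decoding of $\LEnc$ with polynomial list size, and (iii) a perfectly faithful simulation of $D_s$ by $\mathcal{B}$ from only the public $\vk$ and a single signing query. All three pieces fit together because of the parameter choices in Construction~\ref{const:2'} and the fact that $h$ never uses $\sk$ or $s$; zero sample complexity and zero-parameter output then follow immediately from $|\cH_\lambda|=1$ and the instance-independent nature of $h$.
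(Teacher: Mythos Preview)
Your proposal is correct and follows essentially the same approach as the paper: error-correct to recover $\vk$, list-decode $\widetilde c$ via Guruswami--Sudan at radius $(1-\sqrt R)n$, verify the list entries against $\vk$, and reduce the appearance of a second verifying pair with bit $1-b$ to a signature forgery by having the reduction embed the challenger's $\vk$ into a freshly simulated test instance. The only cosmetic differences are that the paper phrases the reduction against the \emph{weak} unforgeability game of Definition~\ref{def:sign} (one fixed message and its signature, no oracle) rather than a single-query signing oracle, and it handles the ``multiple valid pairs'' case by picking one arbitrarily rather than defaulting; neither affects the argument.
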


\begin{theorem} \label{thm:4-2'}
For computationally unbounded adversaries, any information-theoretic learner with $\alpha/2$ parameters cannot $\eps$-robustly learn $F_\lambda$ for $\eps<1/3$ for the learning task of Construction \ref{const:2'}.
\end{theorem}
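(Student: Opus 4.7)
The plan is to exhibit an unbounded adversary whose single perturbation strategy simultaneously neutralizes the signature channel (via forgery) and reduces label recovery to predicting a hidden inner–product bit, which a learner limited to $\alpha/2$ output parameters provably cannot do. On an honest instance
\[
([u],[v],[\vk],\LEnc(b,\sign(\sk,b)),[b+\tuple{v,s\vert_{\samp(u)}}]),
\]
the unbounded $\cA$ brute-force computes a signature $\sigma'=\sign(\sk,1-b)$, which exists and verifies under $\vk$. It then replaces $\LEnc(b,\sign(\sk,b))$ by a word $y\in\bbF_{2^\ell}^n$ chosen at Hamming distance at most $(1-\sqrt R)n$ from both $\LEnc(b,\sign(\sk,b))$ and $\LEnc(1-b,\sigma')$. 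Such a ``midpoint'' $y$ exists because distinct codewords are at Hamming distance at most $n$ and the construction takes $R<1/4$, so $2(1-\sqrt R)n\geq n$. All remaining blocks are left untouched, so $\cA$ perturbs at most $(1-\sqrt R)n$ symbols in total and every $[\cdot]$-encoded block is decoded exactly by any classifier.

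Any list-decoder the classifier uses must be robust against arbitrary legitimate perturbations of up to $(1-\sqrt R)n$ symbols, which forces it to recover both $(b,\sign(\sk,b))$ and $(1-b,\sigma')$ from $y$; both pairs verify, so the signature channel conveys zero bits about $b$. The sole remaining discriminator is the inner product $\tuple{v,s\vert_{\samp(u)}}$ recovered from the last $[\cdot]$-block. Let $Z$ be the $\leq\alpha/2$ bits of parameters output by the learner on i.i.d.\ samples from $D_s$. Since $s$ is uniform over $\zo^\alpha$, the chain rule for average-case min-entropy gives $\tilde H_\infty(s\mid Z)\geq \alpha-|Z|\geq \alpha/2$. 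Invoking Lemma~\ref{thm:samp}, with probability $1-\negl(\lambda)$ over the fresh test seed $u$, the restriction $s\vert_{\samp(u)}$ retains min-entropy $\Omega(n)$ conditioned on $(Z,u)$, and since $v\in\zo^n$ is uniform and independent, the inner-product extractor (leftover hash) yields
\[
(Z,u,v,\tuple{v,s\vert_{\samp(u)}})\approx_{\negl(\lambda)} (Z,u,v,U_{\zo}).
\]
Hence, after $\cA$'s attack the classifier predicts the correct label with probability at most $1/2+\negl(\lambda)$, contradicting $\eps$-robust learning for $\eps<1/3$.

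The main obstacle is the conditioning: $Z$ depends not only on $s$ but on the entire training transcript, whose labels leak inner-product bits of $s$. The cleanest fix is to observe that $Z$ is the only functional of the training data carried into the test phase, so the chain-rule bound $\tilde H_\infty(s\mid Z)\geq \alpha-|Z|$ already absorbs every bit of leakage, while the fresh randomness $(u,v,\vk,\sk,b)$ of the test instance is independent of $(Z,s)$. A secondary point is ensuring the slack parameters built into Lemma~\ref{thm:samp} tolerate entropy rate $\tfrac{1}{2}$ rather than the near-$1$ rate used in the Part~1 argument; this is a parametric choice inside the lemma and does not alter the construction.
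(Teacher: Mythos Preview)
Your overall architecture matches the paper's: an unbounded forger neutralizes the signature block, and the chain rule plus sampler lemma plus inner-product extraction shows the last block hides $b$ from any model described by $\leq\alpha/2$ bits. That half is essentially correct (the paper additionally makes the Markov step from $\widetilde{H}_\infty(s\mid Z)\geq\alpha/2$ to a worst-case bound $H_\infty(s\mid Z=z)>\alpha/4$ explicit before invoking the sampler on a fixed source, which you should do too).

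The real gap is your argument that the perturbed signature block is useless. You write that ``any list-decoder the classifier uses'' must return both valid pairs, hence the channel ``conveys zero bits about $b$.'' But the classifier $f$ is an \emph{arbitrary} function of the perturbed instance and $Z$; nothing forces it to list-decode, and the fact that both codewords lie in the radius-$(1-\sqrt R)n$ ball around $y$ does not by itself imply $y$ carries no information about $b$. What must be shown is that the \emph{distribution} of $y$ (given $\vk,\sk$) is independent of $b$. Your construction does not guarantee this: if the adversary's rule for building the midpoint depends on which codeword was original---e.g., copy the first half of positions from $c_b$ and the second half from $c_{1-b}$---then $y$ leaks $b$ outright even though both codewords are within decoding radius. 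The paper handles this by having the adversary compute both $c_0=\LEnc(0,\sign(\sk,0))$ and $c_1=\LEnc(1,\sign(\sk,1))$ and then overwrite a uniformly random size-$n/2$ subset of positions with the other codeword; since a random $n/2$-subset and its complement are identically distributed, the resulting block is symmetric in $(c_0,c_1)$ and hence independent of $b$, so the entire perturbed instance is $\negl(\lambda)$-close to a $b$-free distribution. Your deterministic-midpoint idea is salvageable---choose $y$ as a canonical function of the unordered pair $\{c_0,c_1\}$, both computable by the unbounded adversary---but the justification you wrote (about what a list-decoder ``must'' do) does not establish the needed independence.
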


\else

\fi

\section{Preliminaries}


For a distribution $D$, $x\draw D$ denotes that $x$ is sampled according to $D$.
We use $U_\cS$ for the uniform distribution over $\mathcal{S}$, and we use $U_n$ to represent $U_{\zo^n}$. For a set $\cS$, $s\draw \cS$ means $s \draw \cU_\cS$.
For any two distributions $X$ and $Y$ over a finite universe $\Omega$, their {\em statistical distance}  is defined as
$$\sd{X,Y}\defeq \frac12\cdot\sum_{\omega\in\Omega}\abs{\prob{X=\omega}-\prob{Y=\omega}}.$$
We sometimes write $X\approx_\eps Y$ to denote $\sd{X,  Y}\leq \eps$.
For a vector $x=(x_1,\ldots,x_n)\in\bbF^n$ and a subset $\cS=\{i_1,\ldots,i_\ell\}\subseteq\{1,2,\ldots,n\}$, $x_\cS$ denotes $(x_{i_1},\ldots,x_{i_\ell})$.
For any two vectors $x=(x_1,\ldots,x_n)$ and $y=(y_1,\ldots,y_n)$, their {\em Hamming distance} is defined as $\HD(x,y)=\abs{\left\{i\in\{1,2,\ldots,n\}\,:\, x_i\neq y_i\right\}}$.
For a set $\cS$ and an integer $0\leq t\leq\abs{\cS}$, $\binom{\cS}{t}$ represents the set of subsets of $\cS$ of size $t$.
$\mathbb{I}$ stands for the indicator function.
The base for all logarithms in this paper is 2.

\subsection{Definitions related to learning and attacks} \label{sec:learn-def}
In this subsection, we present   notions and definitions that are related to learning.

We use $\cX$ to denote the inputs and $\cY=\zo$ to denote the outputs or the labels. By $\cH$ we denote a \emph{hypothesis class} of functions from $\cX$ to $\cY$.  We use $D$ to denote a distribution over $\cX \times \cY$. A learning algorithm,   takes a set $\cS \in (\cX \times \cY)^*$ and a parameter $\lambda$ and outputs a function $f$ that is supposed to predict a fresh sample from the same distribution that has generated the set $\cS$. The parameter $\lambda$ is supposed to capture the complexity of the problem, e.g., by allowing the inputs (and the running times) to grow.  (E.g., this could be the VC dimension, but not necessarily so.) In particular, we assume that the members of the sets $\cX_\lambda,\cY_\lambda$ can be represented with $\poly(\lambda)$ bits. A \emph{proper} learning for a hypothesis class $\cH$ outputs $f \in \cH$, while an \emph{improper} learner is allowed to output arbitrary functions.
By default, we work with improper learning as we do not particularly focus on whether the learning algorithms output a function from the hypothesis class.
%
For a set $\cS \subseteq \cX$ and a function $h \colon \cX \to \cY$, by $\cS^h$ we denote the \emph{labeled} set $\set{(x,h(x)) \mid x \in \cS}$.
For a distribution $D$ and an oracle-aided algorithm $A$, by $A^D$ we denote giving $A$ access to a \emph{$D$ sampler}.

\begin{definition}[Learning problems and learners]
We use $\cF=\set{F_\lambda=(\cX_\lambda, \cY_\lambda, \cD_\lambda,\cH_\lambda)}_{\lambda\in\mathbb{N}}$ to denote a family of learning problems where each $\cH_\lambda$ is a set of hypothesis functions mapping $\cX_\lambda$ to $\cY_\lambda$ and $\cD_\lambda$ is a set of distributions supported on  $\cX_\lambda$. 

\begin{itemize}
\item For function $\eps(\cdot,\cdot)$, we say $L$ $\eps$-learns $\cF$ if 
$$\forall \lambda \in \mathbb{N},  D_\lambda \in \cD_\lambda,  h\in \cH_\lambda,  n \in \mathbb{N};\   \Ex_{\cS \sim D_\lambda^n; f \sim L(\cS^h, \lambda)}[ \Risk(h, D_\lambda,f)]\leq \eps(\lambda,n).$$
where   $\Risk(h,D_\lambda,f)= \Pr_{x\sim D_\lambda}[h(x)\neq f(x)]$.

\item $L$ outputs models with at most $p(\cdot)$ bits of parameters if
$\forall \lambda\in \mathbb{N};\  {\left|\Supp\big(L(\cdot, \lambda)\big)\right|}\leq 2^{p(\lambda)}.$

\item $L$ is an $\eps$-robust learner against (all) adversaries of budget $r$ w.r.t. distance metric $d$ if
$$\forall \lambda \in \mathbb{N},  D_\lambda \in \cD_\lambda,  h\in \cH_\lambda,  n \in \mathbb{N};   \Ex_{\cS \sim D_\lambda^n; f \sim L(\cS^h, \lambda)}[ \Risk_{d,r}(h, D_\lambda,f)]\leq \eps(\lambda,n),$$
$$\text{ where }\Risk_{(d,r)}(h,D_\lambda,f) = \Pr_{x\sim D_\lambda}[\max_{x'} \mathbb{I}\big(f(x')\neq h(x)\big)\land \mathbb{I}\big(d(x,x')\leq r\big)].$$ 

\item   $L$ runs in polynomial time if there is a polynomial $\poly(\cdot)$ such that for all $\lambda \in \mathbb{N}$ and $\cS\in (\cX_\lambda,\cY_\lambda)^*$, the running time of $L(\cS,\lambda)$ is bounded by $\poly(|\cS|\cdot  \lambda)$.
\item $L$ is an $\eps$-robust learner against    polynomial-time adversaries of budget $r$ w.r.t distance $d$, if for any family of  $\poly(\lambda)$-time (oracle aided) adversaries $\cA=\set{A^{(\cdot)}_\lambda}_{\lambda \in \mathbb{N}}$ we have
$$\forall \lambda \in \mathbb{N},  D_\lambda \in \cD_\lambda,  h\in \cH_\lambda,  n \in \mathbb{N};  \Ex_{\cS \sim D_\lambda^n; f \sim L(\cS^h, \lambda)}[ \Risk_{d,r, A_\lambda}(h, D_\lambda,f)]\leq \eps(\lambda,n), $$
$$\text{where } \Risk_{(d,r,A_\lambda)}(h,D_\lambda,f) = \Pr_{x\sim D_\lambda, x'\sim A^{D_\lambda}(x,h(x),f)}[ \mathbb{I}\big(h(x')\neq f(x)\big)\land \mathbb{I}\big(d(x,x')\leq r\big)].$$ 


\end{itemize}
\end{definition}

\ifnum\version=\neurips
\else
Our proof also relies on the following theorem from \cite{bubeck2019adversarial}.  

\begin{theorem}[Implied by Theorem 3.1 of \cite{bubeck2019adversarial}]\label{thm:learnable}
 Let $\{\mathcal{C}_\lambda\}_\lambda$ be a finite family of classifiers. 
Suppose the learning problem $\cF=\{\cF_\lambda\}_\lambda$ and a learner $L $ satisfy the following. For all $D_\lambda\in\cD_\lambda$ and $h \in\cH_\lambda$, and sample $\cS \gets D^n_\lambda$, $L(\cS^h)$ always outputs a classifier   $f\in\mathcal{C}_\lambda$ such that $\Risk_{d,r}(h,D_\lambda,f)=0$ (i.e., $f$ robustly fits $h$ perfectly). Then, $L$ will $\delta$-robust learn $\cF$ with sample complexity $ \log\abs{\mathcal{C}_\lambda}/\delta$.
\end{theorem}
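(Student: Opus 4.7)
The plan is to carry out a textbook union-bound argument for finite hypothesis classes, adapted to the robust-risk setting. I interpret the hypothesis as saying: whenever $L$ is given $\cS^h$ with $\cS \sim D_\lambda^n$, it returns some $f \in \mathcal{C}_\lambda$ that \emph{robustly fits the training sample}, i.e.\ for every $(x, h(x)) \in \cS^h$ there is no $x'$ with $d(x,x') \le r$ and $f(x') \ne h(x)$. The goal is then to show that any such consistent $f$ drawn from a finite pool $\mathcal{C}_\lambda$ must have low true robust risk once $n$ is large enough, and in expectation this is at most $\delta$.

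First I would fix the target $\delta$ and call a classifier $f \in \mathcal{C}_\lambda$ \emph{$\delta$-bad} if $\Risk_{d,r}(h, D_\lambda, f) > \delta$. For any $\delta$-bad $f$, a single draw $x \sim D_\lambda$ independently produces a ``robust mistake'' (i.e.\ some $x'$ within budget $r$ on which $f$ disagrees with $h$) with probability strictly greater than $\delta$. Therefore
$$ \Pr_{\cS \sim D_\lambda^n}\bigl[f \text{ robustly fits every point of } \cS\bigr] \;\le\; (1-\delta)^n \;\le\; e^{-\delta n}. $$
A union bound over the finite class $\mathcal{C}_\lambda$ then gives
$$ \Pr_{\cS}\bigl[\exists\, \delta\text{-bad } f \in \mathcal{C}_\lambda \text{ robustly fitting } \cS\bigr] \;\le\; |\mathcal{C}_\lambda|\cdot e^{-\delta n}. $$

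By the hypothesis the learner's output $f = L(\cS^h)$ always lies in $\mathcal{C}_\lambda$ and robustly fits $\cS$; so whenever the above bad event does not occur, $f$ is not $\delta$-bad and therefore $\Risk_{d,r}(h, D_\lambda, f) \le \delta$. Bounding the risk by $1$ on the bad event yields
$$ \Ex_{\cS,\, f \sim L(\cS^h)}\!\bigl[\Risk_{d,r}(h, D_\lambda, f)\bigr] \;\le\; \delta + |\mathcal{C}_\lambda|\cdot e^{-\delta n}, $$
so taking $n$ of order $\log |\mathcal{C}_\lambda|/\delta$ (with an additional constant factor if one wants to absorb the tail term into $\delta$) establishes the claimed sample complexity.

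The argument is essentially a one-step reduction and I do not expect a serious obstacle. The only subtlety is semantic: ``robustly fits the sample'' must be the \emph{pointwise} robust notion (no perturbation within budget $r$ flips the predicted label on any training point), matching the definition of $\Risk_{d,r}$, rather than just agreement with the clean labels; once that is fixed the i.i.d.\ product bound and the union bound go through exactly as in the standard realizable PAC analysis of a consistent learner on a finite class.
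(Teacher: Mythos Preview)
Your argument is correct and is exactly the standard realizable-PAC/union-bound argument adapted to robust risk. Note that the paper does not supply its own proof of this statement; it cites it as ``Implied by Theorem~3.1 of \cite{bubeck2019adversarial}'', and indeed the argument there is precisely the one you wrote. Your reading of the hypothesis (zero robust \emph{empirical} risk on the sample, not population risk) is also the intended one---the paper confirms this in its application of the theorem, where it speaks of the learner finding a function ``that perfectly matches with the training set with zero robust empirical risk.''
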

We emphasize that in the theorem above, one might pick a \emph{different}   set of classifiers merely for sake of computational efficiency of the learner $L$. Namely, it might be possible to information-theoretically learn a hypothesis class robustly (e.g., by a robust variant of empirical risk minimization when), but an efficient learner might choose to output its classifiers from a larger set such that it can \emph{efficiently} find a member of that class.
\fi

\subsection{Cryptographic primitives}

\begin{definition}[Negligible function]
A function $\eps(\lambda)$ is said to be negligible, denoted by $\eps(\lambda)=\negl(\lambda)$, if for all polynomial $\poly(\lambda)$, it holds that
$\eps(\lambda)<1/\poly(\lambda)$
for all sufficiently large $\lambda$.
\end{definition}

\begin{definition}[Pseudorandom generator]\label{def:prg}
Suppose $\alpha>\lambda$. A function $f\colon\zo^\lambda\to\zo^\alpha$ is called a pseudorandom generator (PRG) if for all polynomial-time probabilistic algorithm $A$, it holds that
$$\abs{\probsub{x\draw  {\zo^\lambda}}{A(f(x))=1}-\probsub{y\draw  {\zo^\alpha}}{A(y)=1}}=\negl(\lambda).$$
The ratio $\alpha/\lambda$ is refer to as the (multiplicative) stretch of the PRG.
\end{definition}


\begin{lemma}[\cite{hill99}]
Pseudorandom generators (with arbitrary polynomial stretch) can be constructed from any one-way functions.%
\footnote{A one-way function is a function that is easy to compute, but hard to invert (see Definition~\ref{def:owf}).}
\end{lemma}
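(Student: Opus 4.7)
The plan is to split the proof into two independent parts. First, construct a PRG $G : \zo^\lambda \to \zo^{\lambda+1}$ with a single bit of stretch from any one-way function. Second, amplify any such length-one PRG into one of arbitrary polynomial stretch via iteration. The amplification step is the easier direction. Given $G: \zo^\lambda \to \zo^{\lambda+1}$, define $G' : \zo^\lambda \to \zo^{\lambda + p(\lambda)}$ for any polynomial $p$ by repeatedly applying $G$: write $G(s) = (s', b)$ with $|s'| = \lambda$, output $b$ as the next output bit while recursing on $s'$, and terminate after $p(\lambda)$ rounds (appending the final seed if desired). Indistinguishability from uniform is proved by a standard hybrid argument with $p(\lambda)$ hybrids, where the $i$-th hybrid replaces the $i$-th intermediate seed with a fresh uniform string. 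Adjacent hybrids differ only by one application of $G$, so any distinguisher for $G'$ yields one for $G$ with at most a $p(\lambda)$-factor loss, which remains negligible.

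The hard part is the Håstad-Impagliazzo-Levin-Luby construction of a length-one PRG from an arbitrary one-way function $f$. I would proceed through the notion of computational (pseudo) entropy. First, invoke the Goldreich-Levin hardcore-predicate theorem to deduce that $(f(x), r, \tuple{x,r})$ is computationally indistinguishable from $(f(x), r, b)$ for uniform $x, r \in \zo^\lambda$ and uniform bit $b$. Reading this statement as an entropy claim, the output $(f(x), r, \tuple{x,r})$ has \emph{computational} entropy strictly exceeding its true entropy (which is bounded by $|x| + |r|$ due to the dependence of the last bit on $x, r$), giving us a ``pseudo-entropy generator''. The gap between computational and real entropy, however, depends on the pre-image structure of $f$ (how many $x$'s map to a given $f(x)$), which is unknown in general.

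The main obstacle, and the technical heart of HILL, is bridging the gap between computational \emph{Shannon-type} entropy (which Goldreich-Levin provides ``for free'') and computational \emph{min-entropy} (which is what the Leftover Hash Lemma requires in order to extract statistically uniform bits via universal hashing). My plan would be to resolve this in three sub-steps: (i) take many ($t = \poly(\lambda)$) independent parallel copies of the pseudo-entropy generator so that the joint entropy concentrates sharply about its expectation (a ``flattening'' step converting Shannon entropy into min-entropy up to negligible statistical slack); (ii) apply the Leftover Hash Lemma by hashing the concatenated output down to just below the guaranteed pseudo-min-entropy threshold, producing a string that, jointly with the seed and the hash, is statistically close to uniform on the real-entropy side and hence pseudorandom on the computational side; (iii) verify by a careful parameter accounting that the total output length exceeds the total seed length by at least one bit. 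Composing this length-one PRG with the amplification step of the first paragraph yields PRGs of arbitrary polynomial stretch.
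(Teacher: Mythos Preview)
Your outline is a faithful sketch of the H{\aa}stad--Impagliazzo--Levin--Luby construction and the standard stretch amplification, and I see no genuine gap in it as a high-level plan. However, the paper does not prove this lemma at all: it is stated purely as a citation to \cite{hill99} and used as a black box. There is therefore nothing in the paper to compare your argument against; the authors simply import the result.
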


\begin{definition}[Signature scheme]\label{def:sign}
A \emph{signature} scheme consists of three algorithms $(\gen,\sign,\verify)$.
\begin{itemize}
    \item $(\vk,\sk)\draw\gen(1^\lambda)$: on input the security parameter $1^\lambda$, the randomized algorithm $\gen$ outputs a (public) verification key $\vk$ and a (private) signing key $\sk$.
    \item $\sigma=\sign(\sk,m)$: on input the signing key $\sk$ and a message $m$, $\sign$ outputs a signature $\sigma$.
    \item $b=\verify(\vk,m,\sigma)$: on input a verification key $\vk$, a message $m$, and a signature $\sigma$, $\verify$ outputs a bit $b$, and $b=1$  indicates tha the signature is accepted.
\end{itemize}
We require a (secure) signature scheme to satisfy the following properties.
\begin{itemize}
    \item {\bfseries Correctness.} For every message $m$, it holds that 
    $$\prob{ \verify(\vk,m,\sigma)=1 \ \colon\  \begin{gathered}
       (\vk,\sk)\draw\gen(1^\lambda)\\
       \sigma = \sign(\sk,m)
    \end{gathered}} = 1.$$
    \item {\bfseries Weak unforgeability.\footnote{We consider a rather weak notion of unforgability. Here, we require that the adversary cannot forge a signature when he is given only one valid pair of message and signature. This weaker security notion already suffices for our purposes. In the stronger notion, the adversary is allowed to pick $m$ based on the given $\vk$.}} For any PPT algorithm $A$ and message $m$,  it holds that
    $$\prob{ \begin{gathered}
       m'\neq m \text{ and }\\
       \verify(\vk,m',\sigma')=1
    \end{gathered} \ \colon\  \begin{gathered}
        (\vk,\sk)\draw\gen(1^\lambda)\\
       \sigma = \sign(\sk,m)\\
       (m',\sigma')\draw A(\vk,m,\sigma)
    \end{gathered}} =\negl(\lambda).$$
    
\end{itemize}
\end{definition}

\begin{lemma}[\cite{STOC:NaoYun89,STOC:Rompel90}]
Signature schemes can be constructed from any one-way function.
\end{lemma}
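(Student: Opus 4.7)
The plan is to recover the standard construction of signatures from one-way functions via Lamport's one-time scheme composed with a universal one-way hash function (UOWHF). Given how the paper cites $\cite{STOC:NaoYun89,STOC:Rompel90}$, the intended ``proof'' is really an invocation of these works, but I would sketch the construction in two stages so that the parameters align with Definition~\ref{def:sign}.

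First, starting from a one-way function $f$, I would build a Lamport one-time scheme for messages of a fixed length $n=n(\lambda)$. The $\gen$ algorithm samples $2n$ preimages $\{x_i^b\}_{i\in[n],\,b\in\zo}$ uniformly, sets $\sk=(x_i^b)_{i,b}$, and sets $\vk=(y_i^b)_{i,b}$ where $y_i^b=f(x_i^b)$. To sign $m\in\zo^n$, output $\sigma=(x_1^{m_1},\dots,x_n^{m_n})$, and $\verify$ checks $f(\sigma_i)=y_i^{m_i}$ for all $i$. Correctness is immediate. For the weak unforgeability: conditioned on $(\vk,\sigma)$, the adversary's view contains no information about $x_i^{1-m_i}$ beyond $y_i^{1-m_i}=f(x_i^{1-m_i})$. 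Any forgery $(m',\sigma')$ with $m'\neq m$ differs from $m$ in some coordinate $i$, so $\sigma'_i$ must be an $f$-preimage of $y_i^{1-m_i}$. A standard reduction embeds a given one-way-function challenge $y^*$ into a uniformly random coordinate $(i^*,1-m_{i^*})$; the adversary inverts it with probability at least $1/(2n)$ times its forging probability, contradicting one-wayness of $f$.

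Second, to cover messages of arbitrary polynomial length (as allowed by the ``for any $\ldots$ message $m$'' quantifier in Definition~\ref{def:sign}), I would compose Lamport with a UOWHF family $\{h_k\}$ mapping $\zo^*$ to $\zo^n$. The key generation additionally samples a fresh hash key $k$ (included in $\vk$), and $\sign(\sk,m)$ returns the Lamport signature of $h_k(m)$. Since the weak unforgeability notion fixes $m$ \emph{before} $\vk$ is drawn, the adversary is committed to $m$ before seeing $k$, which matches precisely the target-collision-resistance game defining a UOWHF. A forgery $(m',\sigma')$ with $m'\neq m$ then splits into two cases: either $h_k(m')=h_k(m)$, breaking the UOWHF, or $h_k(m')\neq h_k(m)$, which yields a Lamport forgery on a fresh $n$-bit message. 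Both events are $\negl(\lambda)$ by the previous step and by the UOWHF guarantee.

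The main obstacle is not in this composition, which is essentially textbook, but in establishing the underlying UOWHF from an arbitrary one-way function: Rompel's construction $\cite{STOC:Rompel90}$ proceeds through a delicate sequence of reductions (regularization of the one-way function, combinatorial hashing, and inaccessible-entropy style arguments in modern treatments) and is substantially more involved than the signature assembly itself. Since the lemma is stated as a citation, I would not reproduce those internals and would instead invoke Rompel's theorem as a black box to obtain the UOWHF, and Naor--Yung's framework $\cite{STOC:NaoYun89}$ to legitimize the Lamport+UOWHF composition, thereby completing the proof.
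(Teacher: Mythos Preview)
Your sketch is correct, and in fact goes beyond what the paper does: the paper offers no proof of this lemma at all, treating it purely as a citation of \cite{STOC:NaoYun89,STOC:Rompel90}. Your Lamport-plus-UOWHF outline is the standard route and, as you implicitly note, the weak unforgeability of Definition~\ref{def:sign} (a single fixed message chosen before $\vk$ is revealed) is precisely one-time security, so the one-time construction already suffices without any tree-based amplification.
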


\subsection{Coding theory}
Let $\bbF$ be a finite field.
A {\em linear code} $\cH$ with {\em block length} $n$ and {\em dimension} $k$ is a $k$-dimensional subspace of the vector space $\bbF^n$. 
The {\em generator matrix} $G\in\bbF^{k\times n}$ maps every message $m\in\bbF^k$ to its encoding, namely, the message $m$ is encoded as $m\cdot G$.
The {\em distance} $d$ of the code $\cH$ is the minimum distance between any two codewords. 
That is, $d = \min\limits_{(x,y\in\cH)\wedge (x\neq y)}\HD(x,y)$. The {\em rate} of the codeword is defined as $R=k/n$.

\parag{Reed-Solomon code.} In this work, we will mainly use the Reed-Solomon (RS) code. For the RS code, every message $m=(m_1,\dots,m_k)\in\bbF^k$ is parsed as a degree $k-1$ polynomial $$f(x)= m_1+m_2\cdot x+\cdots+m_{k}\cdot x^{k-1}$$
and the encoding is simply $(f(1),f(2),\ldots,f(n))\in\bbF^n$. 

\begin{definition}[List-decodable code] \label{def:list-code}
A code $\cH\subseteq \bbF^n$ is said to be $(p,L)$-list-decodable if for any string $x\in\bbF^n$, there are $\leq L$ messages whose encoding $\mathsf{c}$ satisfies $\HD(x,\mathsf{c})\leq p\cdot n$.
We say the code $\cH$ is efficiently $(p,L)$-list-decodable if there is an efficient algorithm that finds all such messages.
\end{definition}

\begin{lemma}[\cite{FOCS:GurSud98}]
For any constant $R>0$, the Reed-Solomon code with constant rate $R$ and block length $n$ is efficiently $(1-\sqrt R,\poly(n))$-list-decodable. 
\end{lemma}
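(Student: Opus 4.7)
The plan is to implement the Guruswami--Sudan list-decoding algorithm based on bivariate polynomial interpolation with multiplicities. Given a received word $(y_1,\ldots,y_n)\in\bbF^n$, the goal is to find every polynomial $p(x)$ of degree $<k=Rn$ that agrees with $(y_i)$ in at least $\sqrt{R}\,n$ positions, since agreement $\sqrt{R}\,n$ corresponds exactly to the claimed decoding radius $1-\sqrt{R}$.

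The first step is bivariate interpolation. I introduce an integer multiplicity parameter $r$ and a $(1,k-1)$-weighted degree bound $D$, and seek a nonzero $Q(x,y)\in\bbF[x,y]$ of weighted degree at most $D$ having a zero of multiplicity $\geq r$ at each of the $n$ points $(i,y_i)$ (multiplicity $r$ at a point $(a,b)$ meaning that $Q(x+a,y+b)$ contains no monomials of total degree $<r$). This reduces to a homogeneous linear system in the coefficients of $Q$: the number of unknowns is $\approx D^2/(2(k-1))$, while each of the $n$ multiplicity conditions imposes $\binom{r+1}{2}$ linear constraints. A nonzero solution therefore exists whenever $D^2 > n(k-1)\,r(r+1)$, and $Q$ can be computed by Gaussian elimination in time $\poly(n,r)$.

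The second step is the interpolation-to-list argument. For any $p(x)$ of degree $<k$ that agrees with the received word in $t$ positions, the univariate polynomial $g(x):=Q(x,p(x))$ has a root of multiplicity $\geq r$ at each agreement, and hence at least $rt$ roots counted with multiplicity; but $\deg g \leq D$ because the $(1,k-1)$-weighted degree of $Q$ is $\leq D$. Therefore $rt>D$ forces $g\equiv 0$, which means $(y-p(x))$ divides $Q(x,y)$. Choosing $r$ a sufficiently large (but constant, depending on $R$) integer and $D$ just above $\sqrt{n(k-1)r(r+1)}$, the threshold $t>D/r$ simplifies to $t > \sqrt{R}\,n\cdot\sqrt{1+1/r}$, which can be made arbitrarily close to $\sqrt{R}\,n$ by enlarging $r$, thereby matching the target decoding radius $1-\sqrt{R}$ up to negligible slack.

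The third step is list extraction. The candidate messages correspond to linear $y$-factors $y-p(x)$ of $Q(x,y)$, so their number is bounded by the $y$-degree of $Q$, which is at most $D/(k-1)=\poly(n)$. Applying the Roth--Ruckenstein (or Kaltofen) algorithm for bivariate polynomial factorization yields all such $p$ in polynomial time, and a final sweep keeps only those of degree $<k$ that achieve the required agreement. The main obstacle is the parameter balancing: one must choose $r$ and $D$ simultaneously so that the interpolation system is underdetermined ($D^2>n(k-1)r(r+1)$), the zero-forcing inequality ($rt>D$) holds at the target agreement $t=\sqrt{R}\,n$, and the list bound $D/(k-1)$ remains polynomial in $n$. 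Controlling the $\sqrt{1+1/r}$ slack by taking $r$ to be a large constant is precisely the refinement over Sudan's original $r=1$ analysis (which only reaches $1-2\sqrt{R}$), and is what allows the full $1-\sqrt{R}$ radius in the Guruswami--Sudan theorem.
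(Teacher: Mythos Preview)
The paper does not prove this lemma at all: it is stated with a citation to Guruswami and Sudan and used as a black box, so there is no ``paper's own proof'' to compare against. Your proposal is a faithful outline of the Guruswami--Sudan algorithm and is essentially correct.

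Two small points. First, with $r$ a fixed constant you only reach agreement threshold $\sqrt{R}\,n\cdot\sqrt{1+1/r}$, i.e.\ radius $1-\sqrt{R}-\varepsilon$ for arbitrarily small $\varepsilon>0$; to hit the exact radius $1-\sqrt{R}$ stated in the lemma you must let $r$ grow polynomially with $n$ (e.g.\ $r=\Theta(kn)$), which is also what produces the $\poly(n)$ list size rather than $O(1)$. Second, Sudan's $r=1$ bound is $1-\sqrt{2R}$, not $1-2\sqrt{R}$. Neither point affects the overall soundness of your plan.
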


\ifnum\temp=0

\subsection{Randomness extraction}

\begin{definition}[Min-entropy]
The min-entropy of a distribution $X$ over a finite set $\Omega$ is defined as
$$H_\infty(X)\defeq -\log\left(\max_{\omega\in\Omega}\ \prob{X=\omega}\right).$$
\end{definition}

We need the following result about the sampler.
A sampler (for a target set of coordinates $\{1,2,\ldots,n\}$) is a deterministic mapping that only takes randomness as input and outputs a subset of  $\{1,2,\ldots,n\}$. Below, we let $\binom{[n]}{t}=\set{\cS\mid |\cS|=t, \cS \subseteq [n]}$.

\begin{lemma}[Lemma 6.2 and Lemma 8.4 of~\cite{Vadhan04}]\label{thm:samp}
For any $0<\kappa_1,\kappa_2<1$ and any $n,t\in\bbN$, there exists a deterministic  sampler $\samp\colon \zo^r\to \binom{[n]}{t}$ 
such that the following hold.
\begin{itemize}
    \item If $X$ is a random variable over $\zo^n$ with min-entropy   $\geq\mu\cdot n$, there exists a  random variable $Y$ over $\zo^t$ with min-entropy   $\geq (\mu-\kappa_1)\cdot t$ and  
     $$\sd{\;\big(U_r,X_{\samp\left(U_{r}\right)}\big) \;,\; \big(U_{r},Y\big)\;}\leq \exp(-\Theta(n\kappa_1))+\exp\left(-n^{\kappa_2}\right),$$
    where the two $U_r$ in the first joint distribution refer to the same sample.
    
    \item Furthermore, $r = \Theta\left(n^{\kappa_2}\right)$.
\end{itemize}
\end{lemma}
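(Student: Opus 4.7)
The plan is to prove this sampler lemma by combining two standard ingredients: (1) an information-theoretic argument showing that subsampling a high-min-entropy source with a \emph{truly random} subset preserves min-entropy up to a small additive loss, and (2) an explicit averaging sampler that derandomizes the choice of subset down to $r = \Theta(n^{\kappa_2})$ seed bits.

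First, I would establish the ideal version. Suppose $X$ is a distribution over $\zo^n$ with $H_\infty(X) \geq \mu n$. Decompose $X$ as a convex combination of \emph{flat} distributions, each uniform over a support of size at least $2^{\mu n}$. For each flat component, a Chernoff-style concentration argument over a uniformly random $\cS \in \binom{[n]}{t}$ shows that $H_\infty(X_\cS) \geq (\mu - \kappa_1) t$ except on a fraction of subsets exponentially small in $n\kappa_1$. Taking a union bound over the at most $2^t$ possible output values and averaging over the convex decomposition yields the $\exp(-\Theta(n\kappa_1))$ error term and isolates the ``good'' set of subsets $G \subseteq \binom{[n]}{t}$ for which $X_\cS$ is statistically close to a distribution of min-entropy $\geq (\mu - \kappa_1) t$.

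Next, I would invoke an explicit averaging sampler $\samp \colon \zo^r \to \binom{[n]}{t}$ with seed length $r = \Theta(n^{\kappa_2})$ and sampling error $\exp(-n^{\kappa_2})$, constructible via expander random walks or extractor-based methods. The sampler property, applied to the $\{0,1\}$-valued indicator of $G$, guarantees that the fraction of seeds $u$ for which $\samp(u) \notin G$ is at most $\exp(-n^{\kappa_2})$. On the good seeds, $X_{\samp(u)}$ is already statistically close to a distribution with the desired min-entropy; on the bad seeds, we replace $X_{\samp(u)}$ with an arbitrary distribution of min-entropy $\geq (\mu - \kappa_1) t$ when forming $Y$. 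Summing the two error contributions yields the claimed statistical distance bound $\exp(-\Theta(n\kappa_1)) + \exp(-n^{\kappa_2})$.

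The main obstacle I anticipate is the min-entropy bookkeeping when working with the joint distribution $(U_r, X_{\samp(U_r)})$ versus $(U_r, Y)$: one must ensure that the \emph{marginal} of $Y$ over $\zo^t$ has min-entropy $\geq (\mu - \kappa_1) t$, rather than merely the conditional min-entropy of $Y$ given the seed. This forces a careful surgery where one modifies $X_{\samp(u)}$ on bad seeds and also trims the small fraction of ``heavy'' values on good seeds, using the standard trick that discarding an event of probability $\delta$ costs at most $\log(1/\delta)$ bits of min-entropy. The slack parameter $\kappa_1$ in the lemma statement is precisely what absorbs this loss, and the two parameters $\kappa_1, \kappa_2$ play independent roles so they can be optimized separately to the needs of the calling theorems.
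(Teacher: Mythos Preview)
The paper does not prove this lemma at all; it is quoted verbatim as Lemmas~6.2 and~8.4 of Vadhan's bounded-storage paper and used as a black box throughout. So there is no ``paper's own proof'' to compare against.

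That said, your sketch is a faithful outline of Vadhan's original argument: Lemma~8.4 there is exactly the min-entropy preservation step (flat decomposition plus a Chernoff-type bound over a uniformly random subset, giving the $\exp(-\Theta(n\kappa_1))$ term), and Lemma~6.2 supplies the explicit averaging sampler with seed length $\Theta(n^{\kappa_2})$ and error $\exp(-n^{\kappa_2})$ that derandomizes the subset choice. Combining them by replacing the output on the bad seeds is precisely how the two error terms are summed.

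One small correction to your bookkeeping: the guarantee one actually obtains (and the one the paper uses downstream) is that for all but a small fraction of seeds $u$, the \emph{conditional} distribution $X_{\samp(u)}$ is close to some $Y_u$ with $H_\infty(Y_u)\geq(\mu-\kappa_1)t$. The random variable $Y$ in the lemma statement should be read as jointly distributed with $U_r$, with the min-entropy bound holding seed-by-seed; this is exactly what the applications in Theorems~\ref{thm:eff-learn'}--\ref{thm:4-2'} need when they feed $D$ into the inner-product extractor. Your worry about having to secure the \emph{marginal} min-entropy of $Y$ is therefore unnecessary, and the ``surgery'' you propose to trim heavy values is not needed.
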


This lemma by Vadhan~\cite{Vadhan04} states that one could use a small amount of randomness to sub-sample from a distribution $X$ with the guarantee that $X_{\samp(U_{ r})}$ is close to another distribution $Y$ that preserves the same min-entropy rate as $X$. 
 
\fi

\ifnum\version=\neurips

Due to space limitations, additional preliminaries are included in Appendix~\ref{app:add-prelim}.

\else

\ifnum\temp=1

\begin{definition}[Min-entropy]
The min-entropy of a distribution $X$ over a finite universe $\Omega$ is defined as
$$H_\infty(X)\defeq -\log\left(\max_{\omega\in\Omega}\ \prob{X=\omega}\right).$$
\end{definition}

We need the following result about the sampler.
A sampler (for a target set of coordinates $\{1,2,\ldots,n\}$) is a deterministic mapping that only takes randomness as input and outputs a subset of  $\{1,2,\ldots,n\}$. Below, we let $\binom{[n]}{t}=\set{\cS\mid |\cS|=t, \cS \subseteq [n]}$.

\begin{theorem}[Lemma 6.2 and Lemma 8.4 of~\cite{Vadhan04}]\label{thm:samp}
For any $0<\kappa_1,\kappa_2<1$ and any $n,t\in\bbN$, there exists a deterministic  sampler $\samp\colon \zo^r\to \binom{[n]}{t}$ 
such that the following hold.
\begin{itemize}
    \item If $X$ is a random variable over $\zo^n$ with min-entropy   $\geq\mu\cdot n$, there exists a  random variable $Y$ over $\zo^t$ with min-entropy   $\geq (\mu-\kappa_1)\cdot t$ and  
     $$\sd{\;\big(U_r,X_{\samp\left(U_{r}\right)}\big) \;,\; \big(U_{r},Y\big)\;}\leq \exp(-\Theta(n\kappa_1))+\exp\left(-n^{\kappa_2}\right),$$
    where the two $U_r$ in the first joint distribution refer to the same sample.
    
    \item Furthermore, $r = \Theta\left(n^{\kappa_2}\right)$.
\end{itemize}
\end{theorem}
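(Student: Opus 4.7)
The plan is to establish the two bullet points by (i) constructing $\samp$ as an explicit averaging sampler with the right seed length, and (ii) showing via a reconstruction/encoding argument that subsampling a high--min-entropy source through such a sampler preserves min-entropy except on a small fraction of seeds.

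First, I would construct an averaging sampler $\samp \colon \zo^r \to \binom{[n]}{t}$ with seed length $r = \Theta(n^{\kappa_2})$ and sampling error $\exp(-\Omega(n^{\kappa_2}))$. Such samplers are known to exist explicitly, either by invoking Zuckerman's equivalence between samplers and extractors (an extractor for min-entropy rate slightly below $1$, with appropriate output length and error, yields a sampler with exactly these parameters), or by composing an expander walk with a small-bias generator. The averaging property is: for every function $g\colon\binom{[n]}{t}\to[0,1]$, the fraction of seeds $u$ with $|g(\samp(u)) - \Ex g| > \kappa_1/2$ is at most $\exp(-\Omega(n^{\kappa_2}))$.

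Second, I would prove the min-entropy preservation claim: for any $X$ over $\zo^n$ with $H_\infty(X)\ge \mu n$, at most an $\exp(-\Theta(n\kappa_1))$ fraction of seeds $u\in\zo^r$ are \emph{bad}, where a seed $u$ is bad if there exists $y\in\zo^t$ with $\probsub{X}{X_{\samp(u)}=y} > 2^{-(\mu-\kappa_1)t}$. I would argue this by a reconstruction argument: if the bad set $B$ had measure $\beta$, then one could describe any element $x$ from the support of the bad slices using (a) the name of a bad seed $u\in B$ (using $\log|B|$ bits), (b) the heavy string $y = x_{\samp(u)}$ (using $(\mu - \kappa_1)t$ bits once we condition on $u$), and (c) the remaining $n-t$ coordinates of $x$ (at most $n-t$ bits). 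Aggregating heavy atoms and counting carefully, the total probability mass that $X$ places on such atoms is at most $|B|\cdot 2^{-(\mu-\kappa_1)t}\cdot 2^{n-t}$. Because the atoms of $X$ have probability at most $2^{-\mu n}$, any such mass exceeds the $X$-mass budget unless $\beta \le \exp(-\Theta(n\kappa_1))$; the constant inside $\Theta(\cdot)$ comes from how $(\mu-\kappa_1)t$ and $\mu n$ differ when rescaled by $t/n$.

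Third, I would convert the ``with high probability over $u$'' statement into the ``there exists $Y$'' form required by the lemma. Define the distribution of $Y\mid (U_r = u)$ to equal $X_{\samp(u)}$ whenever $u$ is good, and replace it by any fixed distribution of min-entropy $\ge (\mu-\kappa_1)t$ for bad $u$. Then
\[
\sd{(U_r, X_{\samp(U_r)}),(U_r,Y)} \;\le\; \pr_{u\draw U_r}[u\text{ is bad}] \;\le\; \exp(-\Theta(n\kappa_1)),
\]
and adding the $\exp(-n^{\kappa_2})$ error coming from cases where $\samp(u)$ itself behaves pathologically (absorbed into the bad-seed set for the sampler) yields the stated bound. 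Since the definition of $Y\mid U_r$ ensures min-entropy $\ge (\mu-\kappa_1)t$ pointwise in $u$, the unconditional $Y$ also has min-entropy $\ge (\mu-\kappa_1)t$.

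The main obstacle is getting the constants right in the reconstruction step: one must be careful to separate the role of $\kappa_1$ (which controls the entropy loss per sampled coordinate, i.e., an $\exp(-\Theta(n\kappa_1))$ error) from $\kappa_2$ (which controls the seed length and the raw sampler error, i.e., an $\exp(-n^{\kappa_2})$ error), and to ensure the two error terms combine additively rather than multiplicatively. A clean way to do this is to apply the averaging property of $\samp$ to the indicator function $g(S)=\mathbb{I}[x_S = y]$ for each candidate heavy pair $(x,y)$, union-bound over heavy pairs using the min-entropy of $X$, and verify that the slack between $(\mu-\kappa_1)t$ and $\mu t$ is exactly what pays for the union bound.
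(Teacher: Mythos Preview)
The paper does not prove this statement; it is quoted from \cite{Vadhan04} (Lemmas~6.2 and~8.4) and used as a black box, so there is no in-paper proof to compare against. Your three-step outline (build a sampler, show most seeds preserve min-entropy, patch the bad seeds) is the right shape, but two of the steps have real gaps.

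First, the averaging-sampler property you invoke is not the standard one and is too strong to hold with $r=\Theta(n^{\kappa_2})$. An averaging sampler guarantees that for every $f\colon [n]\to[0,1]$, most seeds $u$ satisfy $\bigl|\tfrac{1}{t}\sum_{i\in\samp(u)} f(i) - \Ex_i f(i)\bigr|\le \eps$; it does \emph{not} promise concentration for arbitrary $g\colon\binom{[n]}{t}\to[0,1]$, which would force $r=\Omega\bigl(\log\binom{n}{t}\bigr)$. This also breaks the ``clean way'' in your last paragraph, since $g(S)=\mathbb{I}[x_S=y]$ is not an average over $S$ of any function on $[n]$.

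Second, the reconstruction argument in step~2 does not deliver the bound. Your encoding shows only that $\bigcup_{u\in B}\{x: x_{\samp(u)}=y_u\}$ has at most $|B|\cdot 2^{n-t}$ elements, but nothing forces this union to be large: the heavy slices for different bad seeds may coincide, so you cannot extract an upper bound on $|B|$ from a lower bound on a single slice. Defining ``bad $u$'' as ``there exists one heavy $y$'' is in any case too weak for a compression of $X$, because a single heavy atom may carry mass only $2^{-(\mu-\kappa_1)t}\ll 1$. Vadhan's proof does not go via compression; it uses the correct averaging-sampler guarantee (for functions on $[n]$) to bound, for each fixed $x$ in the flattened support, the probability over seeds that $x$ lands in a heavy slice, and then averages over $x$.
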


This theorem by Vadhan~\cite{Vadhan04} states that one could use a small amount of randomness to sub-sample from a distribution $X$ with the guarantee that $X_{\samp(U_{ r})}$ is close to another distribution $Y$ that preserves the same min-entropy rate as $X$. 
 
\fi

In certain cases, some information regarding $X$ is learned (e.g., through training). Let us denote this learned information as a random variable $Z$. Note that $X$ and $Z$ are two correlated distributions.
In order to denote the min-entropy of $X$ conditioned on the learned information $Z$, we need the following notion (and lemma) introduced by Dodis et. al.~\cite{DORS08}.

\begin{definition}[Average-case min-entropy~\cite{DORS08}]\label{def:avg-min-ent}
For two correlated distributions $X$ and $Z$, the average-case min-entropy is defined as
$$
\widetilde{H}_\infty(X | Z) = - \log\left(\expsub{z \draw Z}{\max_{x}\; \prob{X = x |Z = z} }\right).
$$
\end{definition}

\begin{lemma}[\cite{DORS08}]\label{lem:min-ent}
We have the following two lemmas regarding the average-case min-entropy.
\begin{itemize}
    \item If the support set of $Z$ has size at most $2^m$, we have $$\widetilde{H}_\infty(X | Z)\geq H_\infty(X)-m.$$

    \item 
    It holds that 
    $$\probsub{z\draw Z}{H_\infty(X\vert Z=z)\geq \widetilde{H}_\infty(X |  Z)-\log(1/\eps)}\geq 1-\eps.$$
    
\end{itemize}
\end{lemma}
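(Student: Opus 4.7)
Both parts follow directly from the definition of average-case min-entropy once it is unpacked via the substitution $p_{\max}(z) \defeq \max_x \prob{X=x \mid Z=z}$, so that $\widetilde{H}_\infty(X | Z) = -\log \expsub{z \draw Z}{p_{\max}(z)}$ and $H_\infty(X \mid Z=z) = -\log p_{\max}(z)$. The only care required is to keep inequality directions consistent when passing through $-\log$ (which is monotone decreasing) and to recognize that $p_{\max}(Z)$ is a bona-fide nonnegative random variable whose expectation we already have a handle on.

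For Part 1, the plan is to upper bound $\expsub{z \draw Z}{p_{\max}(z)}$ termwise. First I would push $\prob{Z=z}$ inside the maximum (legal since it is a nonnegative scalar independent of $x$) and use the definition of conditional probability to rewrite
$$\expsub{z \draw Z}{p_{\max}(z)} \;=\; \sum_{z \in \Supp(Z)} \max_x \prob{X=x,\; Z=z}.$$
Next, bound each joint probability by its marginal, $\prob{X=x, Z=z} \leq \prob{X=x} \leq 2^{-H_\infty(X)}$, and sum over the at-most-$2^m$ elements of $\Supp(Z)$ to obtain $\expsub{z \draw Z}{p_{\max}(z)} \leq 2^{m - H_\infty(X)}$. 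Taking $-\log$ yields the claimed inequality $\widetilde{H}_\infty(X | Z) \geq H_\infty(X) - m$.

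For Part 2, the plan is to apply Markov's inequality to the nonnegative random variable $p_{\max}(Z)$, whose mean is by definition $2^{-\widetilde{H}_\infty(X | Z)}$. The target event $H_\infty(X \mid Z=z) \geq \widetilde{H}_\infty(X | Z) - \log(1/\eps)$ is, after applying $-\log$ to both sides and using $-\log(1/\eps) = \log\eps$, equivalent to $p_{\max}(z) \leq \eps^{-1} \cdot \expsub{z \draw Z}{p_{\max}(z)}$, whose complementary event has Markov-probability at most $\eps$. I do not foresee any substantive obstacle here; the whole argument is essentially two lines plus Markov. The only mildly subtle point is that the $\max_x$ sits outside the logarithm in $\widetilde{H}_\infty$, which is precisely why the $\log(1/\eps)$ slack enters additively in the min-entropy bound (rather than multiplicatively at the probability level) after Markov is applied.
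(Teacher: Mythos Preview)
Your proof is correct. The paper does not actually supply its own proof of this lemma---it is simply cited from \cite{DORS08}---and your argument (rewriting the average guessing probability as $\sum_z \max_x \prob{X=x,Z=z}$ and bounding termwise for Part~1, then applying Markov's inequality to $p_{\max}(Z)$ for Part~2) is exactly the standard proof from that reference.
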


Intuitively, the above lemma states that: if a model denoted by the random variable $Z$ is not too large, and that model captures the information revealed about a variable $X$, since the support set of $Z$ has small size, then the average-case min-entropy  $\widetilde{H}_\infty(X|Z)$ is  large.
Furthermore, for most $z$, the min-entropy of $H_\infty(X\vert Z=z)$ is almost as large as $\widetilde{H}_\infty(X|Z)$.


We now recall a tool that, roughly speaking, states that if $X$ is a distribution over $\zo^n$ that contains some min-entropy, then the inner product (over $\bbF_2$) between $X$ and a random vector $Y$ is a uniformly random bit, even conditioned on most of $Y$.   This is a special case of the celebrated {\em leftover hash lemma}~\cite{hill99}. We summarize this result as the following theorem. For completeness, a proof can be found in Appendix~\ref{app:proof-ip}. 

\begin{theorem}[Inner product is a good randomness extractor]\label{thm:ip}
For all distribution $X$ over $\zo^n$ such that $H_\infty(X)\geq  2\cdot \log(1/\eps)$, it holds that
$$\left( U_n,\langle X, U_n\rangle\right) \;\approx_\eps\; U_{n+1},$$
where the two $U_n$ refer to the same sample. 
\end{theorem}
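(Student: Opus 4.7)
The plan is to prove this directly via a collision-probability argument, which is the cleanest way to instantiate the leftover hash lemma for the specific $2$-universal family of inner-product hashes. Let $D=(U_n,\langle X,U_n\rangle)$ and let $N=2^{n+1}$. I will first show that $\text{cp}(D)=2^{-n-1}\bigl(1+\text{cp}(X)\bigr)$, then bound $\text{cp}(X)$ by $\eps^2$ using the min-entropy assumption, and finally convert the collision-probability bound to a statistical distance bound via the standard inequality $\sd{D,U_N}\leq \tfrac12\sqrt{N\cdot \text{cp}(D)-1}$.

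First, I would take two independent copies $X,X'$ of the source and write
\begin{align*}
\text{cp}(D)\;=\;2^{-2n}\sum_{y\in\zo^n}\Pr_{X,X'}\!\bigl[\langle X-X',y\rangle=0\bigr]\;=\;2^{-2n}\,\E_{X,X'}\bigl|\{y:\langle X-X',y\rangle=0\}\bigr|.
\end{align*}
The inner count is $2^n$ when $X=X'$ (which happens with probability $\text{cp}(X)$) and $2^{n-1}$ otherwise, since $X-X'\neq 0$ makes $y\mapsto\langle X-X',y\rangle$ a nontrivial $\bbF_2$-linear form with kernel of size exactly $2^{n-1}$. A single line of algebra then yields $\text{cp}(D)=2^{-n-1}+2^{-n-1}\,\text{cp}(X)$.

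Second, the hypothesis $H_\infty(X)\geq 2\log(1/\eps)$ gives $\max_x\Pr[X=x]\leq \eps^2$, and hence $\text{cp}(X)=\sum_x\Pr[X=x]^2\leq \max_x\Pr[X=x]\leq \eps^2$. Substituting into the previous display shows $N\cdot\text{cp}(D)-1\leq \text{cp}(X)\leq \eps^2$, so the collision-to-statistical-distance inequality (an application of Cauchy–Schwarz in its standard form) yields $\sd{D,U_{n+1}}\leq \tfrac12\sqrt{\eps^2}=\eps/2<\eps$, which is exactly the claim.

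I do not expect any real obstacle: the universality of $\{h_y(x)=\langle x,y\rangle\}_{y\in\zo^n}$ drops out of the kernel computation, and the only thing to be careful about is the bookkeeping of the factors of $2$ when converting min-entropy to collision probability and then to statistical distance. If one preferred, one could instead invoke the leftover hash lemma as a black box for this $2$-universal family with output length $m=1$, but the direct computation above is short enough to write out explicitly.
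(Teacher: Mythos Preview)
Your proposal is correct and is essentially the same argument as the paper's: both apply Cauchy--Schwarz/Jensen to reduce the statistical distance to $\tfrac12\sqrt{\Pr_{X,X'}[X=X']}$ via the kernel-size computation for $y\mapsto\langle X-X',y\rangle$, and then bound the collision probability by $2^{-H_\infty(X)}\le\eps^2$. The only cosmetic difference is that you package the Cauchy--Schwarz step as the generic inequality $\sd{D,U_N}\le\tfrac12\sqrt{N\cdot\mathrm{cp}(D)-1}$, whereas the paper writes it as Jensen applied to the per-seed bias $\E_y\bigl[|p_y-q_y|\bigr]\le\sqrt{\E_y[(p_y-q_y)^2]}$; unwinding either form yields the identical quantity.
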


Next, we need the following notion  and results from Fourier analysis.

\begin{definition}[Small-bias distribution~\cite{NN93}]
Let $\bbF_{2^\ell}$ be the finite field of order $2^\ell$.
For a distribution $X$ over $\bbF_{2^\ell}^n$, the bias of $X$ with respect to a vector $ y\in\bbF_{2^\ell}^n$ is defined as
$$\bias(X, \alpha)\defeq \abs{\expsub{x\draw X}{(-1)^{\tr{\tuple{x,\alpha}}}}},$$ 
where $\mathsf{Tr}\colon\bbF_{2^\ell}\to\bbF_2$ denote the {\em trace map} defined as 
$\tr{y}=y+y^{2^1}+y^{2^2}+\cdots+y^{2^{\ell-1}}.$ 
The distribution $ X$ is said to be $\eps$-small-biased if for all non-zero vector $\alpha\in\zo^n$, it holds that
$$\bias( X,\alpha)\leq \eps.$$
\end{definition}
Note that the trace map maps elements from $\bbF_{2^\ell}$ to $\bbF_2$, where exactly half of the field elements maps to $1$ and the other half to $0$.
Consequently, if $\tuple{X,\alpha}$ is a uniform distribution over $\bbF_{2^\ell}$, then $\bias(X,\alpha)=0$.
In Appendix~\ref{app:masking-proof},  prove the theorem below.

\begin{theorem}[Small-biased Masking Lemma~\cite{STOC:DodSmi05}]\label{thm:masking}
Let $ X$ and $ Y$ be distributions over $\bbF_{2^\ell}^n$. If $H_\infty( X)\geq k$ and $Y$ is $\eps$-small-biased, it holds that
$$ \sd{ X + Y \;,\; U_{\zo^{n\ell}}}\leq 2^{\frac{n\ell-k}2-1}\cdot \eps.$$
\end{theorem}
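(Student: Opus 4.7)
The plan is to prove this via Fourier analysis on the group $\bbF_{2^\ell}^n$, identified additively with $\bbF_2^{n\ell}$. The trace characterization in the definition of $\bias$ is exactly what lets us realize the additive characters of $\bbF_{2^\ell}^n$ as $\chi_\alpha(x)=(-1)^{\tr{\langle x,\alpha\rangle}}$, so that the $\eps$-small-bias assumption becomes a uniform bound $|\widehat{Y}(\alpha)|\leq \eps$ on the nontrivial Fourier coefficients of $Y$, where $\widehat{D}(\alpha):=\Ex_{x\sim D}[\chi_\alpha(x)]$ for any distribution $D$.

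The first step is the standard Fourier/Parseval reduction of statistical distance to uniform. Writing $g(x)=\Pr[(X+Y)=x]-2^{-n\ell}$, Cauchy--Schwarz gives $\sum_x |g(x)|\leq \sqrt{2^{n\ell}}\cdot\sqrt{\sum_x g(x)^2}$; Parseval (with the convention $\widehat{g}(\alpha)=\sum_x g(x)\chi_\alpha(x)$) turns $\sum_x g(x)^2$ into $2^{-n\ell}\sum_{\alpha\neq 0}\widehat{X+Y}(\alpha)^2$, because the zero coefficient of $g$ vanishes and for $\alpha\neq 0$ we have $\widehat{g}(\alpha)=\widehat{X+Y}(\alpha)$. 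This yields
\[
 \sd{X+Y,U_{\zo^{n\ell}}}\;\leq\;\tfrac12\sqrt{\sum_{\alpha\neq 0}\widehat{X+Y}(\alpha)^2}.
\]

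The second step uses independence of $X$ and $Y$, which gives the convolution identity $\widehat{X+Y}(\alpha)=\widehat{X}(\alpha)\cdot\widehat{Y}(\alpha)$. Bounding $|\widehat{Y}(\alpha)|$ by $\eps$ for $\alpha\neq 0$ (the small-bias hypothesis) and then dropping the restriction $\alpha\neq 0$ in the sum over $\widehat{X}(\alpha)^2$ yields
\[
 \sum_{\alpha\neq 0}\widehat{X+Y}(\alpha)^2 \;\leq\; \eps^2\sum_{\alpha}\widehat{X}(\alpha)^2.
\]
Applying Parseval again to the probability mass function of $X$, $\sum_\alpha \widehat{X}(\alpha)^2 = 2^{n\ell}\sum_x \Pr[X=x]^2$, and the collision bound $\sum_x \Pr[X=x]^2\leq \max_x \Pr[X=x]\leq 2^{-H_\infty(X)}\leq 2^{-k}$ gives $\sum_\alpha \widehat{X}(\alpha)^2\leq 2^{n\ell-k}$.

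Combining these estimates produces $\sd{X+Y,U_{\zo^{n\ell}}}\leq \tfrac12\cdot \eps\cdot 2^{(n\ell-k)/2}=2^{(n\ell-k)/2-1}\eps$, which is exactly the claimed bound. The only non-routine point to be careful about is the character identification: one must verify that $\chi_\alpha(x)=(-1)^{\tr{\langle x,\alpha\rangle}}$ really does enumerate all additive characters of $\bbF_{2^\ell}^n$ as $\alpha$ ranges over $\bbF_{2^\ell}^n$, which follows from the non-degeneracy of the trace pairing on $\bbF_{2^\ell}/\bbF_2$. Once this identification is in place, everything else is a direct invocation of Parseval and Cauchy--Schwarz, so I do not anticipate any substantive obstacle beyond bookkeeping.
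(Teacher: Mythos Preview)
Your proposal is correct and follows essentially the same route as the paper: Cauchy--Schwarz to pass from $\ell_1$ to $\ell_2$, Parseval to move to the Fourier side, the convolution identity $\widehat{X+Y}=\widehat{X}\cdot\widehat{Y}$, the small-bias bound on the $Y$ factor, and Parseval plus the collision bound $\sum_x\Pr[X=x]^2\le 2^{-k}$ for the $X$ factor. The paper states these as two explicit claims (Parseval and the convolution identity for $\bias$) and then chains them exactly as you do; your remark about needing non-degeneracy of the trace pairing to identify the characters is the only point you make explicit that the paper leaves implicit.
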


Finally, we observe the following property about the noisy RS code. A proof can be found in Appendix~\ref{app:noisy-proof}.

\begin{theorem}[Noisy RS code is small-biased]\label{thm:noisy-code}
Let $\cH$ be a RS code over $\bbF_{2^\ell}$ with block length $n$ and rate $R$. For all integer $s\leq n$, consider the  following distribution
$$ D = \left\{\begin{gathered}
\mathsf{c}\draw\cH\\
\text{Sample a random }\cS\subseteq \{1,2,\ldots,n\}\text{ such that }\abs{\cS} = s\\
\forall i\in \cS,\text{ replace }\mathsf{c}_i\text{ with a random field element}\\
\text{Output }\mathsf{c}
\end{gathered}\right\}.$$
It holds that $D$ is $(1-R)^s$-small-biased.
\end{theorem}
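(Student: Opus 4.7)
The plan is to fix an arbitrary nonzero $\alpha \in \bbF_{2^\ell}^n$ and compute $\bias(D,\alpha)$ directly from the definition, exploiting the fact that in the sampling procedure the codeword $\mathsf{c}$, the subset $\cS$, and the uniform noise $\eta$ supported on $\cS$ are mutually independent. Because $\tuple{\mathsf{c}+\eta,\alpha} = \tuple{\mathsf{c},\alpha} + \tuple{\eta,\alpha}$ and the trace is $\bbF_2$-linear, the character $(-1)^{\tr{\tuple{\mathsf{c}+\eta,\alpha}}}$ factors as a product, and, after conditioning on $\cS$, the expectation splits into the product of an $\eta$-expectation (which depends on $\cS$) and an $\mathsf{c}$-expectation (which does not). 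The proof will then boil down to analyzing each factor.

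For the noise factor, I would argue that if $\alpha|_\cS = 0$ the inner product $\tuple{\eta,\alpha}$ is identically zero so the factor equals $1$, whereas if some $i^\star \in \cS$ has $\alpha_{i^\star} \neq 0$ then $\eta_{i^\star}\alpha_{i^\star}$ is uniform on $\bbF_{2^\ell}$, hence $\tr{\tuple{\eta,\alpha}}$ is a uniform bit in $\bbF_2$ and the factor vanishes. For the codeword factor, I would write $\mathsf{c} = mG$ for a uniform $m \in \bbF_{2^\ell}^k$, so that $\tuple{\mathsf{c},\alpha} = \tuple{m, G\alpha^\top}$; the same uniform-trace argument then shows that the factor equals $1$ when $G\alpha^\top = 0$ (equivalently $\alpha \in \cH^\perp$) and equals $0$ otherwise. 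Combining the two analyses gives the clean formula
\[
\bias(D,\alpha) \;=\; \pr_\cS[\alpha|_\cS = 0] \cdot \mathbb{I}[\alpha \in \cH^\perp].
\]

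To finish, I only need to bound this probability when $\alpha$ is a nonzero dual codeword. The key structural input is that the dual of a Reed-Solomon code is itself MDS (since any RS code is MDS and duals of MDS codes are MDS), so any nonzero element of $\cH^\perp$ has Hamming weight at least $k+1$, i.e.~at most $n-k-1$ zero coordinates. Hence
\[
\pr_\cS[\alpha|_\cS = 0] \;\leq\; \frac{\binom{n-k}{s}}{\binom{n}{s}} \;=\; \prod_{i=0}^{s-1}\frac{n-k-i}{n-i} \;\leq\; \left(\frac{n-k}{n}\right)^{s} \;=\; (1-R)^s,
\]
where the last inequality uses the elementary fact $\frac{n-k-i}{n-i} \leq \frac{n-k}{n}$ for every $0 \leq i < s$. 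The main potential obstacles are cosmetic rather than substantive: correctly handling the distinction between $\alpha \in \cH^\perp$ and $\alpha \notin \cH^\perp$ in the character computation, and citing the MDS property of the dual RS code for the final counting step. Neither appears to cause any real difficulty.
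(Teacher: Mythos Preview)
Your proof is correct. It follows the same two-part logic as the paper's---the codeword randomness kills some $\alpha$'s outright, and for the remaining ones the noise kills the character except on the event $\alpha|_\cS = 0$, whose probability is at most $(1-R)^s$---but the packaging differs slightly: the paper case-splits on whether $\alpha$ has support size at most $Rn$ and invokes the $k$-wise independence of a random RS codeword directly (Fact~\ref{fact:RS}), whereas you factor the bias exactly as $\mathbb{I}[\alpha \in \cH^\perp]\cdot \pr_\cS[\alpha|_\cS=0]$ and then use the MDS property of the dual to lower-bound the weight of any nonzero $\alpha \in \cH^\perp$ by $k+1$. The two viewpoints are equivalent (nonzero dual codewords have weight at least $k+1$ precisely because random primal codewords are $k$-wise independent on every set of $k$ coordinates), and your route has the small bonus of yielding the bias exactly rather than only an upper bound.
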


\fi

\section{Efficient learning could need more model parameters}
\label{app:eff-learn}

In this section, we formally prove the first part of Theorem \ref{thm:main-inf}, which is the separation result between the number of parameters needed by unbounded v.s. bounded learners.

\remove{
\Mnote{Do we still need the informal result here?}
\mingyuan{probably not.}
\paragraph{The main result, informally.} Let $\lambda \in \mathbb{N}$ be the security parameter. Let $n<\alpha<\beta$ have arbitrary polynomial dependence on $\lambda$ (e.g., $\alpha = \lambda^5,\ \beta=\lambda^{10},\ n=\lambda^{0.1}$). Assume the existence of one-way functions, we shall construct a learning problem that, at a high level, satisfies the following properties.
\begin{enumerate}
    \item The instances are of size $\Theta(n\log\lambda)$.
    \item There exists an information-theoretic learner that employs $\lambda$ parameters to (robustly) learn the task.  
    \item Any efficient learner needs $\Omega(\alpha)$ parameters to learn the task.
    \item Any efficient learner needs $\Omega(\beta/\log \lambda) $ parameters to robustly learn the task.
\end{enumerate}
}

\ifnum\version=\neurips
Our construction of the learning problem in presented in Construction~\ref{const:1'}.
\else

Our construction and theorems are formally stated as follows.

\fi

First, observe that instance size is approximately $\Theta((k+n)\cdot \ell) =\Theta(n\cdot\log\lambda)$ as the size of the sampler inputs $u_1$ and $u_2$ are sufficiently small. 
Our theorems prove the following.
First, in Theorem \ref{thm:1'} we establish the efficient (robust) learnability of the task of Construction~\ref{const:1'}, where  the efficient-learner variant requires more parameters. Then, in Theorem~\ref{thm:eff-learn'} we establish the lower bound on the number of parameters needed by an efficient learner. Finally, in Theorem~\ref{thm:eff-robust-learn'} we establish the lower bound on the number of parameters needed by efficient \emph{robust} learners.

In the rest of this section, we prove these theorems.

\ifnum\version=\neurips

\begin{theorem*}[Restatement of Theorem~\ref{thm:1'}]
An information-theoretic learner can (robustly) $\eps$-learn the task of Construction \ref{const:1'} with parameter size $2\lambda$ and sample complexity $ \Theta(\frac\lambda\eps )$.
Moreover, an efficient learner can (robustly) $\eps$-learn this task with parameter size $\alpha+\beta$ and sample complexity  $ \Theta(\frac{\alpha+\beta}\eps )$. 
\end{theorem*}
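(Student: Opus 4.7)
My plan is to exhibit, for both the information-theoretic and the efficient regime, an explicit parametrized family $\cC$ of classifiers together with a learner whose output is always a member of $\cC$ that \emph{robustly} fits every training example; an application of Theorem~\ref{thm:learnable} then immediately gives the stated sample complexity $\log|\cC|/\eps$ for robust risk $\leq\eps$. The template classifier in both cases takes an input $x' = ([\widetilde u_1],[\widetilde u_2],\widetilde m,\widetilde c)$, error-corrects $[\widetilde u_1]$ and $[\widetilde u_2]$ to recover $u_1, u_2$, subtracts from $\widetilde c$ its guess for $f_2(s)\big|_{\samp_2(u_2)}$, Reed--Solomon decodes the result to recover $m$, and finally outputs $\langle m, (\text{its guess for }f_1(s)\big|_{\samp_1(u_1)})\rangle$. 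The perturbed plaintext copy of $m$ that appears in the instance is ignored.

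For the information-theoretic learner I parametrize $\cC$ by pairs $(s_1,s_2)\in\zo^\lambda\times\zo^\lambda$ ($2\lambda$ parameters), using $f_2(s_1)$ and $f_1(s_2)$ as the two guesses inside the template. The (unbounded) learner brute-force searches for any $(s_1,s_2)$ whose classifier robustly fits the training sample; the true pair $(s_1^*,s_2^*)$ of the underlying distribution and hypothesis always works, so a consistent pair exists. For the efficient learner I instead parametrize $\cC$ by $(p_1,p_2)\in\zo^\alpha\times\zo^\beta$ and recover $(p_1,p_2)$ by linear algebra: subtracting $\Enc(m^{(i)})$ from $c^{(i)}$ on each training example directly reveals $f_2(s_1^*)\big|_{\samp_2(u_2^{(i)})}$, which I copy into the corresponding coordinates of $p_2$ (unseen coordinates may be set arbitrarily); and each label $y^{(i)}=\langle m^{(i)},f_1(s_2^*)\big|_{\samp_1(u_1^{(i)})}\rangle$ gives a single $\bbF_2$-linear equation on $p_1$, from which Gaussian elimination yields a consistent $p_1$ (the true $f_1(s_2^*)$ is always a feasible solution). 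Both recoveries run in $\poly(\lambda,|\cS|)$ time.

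The step requiring verification is that the classifier produced above actually robustly fits every training example. Since the adversary's total budget of $(1-R)n/2$ symbols in the $\bbF_{2^\ell}$-Hamming metric covers the \emph{whole} input, each subcodeword $[u_1],[u_2]$ and $\Enc(m)+f_2(s)\big|_{\samp_2(u_2)}$ is corrupted in at most $(1-R)n/2$ positions, which by construction is strictly below the decoding radius of the outer $[v]$-encodings and exactly at the Reed--Solomon decoding radius (since $R<1/3$). Hence $u_1,u_2,m$ are always recovered correctly, and the final inner product matches the true label. The one small point worth flagging is that, for the efficient learner, the classifier on training sample $i$ only accesses $p_2$ at coordinates $\samp_2(u_2^{(i)})$ and those are precisely the coordinates we filled in from that sample, so the arbitrary values placed at unseen coordinates cannot cause a training-time failure --- this lets us cleanly invoke Theorem~\ref{thm:learnable} with $\log|\cC|=\alpha+\beta$. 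I do not foresee any obstacle beyond these bookkeeping checks.
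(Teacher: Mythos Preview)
Your proposal is correct and follows essentially the same approach as the paper: both define the identical template classifier (decode $[u_1],[u_2]$, subtract the mask, RS-decode to recover $m$, then output the inner product), parametrize it by either $(s_1,s_2)\in\zo^{2\lambda}$ or $(P,Q)\in\zo^{\alpha+\beta}$, and invoke Theorem~\ref{thm:learnable} on the resulting finite class. Your write-up is in fact slightly more explicit than the paper's, which just says that finding a consistent $(P,Q)$ ``only requires linear operations''; you spell out that $Q$ is read off directly from $c^{(i)}-\Enc(m^{(i)})$ on the unperturbed training data and that $P$ is obtained by Gaussian elimination on the label equations, and you correctly note that unseen coordinates of $Q$ can be set arbitrarily without affecting robust fit on the training set.
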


\else

\subsection{Proof of Theorem~\ref{thm:1'}}

\fi

Since the learning task of Theorem \ref{thm:1'} has a finite hypothesis class, its learnability follows from the classical result of learning finite classes \citep{BenDBook}. Moreover, this can be done efficiently as this is a linear task. When it comes to learning \emph{robust} functions, one can also use the result of \cite{bubeck2019adversarial} for robustly learning finite classes.%
\footnote{Results for infinite classes could be found in subsequent works \cite{montasser2019vc}.} The learner of \cite{bubeck2019adversarial} simply uses the empirical-risk minimization, however this is done with respect to the \emph{robust} empirical risk. This learner is not always polynomial-time, even if the (regular) risk minimization can be done efficiently.
However, we would like to find \emph{robust} learners also \emph{efficiently}. The formal proof follows. 

\begin{proof}[Proof of Theorem \ref{thm:1'}]
Consider the set of functions $f_{s,s'}:\cX_\lambda\to\cY_\lambda$ for all $s,s'\in\zo^\lambda$ as follows.
\begin{enumerate}
    
    \item On input $x=(\widetilde{[u_1]},\widetilde{[u_2]},\widetilde{m},\widetilde{\mathsf{d}})$, it invokes the error-correcting decoding algorithm on $\widetilde{[u_1]}$ and $\widetilde{[u_2]}$ to find $u_1$ and $u_2$.
    
    \item 
    It uses $\widetilde{\mathsf{d}}+f_2(s')\vert_{\samp(u_2)}$ to get an encoding $\widetilde{\mathsf{c}}$ of $m$.
    
    \item It invokes the error-correcting decoding on $\widetilde{\mathsf{c}}$ to get $m$.

    \item It outputs $\tuple{m,f_1(s)\vert_{\samp(u_1)}}$.

\end{enumerate}

One of the function $f_{s,s'}$ will (perfectly) robustly fit the distribution since all the encodings $[u_1],[u_2],\Enc(m)$ tolerates $(1-R)n/2$ perturbation.
Since, there are $2^{2\lambda}$ such functions,
by Theorem~\ref{thm:learnable}, we conclude that an information-theoretic learner can $\eps$-learn this task with $2\lambda$ parameters and sample complexity $\Theta(\lambda/\eps)$.

Note that an efficient learner might not be able to find such a function $f_{s,s'}$ as it requires inverting a pseudorandom generator. However, an efficient learner can still learn using more samples and parameters as follows.
Consider the set of functions $f_{P,Q}:\cX_\lambda\to\cY_\lambda$ for all $P\in\zo^\alpha$ and $Q\in\zo^\beta$ as follows.
\begin{enumerate}
    
    \item On input $x=(\widetilde{[u_1]},\widetilde{[u_2]},\widetilde{m},\widetilde{\mathsf{d}})$, it invokes the error-correcting decoding algorithm on $\widetilde{[u_1]}$ and $\widetilde{[u_2]}$ to find $u_1$ and $u_2$.
    
    \item 
    It uses $\widetilde{\mathsf{d}}+Q\vert_{\samp(u_2)}$ to get an encoding $\widetilde{\mathsf{c}}$ of $m$.
    
    \item It invokes the error-correcting decoding on $\widetilde{\mathsf{c}}$ to get $m$.

    \item It outputs $\tuple{m,P\vert_{\samp(u_1)}}$.

\end{enumerate}

Similarly, one of the function $f_{P,Q}$ will (perfectly) robustly fit the distribution since all the encodings $[u_1],[u_2],\Enc(m)$ tolerates $(1-R)n/2$ perturbation.
Finding $f_{P,Q}$ that fits all the samples only requires linear operations and, hence, is efficiently learnable.
As there are $2^{\alpha+\beta}$ such functions, again
by Theorem~\ref{thm:learnable}, we conclude that an efficient learner can $\eps$-learn this task with $\alpha+\beta$ parameters and sample complexity $\Theta((\alpha+\beta)/\eps)$. To apply Theorem~\ref{thm:learnable}, we simply pretend that the hypothesis set is the larger set of $2^{\alpha+\beta}$ such functions, in which case our learner finds one of these $2^{\alpha+\beta}$ functions that perfectly matches with the training set with zero \emph{robust} empirical risk.
\end{proof}

\ifnum\version=\neurips

\begin{theorem*}[Restatement of Theorem~\ref{thm:eff-learn'}]
Any efficient learner that outputs models with $\leq \alpha/2$ parameters cannot $\eps$-learn $F_\lambda$ of Construction \ref{const:1'} for $\eps<1/3$.
\end{theorem*}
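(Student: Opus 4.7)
The plan is a two-step hybrid: first invoke PRG security to replace the pseudorandom objects by truly random strings, then argue an information-theoretic lower bound on risk via conditional min-entropy, the sampler, and the inner-product extractor. Suppose toward contradiction that a polynomial-time learner $L$ producing at most $\alpha/2$ output bits achieves risk below $1/3$ on $F_\lambda$. Since $L$ must $\eps$-learn for every $D_s \in \cD_\lambda$, by averaging it succeeds when $s$ is drawn uniformly from $\zo^\lambda$. Because $f_1$ and $f_2$ may, without loss of generality, be derived from a single PRG of stretch $\lambda \to \alpha + \beta$, the pair $(f_1(s), f_2(s))$ is computationally indistinguishable from an independent uniform pair $(P, Q) \in \zo^\alpha \times \zo^\beta$. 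Training the learner, drawing samples, drawing a test instance, and evaluating the test label are all polynomial-time operations, so switching to this ``uniform hybrid'' changes $L$'s expected risk by at most $\negl(\lambda)$. It therefore suffices to prove an $\Omega(1)$ lower bound on risk in the hybrid in which $P$ and $Q$ replace $f_1(s)$ and $f_2(s)$.

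In the hybrid, let $Z$ be the random variable of the model $L$ outputs from its training sample, so $Z$ takes at most $2^{\alpha/2}$ values. Since $P$ and $Q$ are independent, Lemma~\ref{lem:min-ent} gives $\widetilde{H}_\infty(P \mid Z, Q) \geq H_\infty(P) - \alpha/2 = \alpha/2$, and hence except with negligible probability over $(z, q) \draw (Z, Q)$ the residual min-entropy satisfies $H_\infty(P \mid Z = z, Q = q) \geq \alpha/3$. Fix any such typical pair $(z, q)$. The fresh randomness used to form the test instance $x^* = ([u_1^*], [u_2^*], m^*, \Enc(m^*) + Q\vert_{\samp_2(u_2^*)})$ and the test label $y^* = \tuple{m^*, P\vert_{\samp_1(u_1^*)}}$ is independent of $P$ conditioned on $q$. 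Applying Lemma~\ref{thm:samp} with our chosen $\kappa_1 = \Theta(1/\log \lambda)$ and $\kappa_2$, the joint distribution $(u_1^*, P\vert_{\samp_1(u_1^*)})$ is $\negl(\lambda)$-close to $(u_1^*, Y)$ for some $Y \in \zo^{k\ell}$ with $H_\infty(Y) \geq \Omega(k \cdot \ell)$; conditioning further on $u_2^*, m^*$ preserves this bound, as they are independent of $P$ and $u_1^*$.

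Finally, because $m^* \in \bbF_2^{k\ell}$ is uniform and independent of $P\vert_{\samp_1(u_1^*)}$ in the conditional distribution, Theorem~\ref{thm:ip} (inner product as an extractor) gives
\[
\big(z,\, q,\, x^*,\, \tuple{m^*, P\vert_{\samp_1(u_1^*)}}\big) \;\approx_{\negl(\lambda)}\; \big(z,\, q,\, x^*,\, U_{\zo}\big).
\]
Thus the label $y^*$ is negligibly close to a uniformly random bit given $L$'s entire view $(z, x^*)$, so even an unbounded classifier produced by $L$ must have risk at least $1/2 - \negl(\lambda) > 1/3$, contradicting our assumption. The main obstacle I expect is keeping careful track of the independence structure so that the min-entropy of $P$ survives two consecutive conditionings: first on the learner's $\alpha/2$-bit output $Z$ (absorbed by Lemma~\ref{lem:min-ent}) and then on the sampler seed and on the rest of the test instance (absorbed by Lemma~\ref{thm:samp}); once this bookkeeping is arranged, the inner-product extractor closes the argument cleanly.
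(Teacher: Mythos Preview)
Your proposal is correct and follows essentially the same two-step strategy as the paper: a PRG hybrid, followed by an information-theoretic argument via conditional min-entropy, the sampler lemma, and the inner-product extractor. The one noteworthy difference is that you replace both $f_1(s)$ and $f_2(s)$ by uniform strings $(P,Q)$, which forces the extra ``WLOG the two PRGs come from a single stretch-$(\alpha+\beta)$ PRG'' step and the conditioning on $Q$ in the entropy accounting; the paper instead simply fixes an arbitrary distribution $D_{s'}\in\cD_\lambda$ and swaps only the hypothesis-side string $f_1(s)\to P$, which sidesteps both complications and makes the reduction and the min-entropy bookkeeping slightly cleaner.
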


\else
\subsection{Proof of Theorem~\ref{thm:eff-learn'}}
\fi

\begin{proof}
We start by defining another learning problem $F_\lambda'$. This learning problem is identical to $F_\lambda$ for $\cX_\lambda$, $\cY_\lambda$, and $\cD_\lambda$. However, $\cH_\lambda'$  consists of all functions $h_P$ for all $P\in\zo^\alpha$, such that  
$$h_P(x) =  \left\langle m, \left(P\big\vert_{\mathsf{samp}_1(u_1)}\right) \right\rangle.$$
On a high level, our proof consists of two claims.
\begin{claim}\label{clm:eff-learn-1}
Fix any distribution $D_{s'}\in\cD_\lambda$. We consider a random hypothesis function $h_s$ and $h_P$, where $s\draw\zo^\lambda$ and $P\draw\zo^\alpha$. It holds that
    $$\Ex_{\cS \sim D_\lambda^n; f \sim L(\cS^{h_s}, \lambda)}[ \Risk(h_s, D_\lambda,f)]\quad \approx_{\negl(\lambda)}\quad  \Ex_{\cS \sim D_\lambda^n; f \sim L(\cS^{h_P}, \lambda)}[ \Risk(h_P, D_\lambda,f)].$$    
\end{claim}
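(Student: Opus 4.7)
\textbf{Proof sketch for Claim~\ref{clm:eff-learn-1}.} The plan is a standard reduction to the PRG security of $f_1$. For any string $R\in\zo^\alpha$, consider the ``hybrid'' hypothesis $h^R(x)=\langle m, R\vert_{\samp_1(u_1)}\rangle$, so that $h_s = h^{f_1(s)}$ and $h_P = h^P$. Define $\rho(R) \defeq \E_{S\sim D_{s'}^n,\; f\sim L(S^{h^R},\lambda)}[\Risk(h^R, D_{s'}, f)]$. The two quantities being compared in the claim are exactly $\E_{s\draw\zo^\lambda}[\rho(f_1(s))]$ and $\E_{P\draw\zo^\alpha}[\rho(P)]$, so it suffices to show these are negligibly close.

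Suppose for contradiction that they differ by some $\delta = 1/\poly(\lambda)$. I would construct a PPT distinguisher $\cA$ against the PRG $f_1$ that, on input $R\in\zo^\alpha$ and with $s'$ hardcoded as non-uniform advice (chosen to maximize the gap), proceeds as follows: (i)~sample $n$ training examples from $D_{s'}$, which is efficient since $f_2(s')$ can be precomputed once and the remaining randomness of $D_{s'}$ consists of sampling $u_1, u_2, m$; (ii)~label each training example $x$ with $h^R(x)$, using only $R$ together with the revealed $u_1, m$ in $x$; (iii)~run the assumed-efficient learner $L$ on the labeled set to obtain a classifier $f$; (iv)~estimate $\Risk(h^R, D_{s'}, f)$ by drawing $k = \Theta(\lambda/\delta^2)$ fresh samples from $D_{s'}$, labeling them with $h^R$, and computing the empirical error of $f$; (v)~output this empirical estimate. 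A Chernoff bound ensures that the expected output of $\cA$ is within $\delta/4$ of $\rho(R)$, so $\abs{\E[\cA(f_1(U_\lambda))] - \E[\cA(U_\alpha)]} \geq \delta/2$. Converting this real-valued gap into a bit-valued distinguishing advantage by picking an optimal threshold (equivalently, by randomized rounding) yields a PPT adversary with advantage $\Omega(\delta)$ against $f_1$, contradicting Definition~\ref{def:prg}.

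The main point to verify is that $\cA$ treats $R$ purely as a black-box bit-string: the labels $h^R(x) = \langle m, R\vert_{\samp_1(u_1)}\rangle$ are well-defined for any $R\in\zo^\alpha$ and make no reference to a PRG seed. Hence the entire pipeline of sampling, training, and Monte Carlo risk estimation is a single polynomial-time statistical test applied to $R$, which is precisely the kind of test that PRG security forbids. The only mild subtlety is the non-uniformity in hardcoding $s'$ (and the precomputed string $f_2(s')$), which is standard and compatible with the non-uniform version of Definition~\ref{def:prg}; alternatively one may average over $s'$ and push the quantifier inside to find a bad $s'$ to hardcode.
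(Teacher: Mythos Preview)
Your proof is correct and follows essentially the same reduction to the PRG security of $f_1$ as the paper: both observe that $h^R(x)$ is efficiently computable from $R$ alone, so the entire pipeline of sampling from $D_{s'}$, labeling, training with $L$, and testing is an efficient statistical test on $R$. The only difference is that the paper's distinguisher draws a \emph{single} fresh test point $x$ and outputs the indicator $\mathbb{I}(h^R(x)=f(x))$, whose expectation is already (one minus) the risk, so no Monte Carlo estimation, Chernoff bound, or thresholding step is needed; your version is correct but slightly more elaborate than necessary.
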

\begin{claim}\label{clm:eff-learn-2}
For any learner $L$ (with an arbitrary sample complexity) with $\leq \alpha/2$ parameters, we have
    $$\Ex_{\cS \sim D_\lambda^n; f \sim L(\cS^{h_P}, \lambda)}[ \Risk(h_P, D_\lambda,f)]>3/8.$$      
\end{claim}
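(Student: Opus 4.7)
The plan is to exploit the fact that a model with at most $\alpha/2$ bits of parameters cannot retain enough information about a uniformly random $P\in\zo^\alpha$ to reconstruct the inner-product label $\langle m,\, P\vert_{\samp_1(u_1)}\rangle$ with accuracy appreciably better than $1/2$. I would chain three ingredients: the entropy-loss bound of Lemma~\ref{lem:min-ent} for conditioning on the learner's output, the subsampling guarantee of Lemma~\ref{thm:samp}, and the inner-product extractor of Theorem~\ref{thm:ip}. This is a purely information-theoretic argument — the $\alpha/2$-bit bound on $L$'s output alone carries the proof, and no computational or sample-complexity hypothesis is needed.

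First, let $Z\defeq L(\cS^{h_P},\lambda)$ denote the (random) model output by the learner; by hypothesis $\abs{\Supp(Z)}\leq 2^{\alpha/2}$. Since $P$ is uniform on $\zo^\alpha$, Lemma~\ref{lem:min-ent} gives $\widetilde{H}_\infty(P\mid Z)\geq \alpha - \alpha/2 = \alpha/2$. Setting $\eps=1/16$, the second part of the same lemma yields $H_\infty(P\mid Z=z)\geq \alpha/2-4$ on all but an $\eps$-fraction of outcomes $z$, so for a typical $z$ the conditional distribution $P\mid(Z=z)$ has min-entropy rate essentially $1/2$.

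Second, I would apply Lemma~\ref{thm:samp} to the source $P\mid(Z=z)$ with the parameter $\kappa_1=\Theta(1/\log\lambda)$ fixed in Construction~\ref{const:1'}. This yields that, for uniform $u_1\in\zo^{r_1}$, the joint distribution $\big(u_1,\,(P\mid Z=z)\vert_{\samp_1(u_1)}\big)$ is within negligible statistical distance of $(U_{r_1}, Y_z)$, where $Y_z\in\zo^{k\ell}$ has min-entropy at least $(1/2-\kappa_1-o(1))\cdot k\ell \gg 2\log(1/\negl(\lambda))$, independently of the seed. Since the instance component $m\in\bbF_2^{k\ell}$ is sampled uniformly and independently of $(P,Z,u_1,u_2)$, Theorem~\ref{thm:ip} then gives
\[
\big(u_1,\, m,\, \langle m,\, P\vert_{\samp_1(u_1)}\rangle\big) \;\approx_{\negl(\lambda)}\; \big(u_1,\, m,\, U_{\zo}\big)
\]
conditional on $Z=z$. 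The remaining portions of the test instance — $[u_1]$, $[u_2]$, and $\Enc(m)+f_2(s')\vert_{\samp_2(u_2)}$ — depend only on $(u_1,u_2,m)$ and the fixed parameter $s'$, hence are independent of $P$ and contribute no leverage. Combining everything, the full classifier view $(Z,x)$ is statistically independent (up to $\negl(\lambda)$) of the true label $h_P(x)$, so every predictor $f$ matches the label on at most $1/2+\negl(\lambda)$ fraction of instances. Averaging the conditional risk over typical and atypical $z$ yields
\[
\Ex[\Risk(h_P, D_\lambda, f)] \;\geq\; (1-\eps)\cdot\big(1/2-\negl(\lambda)\big) \;\geq\; 15/32 \;>\; 3/8
\]
for all sufficiently large $\lambda$.

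The main bookkeeping obstacle is verifying that the chain of entropy losses — the $\alpha/2$ bits lost to conditioning on $Z$, the $\log(1/\eps)=O(1)$ bits lost by restricting to a typical $z$, the $\kappa_1$ rate lost to the sampler, and the $2\log(1/\eps')$ bits demanded by the inner-product extractor for error $\eps'$ — still leave a comfortable residual min-entropy. Once one notes that $k\ell=\Theta(n\log\lambda)$ grows polynomially in $\lambda$ while $\kappa_1=\Theta(1/\log\lambda)$ is tiny, this reduces to a direct calculation; the parameter choices in Construction~\ref{const:1'} are tuned precisely so that everything fits.
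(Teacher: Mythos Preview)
Your proof is correct and follows essentially the same approach as the paper: condition on the model $Z$, use Lemma~\ref{lem:min-ent} to lower-bound the residual min-entropy of $P$ on typical $z$, invoke the sampler guarantee of Lemma~\ref{thm:samp}, and finish with the inner-product extractor Theorem~\ref{thm:ip}. The only cosmetic difference is the choice of cutoff for ``typical'' $z$ --- the paper sets the threshold at $\alpha/4$ min-entropy (giving a negligible bad-set probability) whereas you set $\eps=1/16$ and retain nearly $\alpha/2$ min-entropy; both choices comfortably clear the $3/8$ bound.
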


Note that, if both claims are correct, the theorem statement is true. 

We first show Claim~\ref{clm:eff-learn-1}. 
Observe that, given the string $P$, one can compute the function $h_P(x)$ efficiently. Now, given a string $P$, which is either a pseudorandom string (i.e., $P\draw f_1(U_\lambda)$) or a truly random string (i.e., $P\draw\zo^\alpha$). Consider the following distinguisher
$$ \left\{\begin{gathered}
\cS\draw D_{s'}^n,\  f\draw L(\cS^{h_P},\lambda),\ x\draw D_{s'}\\
\text{Output }\mathbb{I}(h_P(x)=f(x))
\end{gathered}\right\}.$$
If $P$ is pseudorandom, the probability that the distinguisher outputing 1 is $$\Ex_{\cS \sim D_\lambda^n; f \sim L(\cS^{h_s}, \lambda)}[ \Risk(h_s, D_\lambda,f)];$$ if $P$ is truly random, the probability that the distinguisher outputs 1 is $$\Ex_{\cS \sim D_\lambda^n; f \sim L(\cS^{h_P}, \lambda)}[ \Risk(h_P, D_\lambda,f)].$$ Therefore, if Claim~\ref{clm:eff-learn-1} does not hold, we break the pseudorandom property of the PRG.



It remains to prove Claim~\ref{clm:eff-learn-2}.

Since $P$ is sampled uniformly at random, we have $H_\infty(P)=\alpha$.
Let $Z$ denote the random variable $L(\cdot,\lambda)$, i.e., the model learned by the learner. Since the learner's output model employs $\leq \alpha/2$ parameters, we have $\Supp(Z)\leq 2^{\alpha/2}$.
And by Lemma~\ref{lem:min-ent}, we must have $$\widetilde{H}_\infty(P\vert Z)\geq \alpha/2.$$
Now, let us define the set%
\footnote{This set is called ``good'' as it is good for the learner.}
$$\mathsf{Good}=\{z\in\Supp(Z) \;:\; H_\infty(P \vert Z=z)\leq \alpha/4\}.$$
Lemma~\ref{lem:min-ent} implies that
$$\prob{Z\in\mathsf{Good}}\leq 2^{-\alpha/4}=\negl(\lambda).$$
In the rest of the analysis, we conditioned on the event that $Z\notin\mathsf{Good}$, which means $H_\infty(P\vert Z=z)>\alpha/4$.
Now, let $x= ([u_1], [u_2], m, \Enc(m)+  (f_2(s')\big\vert_{\mathsf{samp}_2(u_2)} ) ) $ be the test sample.
Since $P $ has min-entropy rate $> 1/4$, the property of the sampler (Lemma~\ref{thm:samp}) guarantees that there exists a distribution $D$ such that
$$H_\infty(D)\geq \left(\frac14-\kappa_1\right)\cdot k\ell>\frac15\cdot k\ell$$
and
$$\sd{(u_1, D) \ ,\  (u_1, P\vert_{\samp_1(u_1)}) }\leq \exp(-\Theta(\alpha\kappa_1))+\exp\left(-n^{\kappa_2}\right)=\negl(\lambda).$$
Recall that $x=([u_1], [u_2], m, \Enc(m)+  (f_2(s')\big\vert_{\mathsf{samp}_2(u_2)} ) )$ and $y=h_P(x) = \langle m,  (P\big\vert_{\mathsf{samp}_1(u_1)} )  \rangle $. Consequently,
\begin{align*}
    &\; \sd{((x,z),y) \;,\; ((x,z),U_{\zo})}\\
=&\; \sd{((u_1,m,z),y) \;,\; ((u_1,m,z),U_{\zo}}\tag{as $u_2$ is independent of $y$}\\
\leq&\; \sd{\Big((u_1,m,z), \tuple{m,D}\Big) \;,\; ((u_1,m,z), U_{\zo})} + \negl(\lambda) \tag{as $\sd{P_{\samp_1(u_1)}\vert Z=z \ ,\  D}\leq\negl(\lambda)$}\\
\leq&\; 2^{-k\ell/10} +\negl(\lambda) 
=\negl(\lambda). \tag{Theorem~\ref{thm:ip} and $H_\infty(D)>\frac15\cdot k\ell$}
\end{align*}
Therefore, in the learner's view $(x,z)$, $y$ is statistically $\negl(\lambda)$-close to uniform.
Therefore,
\begin{align*}
&\;\Ex_{\cS \sim D_\lambda^n; f \sim L(\cS^{h_P}, \lambda)}[ \Risk(h_P, D_\lambda,f)] \\
\geq&\; \Pr_{\cS \sim D_\lambda^n; f \sim L(\cS^{h_P}, \lambda)}[ z\in \mathsf{Good}] \\
&\; +\Pr_{\cS \sim D_\lambda^n; f \sim L(\cS^{h_P}, \lambda)}[ z\notin \mathsf{Good}]\cdot \Ex_{\cS \sim D_\lambda^n; f \sim L(\cS^{h_P}, \lambda)}[ \Risk(h_P, D_\lambda,f)\vert  z\notin \mathsf{Good}] \\
\geq&\; \negl(\lambda)+(1- \negl(\lambda))\cdot \left(\frac12 
 - \negl(\lambda)\right) >3/8.
\end{align*}
This shows Claim~\ref{clm:eff-learn-2} and completes the proof of the theorem.
\end{proof}

\ifnum\version=\neurips

\begin{theorem*}[Restatement of Theorem~\ref{thm:eff-robust-learn'}]
There exists some constant $c$ such that the following holds. In the presence of an adversary that may perturb $(1-R)n/2$ symbols, any efficient learner for the task of Construction \ref{const:1'} that outputs a model with $c\cdot \beta /\log\lambda$ parameters cannot $\eps$-robustly learn $F_\lambda$ for $\eps<1/3$.
\end{theorem*}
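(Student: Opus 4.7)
The plan is to mirror the proof of Theorem~\ref{thm:eff-learn'}, but now exploiting the min-entropy of $f_2(s)$ (the length-$\beta$ secret used in the ciphertext mask) rather than of $f_1(s)$; the adversary will use the slack in the decoding budget of $\Enc$ to make the perturbed ciphertext look uniformly random, while spending the rest of its Hamming budget on wiping the plaintext copy of $m$. First I would introduce, as in the proof of Theorem~\ref{thm:eff-learn'}, an auxiliary hypothesis class whose functions $h_{P,Q}$ are indexed by $P\in\zo^\alpha$ and $Q\in\zo^\beta$, and use the (joint) PRG security of $f_1$ and $f_2$ to conclude that, up to a $\negl(\lambda)$ change in the adversarial risk, it suffices to prove the lower bound in the "uniform" task where $(P,Q)\draw U_\alpha\times U_\beta$. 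The whole experiment (sample training, run the efficient learner $L$ to obtain $Z$, draw a fresh test point, run the explicit and efficient adversary, evaluate $f_Z$) is polynomial time, so this hybrid is valid.

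Next I would carry out the information-theoretic lower bound in the uniform task against an explicit adversary $A$. Since $Z$ uses at most $c\beta/\log\lambda$ bits, Lemma~\ref{lem:min-ent} yields $\widetilde{H}_\infty(Q\mid Z)\geq \beta-c\beta/\log\lambda$; its second part then gives $H_\infty(Q\mid Z=z)\geq (1-O(c/\log\lambda))\cdot\beta$ for a $1-\negl(\lambda)$ fraction of "good" $z$. For each good $z$, the sampler property (Lemma~\ref{thm:samp}) with $\kappa_1=\Theta(1/\log\lambda)$ produces a distribution $Y$ over $\bbF_{2^\ell}^n$ of min-entropy rate $\geq 1-O(1/\log\lambda)$ that is $\negl(\lambda)$-close to $Q\vert_{\samp_2(u_2)}$ jointly with $u_2$. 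The adversary $A$ I use is the efficient strategy that, on input $x=([u_1],[u_2],m,d)$: (i) replaces the plaintext $m$ by a fresh uniform $m'\in\bbF_{2^\ell}^k$ (Hamming cost $\leq k=Rn$), (ii) picks a uniformly random subset $\cT$ of the ciphertext coordinates of size $(1-3R)n/2$ and replaces each $d_i$ for $i\in\cT$ by a fresh uniform field element (cost $(1-3R)n/2$), and (iii) leaves $[u_1],[u_2]$ untouched; this uses exactly the allowed budget $(1-R)n/2$, which is positive since $R<1/3$.

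By Theorem~\ref{thm:noisy-code}, the distribution $\Enc(m)+\eta$, where $\eta$ is $A$'s ciphertext noise supported on the random subset $\cT$, is $(1-R)^{(1-3R)n/2}$-small-biased over $\bbF_{2^\ell}^n$. Rewriting the perturbed ciphertext as $\widetilde{d}=\bigl(\Enc(m)+\eta\bigr)+Q\vert_{\samp_2(u_2)}$ and applying the Small-biased Masking Lemma (Theorem~\ref{thm:masking}) with $Y$ standing in for $Q\vert_{\samp_2(u_2)}$, the statistical distance of $\widetilde{d}$ from uniform over $\bbF_{2^\ell}^n$ is at most $2^{(n\ell-H_\infty(Y))/2-1}\cdot(1-R)^{(1-3R)n/2}=2^{O(cn)}\cdot 2^{-\Theta(n)}=\negl(\lambda)$ provided $c$ is chosen as a sufficiently small constant depending only on $R$. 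Together with the fact that $m'$ is uniform and independent of $m$, this shows that conditioned on a good $z$ and on $(u_1,u_2)$, the classifier's view $x'=([u_1],[u_2],m',\widetilde{d})$ is $\negl(\lambda)$-close to a distribution entirely independent of the true $m$.

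Finally, the label $y=\tuple{m,P\vert_{\samp_1(u_1)}}$ depends on the fresh uniform test-point values $m$ and $u_1$, which are drawn independently of the model $Z$. Since $P$ is uniform and independent of $u_1$, $\Pr_{P}[P\vert_{\samp_1(u_1)}=0]=2^{-k\ell}=\negl(\lambda)$ for every fixed $u_1$; outside this negligible event $\tuple{m,P\vert_{\samp_1(u_1)}}$ is a uniform bit over the fresh $m$. Combined with the independence of $x'$ from $m$ established above, any classifier $f_Z$ satisfies $\Pr[f_Z(x')=y]\leq 1/2+\negl(\lambda)$, so the adversarial risk is at least $1/2-\negl(\lambda)>1/3$; the PRG hybrid then transfers this bound back to $F_\lambda$. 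The hardest step is the constant balancing in the masking calculation: because $\ell=\Theta(\log\lambda)$, any $\Omega(1/\log\lambda)$ deficit in the min-entropy rate of $Q\vert_{\samp_2(u_2)}$ blows up into a $2^{\Theta(n)}$ factor in the masking bound, and this is precisely what forces the parameter lower bound to be $c\beta/\log\lambda$ (rather than $\Theta(\beta)$), with $c$ tuned against $R$ and $\kappa_1$.
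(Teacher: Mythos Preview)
Your proposal is correct and follows essentially the same route as the paper: a PRG hybrid reducing to a uniform-$Q$ task, the identical explicit adversary (overwrite $m$, randomize a $(1-3R)n/2$-subset of the ciphertext), and the same chain of Lemma~\ref{thm:samp} $\to$ Theorem~\ref{thm:noisy-code} $\to$ Theorem~\ref{thm:masking} to show the perturbed ciphertext is near-uniform, with the $\ell=\Theta(\log\lambda)$ balancing forcing the $c\beta/\log\lambda$ threshold. The only cosmetic difference is that you hybridize both $f_1$ and $f_2$ (the paper fixes the hypothesis $h_{s'}$ and hybridizes only $f_2$), which lets you finish via $\Pr[P\vert_{\samp_1(u_1)}=0]=2^{-k\ell}$ rather than relying on a particular $s'$; this is harmless and arguably a bit cleaner.
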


\else
\subsection{Proof of Theorem~\ref{thm:eff-robust-learn'}}
\fi

\begin{proof}

The high-level structure of the proof is similar to the proof of Theorem~\ref{thm:eff-learn'}. We consider a new learning problem $F_\lambda'$ that has the same $\cX_\lambda$, $\cY_\lambda$, and $\cH_\lambda$. However, $\cD_\lambda$ consists of all distribution $D_Q$ for all $Q\in\zo^\beta$, where the distribution $D_Q$ is
$$D_Q=\Big([u_1],\; [u_2],\; m,\; \Enc(m)+ \left(Q\big\vert_{\mathsf{samp}_2(u_2)}\right)\Big).$$
The proof consists of two claims.
\begin{claim}\label{clm:rob-learn-1}
Fix a hypothesis $h_{s'}\in\cH_\lambda$. We consider a random distribution over $\cD_\lambda$ and $\cD_\lambda'$. That is, $D_s$ and $D_Q$ are sampled with $s\draw\zo^\lambda$ and $Q\draw\zo^\beta$. It holds that
    $$\Ex_{\cS \sim D_s^n; f \sim L(\cS^{h_{s'}}, \lambda)}[ \Risk(h_{s'}, D_s,f)]\quad \approx_{\negl(\lambda)}\quad  \Ex_{\cS \sim D_Q^n; f \sim L(\cS^{h_{s'}}, \lambda)}[ \Risk(h_{s'}, D_Q,f)].$$    
\end{claim}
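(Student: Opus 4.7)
The plan is to prove Claim~\ref{clm:rob-learn-1} by a standard PRG-indistinguishability reduction, mirroring the proof of Claim~\ref{clm:eff-learn-1}. I would construct an efficient distinguisher $\cD$ against the PRG $f_2$ as follows. The hypothesis $h_{s'}$ and the learner $L$ (together with its sample count $n$) are hardwired into $\cD$. On input $T \in \zo^\beta$, $\cD$ first draws $n$ independent training examples of the form
$$x^{(i)} \;=\; \Bigl([u_1^{(i)}],\; [u_2^{(i)}],\; m^{(i)},\; \Enc(m^{(i)}) + T\bigl\vert_{\samp_2(u_2^{(i)})}\Bigr),$$
with freshly sampled $u_1^{(i)}, u_2^{(i)}, m^{(i)}$, labels each by $h_{s'}(x^{(i)})$, runs $L$ on the labeled sample set to obtain a classifier $f$, samples one additional test example $x$ by the same rule using $T$, and outputs the indicator $\mathbb{I}[f(x) \neq h_{s'}(x)]$.

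By construction, when $T = f_2(s)$ for $s \draw \zo^\lambda$, the joint distribution of training and test samples is exactly that of $D_s$, so $\Pr[\cD(T) = 1]$ equals the left-hand side of Claim~\ref{clm:rob-learn-1}. When $T \draw \zo^\beta$ is uniform, the same joint distribution is exactly $D_Q$, so $\Pr[\cD(T) = 1]$ equals the right-hand side. The pseudorandomness of $f_2$ (Definition~\ref{def:prg}) therefore bounds the difference of these two probabilities by $\negl(\lambda)$, which is precisely the claim.

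The only thing to verify is that $\cD$ is polynomial-time. This is immediate because $L$ is polynomial-time (so it produces a polynomial-size classifier $f$ that can be evaluated in polynomial time), the RS encoder $\Enc$ and the outer error-correcting encoder $[\cdot]$ are polynomial-time, the sampler $\samp_2$ is polynomial-time by Lemma~\ref{thm:samp}, and $h_{s'}$ is efficiently computable once $s'$ is given, since it only requires evaluating $f_1$ on $s'$ and a single inner product.

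There is no essential obstacle here: the argument is a textbook PRG reduction and is entirely analogous to Claim~\ref{clm:eff-learn-1}. The conceptual work of the proof of Theorem~\ref{thm:eff-robust-learn'} instead lies in the companion claim, where one uses Lemma~\ref{lem:min-ent} together with the sampler guarantee and the small-biased masking lemma (Theorem~\ref{thm:masking}, Theorem~\ref{thm:noisy-code}) to argue that when $Q$ is truly uniform and the learner outputs only $c\beta/\log\lambda$ parameters, the noisy codeword $\Enc(m) + Q\vert_{\samp_2(u_2)}$ remains statistically close to uniform given the learner's model, so that $m$ is hidden and the label $\tuple{m, f_1(s')\vert_{\samp_1(u_1)}}$ cannot be reliably recovered after adversarial perturbation of the plaintext copy of $m$.
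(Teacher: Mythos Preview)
Your proposal is correct and takes essentially the same approach as the paper. The paper's proof of this claim is a two-sentence sketch that observes one can efficiently sample from $D_Q$ given $Q$ and then appeals to the PRG security of $f_2$ exactly as in Claim~\ref{clm:eff-learn-1}; you have simply written that reduction out explicitly, including the efficiency verification.
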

\begin{claim}\label{clm:rob-learn-2}
For any learner $L$ (with an arbitrary sample complexity) with $\leq c\cdot \beta/\log\lambda$ parameters, it holds that
    $$\Ex_{\cS \sim D_Q^n; f \sim L(\cS^{h_{s'}}, \lambda)}[ \Risk(h_{s'}, D_Q,f)]>3/8.$$     
\end{claim}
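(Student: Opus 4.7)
Claim~\ref{clm:rob-learn-1} will follow from a PRG reduction against $f_2$ almost identical to the one in the proof of Theorem~\ref{thm:eff-learn'}. Given a candidate string $W \in \zo^\beta$ (either $f_2(U_\lambda)$ or $U_\beta$), a PPT distinguisher simulates training samples from $D_W$ with labels computed from the fixed hypothesis $h_{s'}$ (whose seed is hard-coded), invokes $L$, samples a fresh test $(x, h_{s'}(x))$, runs the explicit efficient adversary constructed below for Claim~\ref{clm:rob-learn-2} to obtain $x'$, and outputs $\mathbb{I}[f(x') = h_{s'}(x)]$. Since every step is PPT, any non-negligible gap between the two risks would break the pseudorandomness of $f_2$.

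For Claim~\ref{clm:rob-learn-2}, let $Z \draw L(\cdot, \lambda)$ denote the learned model. Since $|\Supp(Z)| \leq 2^{c\beta/\log\lambda}$ and $Q$ is uniform in $\zo^\beta$, Lemma~\ref{lem:min-ent} yields $\widetilde{H}_\infty(Q \mid Z) \geq \beta(1 - c/\log\lambda)$, and its second part shows that with overwhelming probability over $z \draw Z$ we have $H_\infty(Q \mid Z = z) \geq \beta(1 - O(c/\log\lambda))$. Instantiating the sampler $\samp_2$ via Lemma~\ref{thm:samp} with $\kappa_1 = \Theta(1/\log\lambda)$, the distribution $(u_2, Q\vert_{\samp_2(u_2)})$ conditioned on any typical $z$ is $\negl(\lambda)$-close to $(u_2, Y)$ for some $Y$ over $\bbF_{2^\ell}^n$ with $H_\infty(Y) \geq n\ell(1 - O(1/\log\lambda))$.

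I will then analyze the following adversary. Given an honest input $x = ([u_1], [u_2], m, \Enc(m) + Q\vert_{\samp_2(u_2)})$, the adversary samples a fresh uniform $m' \in \bbF_{2^\ell}^k$, a random subset $\cS \subseteq \{1,\dots,n\}$ of size $s = (1-3R)n/2$, and fresh uniform $r_i \in \bbF_{2^\ell}$ for $i \in \cS$; it outputs $x'$ by replacing $m$ with $m'$ and the $i$-th symbol of the encoded part with $r_i$ for each $i \in \cS$. The number of tampered coordinates is at most $\HD(m, m') + s \leq k + (1-3R)n/2 = (1-R)n/2$, staying within budget, and the adversary is efficient. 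The perturbed encoded part equals $\Enc(m) + \delta + Q\vert_{\samp_2(u_2)}$ where $\delta$ is supported on $\cS$ and uniform there. By Theorem~\ref{thm:noisy-code}, $\Enc(m) + \delta$ (over uniform $m$ and random $\cS$) is $(1-R)^s$-small-biased, and the masking lemma (Theorem~\ref{thm:masking}) with $X = Q\vert_{\samp_2(u_2)}$ (independent of $\Enc(m) + \delta$) then yields
$$\sd{\;(Z, u_1, u_2, x')\;,\;(Z, u_1, u_2, [u_1], [u_2], U_{k\ell}, U_{n\ell})\;} \leq 2^{O(n\ell/\log\lambda)} \cdot (1-R)^{\Theta(n)} = \negl(\lambda),$$
provided $c$ and the ratio $\ell/\log\lambda$ are chosen sufficiently small relative to $\log(1/(1-R))$.

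Consequently $f(x')$ is, up to $\negl(\lambda)$ error, a function only of $(Z, u_1, u_2)$ and independent of the true $m$. Since $h_{s'}(x) = \langle m, f_1(s')\vert_{\samp_1(u_1)}\rangle$ is, for uniform $m$, a uniform bit whenever $f_1(s')\vert_{\samp_1(u_1)} \neq 0$, an averaging argument lets us fix an $s'$ for which this holds for all but a negligible fraction of $u_1$. Therefore $\Pr[f(x') = h_{s'}(x)] \leq 1/2 + \negl(\lambda)$, giving robust risk $\geq 1/2 - \negl(\lambda) > 3/8$, which establishes the claim and hence the theorem. The main obstacle is the quantitative parameter balancing in the masking bound: the combined entropy deficit from the finite model and the sampler amounts to $\Theta(n\ell/\log\lambda)$ bits, which must be absorbed by the exponentially small bias $(1-R)^{\Theta(n)}$, and it is precisely this arithmetic that forces the parameter budget in the theorem to be at most $c\beta/\log\lambda$ rather than $c\beta$.
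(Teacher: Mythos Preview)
Your proposal is correct and takes essentially the same approach as the paper: condition on a typical model $z$ so that $Q$ retains min-entropy rate $1-O(1/\log\lambda)$, push this through the sampler to $Q\vert_{\samp_2(u_2)}$, combine the $(1-R)^{\Theta(n)}$-small-bias of the noisy RS codeword (Theorem~\ref{thm:noisy-code}) with the masking lemma (Theorem~\ref{thm:masking}) to make the perturbed fourth block near-uniform, and conclude that the true label is a fresh coin independent of the classifier's view. Your handling of the final step---fixing $s'$ by averaging so that $f_1(s')\vert_{\samp_1(u_1)}\neq 0$ with overwhelming probability over $u_1$---is in fact slightly more careful than the paper, which simply asserts the corresponding statistical-distance bound without further comment.
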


Note that these two claims prove the theorem. To see Claim~\ref{clm:rob-learn-1}, observe that given a string $Q$, we can sample efficiently from $D_Q$. 
Analogous to the proof of Claim~\ref{clm:eff-learn-1}, if Claim~\ref{clm:rob-learn-1} does not hold, we may break the pseudorandom property of the PRG using this (efficient) learner $L$.

It remains to prove Claim~\ref{clm:rob-learn-2}.

Let $Z$ denote the random variable $L(\cdot,\lambda)$, i.e., the parameter of the learner.
Since $Q$ is uniformly random, we have
$$H_\infty(Q\vert Z)\geq (1-c/\log\lambda)\beta.$$
Let us define the set
$$\mathsf{Good}=\{z\in\Supp(Z) \;:\; H_\infty(Q \vert Z=z)\leq (1-2c/\log\lambda)\beta\}.$$
Lemma~\ref{lem:min-ent} implies that
$$\prob{Z\in\mathsf{Good}}\leq 2^{-c\beta/\log\lambda}=\negl(\lambda).$$
In the rest of the analysis, we conditioned on the event that $Z\notin\mathsf{Good}$, which means $H_\infty(Q\vert Z=z)>(1-2c/\log\lambda)\beta$.

Now, we consider the following adversary $A$ that perturbs $(1-R)n/2$ symbols. Given a test instance $(x,y)$, where
$$x = \Big([u_1],\ [u_2],\ m,\ \Enc(m)+ \left(Q\big\vert_{\mathsf{samp_2}(u_2)}\right)\Big),$$
the adversary will do the following.
\begin{itemize}
    \item Replace $m$ with a uniformly random string. This costs a budget of $Rn$.

    \item Samples a random subset $\cT\subseteq\{1,2,\ldots,n\}$ of size $ (1-3R)n/{2}$. It adds noises to $\Enc(x)+ \left(Q\big\vert_{\mathsf{samp_2}(u_2)}\right)$ at precisely those indices from $S$. This costs a budget of $(1-3R)n/{2}$.

    For simplicity, let us denote the distribution of this noise by $\rho$. That is, $\rho$ is a distribution over $ \bbF_{2^\ell}^n$ such that it is 0 everywhere except for a random subset $\cT$ and for those $i\in \cT$, $\rho_i$ is uniformly random.
\end{itemize}

We now argue that the perturbed instance is statistically $\negl(\lambda)$-close to the distribution
$$\Big([u_1],\ [u_2],\ U_{{k\cdot\ell}},\  U_{{n\cdot\ell}}\Big).$$
It suffices to prove that $\Enc(m)+ \left(Q\big\vert_{\mathsf{samp_2}(u_2)}\right)+\rho$ is close to the uniform distribution.
By Theorem~\ref{thm:noisy-code}, $\Enc(m)+\rho$ is $(1-R)^{\frac{(1-3R)n}2}$-small-biased.

Furthermore, since $Q$ has min-entropy rate $> (1-2c/\log\lambda)$, the property of the sampler (Lemma~\ref{thm:samp}) guarantees that there exists a distribution $D$ such that
$$H_\infty(D)\geq (1-2c/\log\lambda-\kappa_1)\cdot n\ell>(1-3c/\log\lambda )\cdot n\ell.$$
and
\begin{equation}
\sd{(u_2, D) \ ,\  (u_2, Q\vert_{\samp_2(u_2)}) }\leq \exp(-\Theta(\beta \kappa_1))+\exp\left(-n^{\kappa_2}\right)=\negl(\lambda).\label{eq:1}
\end{equation}

Finally, by Theorem~\ref{thm:masking}, we have
$(\Enc(m)+\rho)+ D$ is
$$2^{\frac{3cn\ell}{2\log\lambda}-1}\cdot (1-R)^{\frac{(1-3R)n}2} $$
close to the uniform distribution.
Observe that as long as 
$$c < \frac{\log\lambda}{3\ell}\cdot(1-3R)\log(1/(1-R))=\Theta(1),$$
the closeness is negligible in $\lambda$.
Overall,
\begin{align*}
&\;\sd{\Enc(m)+Q\vert_{\samp_2(u_2)}+\rho\;,\; U_n}\\
\leq&\; \sd{\Enc(m)+Q\vert_{\samp_2(u_2)}+\rho\;,\;\Enc(m)+D+\rho}+ \sd{\Enc(m)+D+\rho\;,\; U_n}\tag{Triangle inequality}\\
\leq&\; \negl(\lambda)+\negl(\lambda) = \negl(\lambda).\tag{Equation~\ref{eq:1} and Theorem~\ref{thm:noisy-code} }
\end{align*}

Therefore, given a test instance $x$ and the perturbed input $x'$, we have
\begin{align*}
&\; \sd{((x',z),y) \ ,\  ((x',z),U_{\zo})}\\
\leq &\;\sd{((u_1,u_2,U_{k\ell},U_{n\ell}),y) \ ,\  ((u_1,u_2,U_{k\ell},U_{n\ell}),U_{\zo})} +\negl(\lambda)\\
=&\; \sd{\left(u_1,\tuple{m,f_1(s')\vert_{\samp_1(u_1)}}\right) \ ,\ (u_1,U_{\zo})}+\negl(\lambda)\\
=&\;\negl(\lambda).
\end{align*}
Hence, when $z\notin\mathsf{Good}$, given a perturbed test input $x'$, the correct label $y$ is information-theoretically unpredicatable from the learner with $\negl(\lambda)$ advantage.

Putting everything together, we have
\begin{align*}
&\;\Ex_{\cS \sim D_Q^n; f \sim L(\cS^{h_{s'}}, \lambda)}[ \Risk(h_{s'}, D_Q,f)] \\
\geq&\; \Pr_{\cS \sim D_Q^n; f \sim L(\cS^{h_{s'}}, \lambda)}[ z\in \mathsf{Good}] \\
&\; +\Pr_{\cS \sim D_Q^n; f \sim L(\cS^{h_{s'}}, \lambda)}[ z\notin \mathsf{Good}]\cdot \Ex_{\cS \sim D_Q^n; f \sim L(\cS^{h_{s'}}, \lambda)}[ \Risk(h_{s'}, D_Q,f)\vert  z\notin \mathsf{Good}] \\
\geq&\; \negl(\lambda)+(1- \negl(\lambda))\cdot \left(\frac12 
 - \negl(\lambda)\right) >3/8.
\end{align*}

This completes the proof of the claim and the entire theorem.
\end{proof}

\section{Computationally robust learning could need fewer parameters}
\label{app:eff-adv}
In this section, we formally prove Part 2 of Theorem \ref{thm:main-inf}. 
\ifnum\version=\neurips
Our construction of the learning problem is formally presented in Construction~\ref{const:2'}.
\else
Our construction and theorems are formally stated as follows.

\fi

We note that the instance size is (approximately) $\Theta(n \cdot \ell) =\Theta(n\cdot\log\lambda)$.
We shall prove two properties of this construction. In Theorem \ref{thm:4-1'}, we establish the \emph{upper bound} of learnability with few parameters under efficient (polynomial-time) attacks. Later, in Theorem \ref{thm:4-2'}, we establish the lower bound of the number of parameters when the attacker is unbounded.

In the rest of this section, we formally prove these theorems.

\ifnum\version=\neurips

\begin{theorem*}[Restatement of Theorem~\ref{thm:4-1'}]
For the learning task of Construction \ref{const:2'}, there is an efficient learner (with 0 sample complexity) that outputs a model with no parameter and $\negl(\lambda)$-robustly learns $F_\lambda$ against efficient adversaries of budget $(1-\sqrt R)n$.
\end{theorem*}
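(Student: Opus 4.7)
The plan is to exhibit a single deterministic model $f$ (output by the learner without any training samples) that handles every perturbed input by combining list-decoding with signature verification, and then to reduce any successful efficient attack to a forgery of the signature scheme. Let me describe the four steps I would carry out.

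\textbf{Step 1: Define the parameter-free classifier $f$.}
On input $x' = (\widetilde{[u]}, \widetilde{[v]}, \widetilde{[\vk]}, \widetilde{c}, \widetilde{d})$, I would have $f$ proceed as follows: (i) error-correct $\widetilde{[\vk]}$ to recover $\vk$ (possible because $[\,\cdot\,]$ tolerates $(1-\sqrt{R})n$ errors and the adversary's total budget is at most that); (ii) run the efficient $(1-\sqrt{R},\mathrm{poly}(n))$-list-decoder of the RS code on $\widetilde{c}$ to obtain a list $\mathcal{L}=\{(b_1,\sigma_1),\ldots,(b_L,\sigma_L)\}$; (iii) filter $\mathcal{L}$ to the sublist $\mathcal{L}^* = \{(b_i,\sigma_i)\in\mathcal{L} : \verify(\vk,b_i,\sigma_i)=1\}$; (iv) if every $(b_i,\sigma_i)\in\mathcal{L}^*$ has the same first coordinate $b^*$, output $b^*$; otherwise output a default symbol (say $0$). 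This $f$ is deterministic, efficient, has no tunable parameters, and does not depend on $s$ — so the learner ignores its training set and returns this fixed $f$.

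\textbf{Step 2: Completeness on honest inputs and membership of the true pair.}
Since the perturbation budget $(1-\sqrt{R})n$ exactly matches the list-decoding radius of the RS code with rate $R<1/4$, for every attack the true pair $(b,\sign(\sk,b))$ lies in $\mathcal{L}$, and it verifies under the correctly recovered $\vk$. Thus $(b,\sign(\sk,b))\in\mathcal{L}^*$ always, and on an unperturbed input $\mathcal{L}^*$ may only contain other pairs sharing the same first bit $b$, so $f$ returns $b=h(x)$.

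\textbf{Step 3: Reduction to weak unforgeability.}
Fix any $D_s\in\cD_\lambda$ and any efficient adversary $A$ achieving robust risk $\eps$ against $f$. I would construct a forger $F$: on input $(\vk^*,b^*,\sigma^*)$ from the signature challenger, $F$ samples $u\gets\zo^r$, $v\gets\zo^n$, computes $d=[b^* + \langle v, s|_{\samp(u)}\rangle]$ using the known $s$, assembles $x=([u],[v],[\vk^*],\LEnc(b^*,\sigma^*),d)$, runs $A^{D_s}(x,b^*,f)$ answering oracle queries by sampling fresh instances (which $F$ can do since $s$ is known and $\gen$ is public), obtains a perturbation $x'$ with $\HD(x,x')\le (1-\sqrt{R})n$, then list-decodes the RS portion of $x'$ under $\vk^*$ as in Step 1, and outputs any pair $(b',\sigma')\in\mathcal{L}^*$ with $b'\neq b^*$. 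The simulated distribution is perfectly identical to the real training/test distribution, so whenever $f(x')\neq h(x)=b^*$ the definition of $f$ forces such a pair to exist in $\mathcal{L}^*$, giving $F$ a valid forgery. By weak unforgeability, $\eps=\negl(\lambda)$.

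\textbf{Anticipated obstacle.}
The subtle step is Step 3, specifically verifying that $F$'s simulation is \emph{perfect} even though $F$ only sees a single challenge pair $(\vk^*,b^*,\sigma^*)$: $F$ must answer any oracle queries $A$ makes to $D_s$, but each such query requires sampling a \emph{fresh} $(\vk,\sk)\gets\gen(1^\lambda)$ and a fresh signature, which $F$ can do on its own without touching the challenge. The second subtle point is that my filtering rule for $f$ must convert \emph{any} deviation from $b^*$ into a provable forgery — so I need to be careful that the default branch (when $|\{b_i : (b_i,\sigma_i)\in\mathcal{L}^*\}|\ge 2$) only triggers when a genuine forgery $(1-b^*,\sigma')\in\mathcal{L}^*$ is present, which is exactly what my definition guarantees since $\mathcal{L}^*\subseteq\zo\times\{\sigma : \verify(\vk^*,\cdot,\sigma)=1\}$.
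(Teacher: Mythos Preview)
Your proposal is correct and follows essentially the same argument as the paper: define a fixed classifier that error-corrects $\vk$, list-decodes the RS block, filters by $\verify$, and outputs the surviving bit; then reduce any efficient attack that causes misclassification to a forgery against the signature scheme. The only cosmetic differences are your tie-breaking rule (output a default bit when $\mathcal{L}^*$ contains both bits, versus the paper's ``pick an arbitrary valid pair'') and your more explicit treatment of the forger's simulation of $D_s$-oracle queries, neither of which changes the substance.
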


\else
\subsection{Proof of Theorem~\ref{thm:4-1'}}
\fi

\begin{proof}
The learner is defined as follows. On input a perturbed instance $x'=(\widetilde{[u]},\widetilde{[v]},\widetilde{[\vk]},\widetilde{\mathsf{c}},\widetilde{\mathsf{d}})$, it does the following:
\begin{enumerate}
    \item Invoke the error-correction algorithm to recover $\vk$.
    \item Invoke the list-decoding algorithm on $\widetilde{\mathsf{c}}$ to find a list of message/signature $(b_i,\sigma_i)$ pairs.
    \item Run the verifier to find   any valid message/signature pair $(b^*,\sigma^*)$ and output $b^*$. If no such pair exists, output a random bit, and if there are more than one such pair, pick one  arbitrarily.
\end{enumerate}

Observe that the learner can always recover the correct $\vk$ since the encoding  $[\vk]$ tolerates $(1-\sqrt R)n$ errors.

Next, suppose the original instance is $([u],[v], [vk],  \LEnc(b,\sign(\sk,b))$. Then, $(b,\sign(\sk,b))$ is always in the list of message/signature pairs output by the list-decoding algorithm. This is due to that $\LEnc(b,\sign(\sk,b))$ is $(1-\sqrt R)n$-close to the perturbed encoding $\widetilde{\mathsf{c}}$ and the list decoding algorithm outputs all such messages whose encoding is $(1-\sqrt R)n$-close to the perturbed one.

Finally, fix any distribution $D_s$. It must hold that, with $1-\negl(\lambda)$ probability, there does not exist a valid message/signature pair where the message is $1-b$. If this does not hold, one may utilize this learning adversary $A$ to break the unforgeability of the signature scheme as follows: on input  the verification key $\vk$ and a valid message/signature $(b,\sign(\sk,b))$, the signature adversary samples the test instance and feed it to the adversary $A$, obtaining a perturbed instance $x'=(\widetilde{[u]},\widetilde{[v]},\widetilde{[\vk]},\widetilde{\mathsf{c}},\widetilde{\mathsf{d}})$.%
\footnote{Note that the adversary can efficiently sample from $D_s$ as the $(\vk,\sk)$ pairs for every instance are independent.}
The signature adversary uses the same procedure as the efficient learner to recover a list of message/signature pairs. If there is a valid message/signature pair with message $1-b$, clearly, the signature adversary breaks the unforgeability of the signature scheme.
Since the signature scheme is $\negl(\lambda)$-secure, it must hold that, with $1-\negl(\lambda)$ probability, there does not exist a valid message/signature pair where the message is $1-b$.

Consequently, this efficient learner outputs the correct label $b$ with $1-\negl(\lambda)$ probability. Thus, for all efficient adversary $A$,
$$\Ex_{ f \sim L(\emptyset, \lambda)}[ \Risk_{d,r}(h, D_s,f)]=\negl(\lambda),$$
and this finishes the proof.
\end{proof}

\ifnum\version=\neurips

\begin{theorem*}[Restatement of Theorem~\ref{thm:4-2'}]
Consider the learning task of Construction \ref{const:2'}. Then, for computationally unbounded adversaries, any information-theoretic learner with $\leq \alpha/2$ parameters cannot $\eps$-robustly learn $F_\lambda$ for $\eps<1/3$.
\end{theorem*}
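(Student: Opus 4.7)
My plan is to construct an unbounded adversary $A$ whose perturbation fits within the budget $(1-\sqrt R)n$ and makes both labels $b$ and $1-b$ simultaneously consistent with every coordinate of the perturbed instance, and then to argue via a min-entropy / sampler / inner-product-extractor pipeline that any $\alpha/2$-parameter learner must predict $b$ with probability essentially $1/2$. First, I will define the adversary: on input $x = ([u], [v], [\vk], \LEnc(b, \sign(\sk, b)), [b + \tuple{v, s\vert_{\samp(u)}}])$, the unbounded adversary decodes $[\vk]$ and $\LEnc$ to recover $(\vk, b, \sign(\sk, b))$ and then brute-forces a valid forgery $\sigma^*$ with $\verify(\vk, 1-b, \sigma^*) = 1$, which exists since $\sign(\sk, 1-b)$ itself qualifies. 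Writing $c_b = \LEnc(b, \sign(\sk, b))$ and $c_{1-b} = \LEnc(1-b, \sigma^*)$, the adversary forms a ``Hamming midpoint'' $c^*$: it agrees with both codewords on their common positions and, on the disagreement positions, copies each of $c_b$ and $c_{1-b}$ on a uniformly random half. Since any two RS codewords in $\bbF_{2^\ell}^n$ differ on at most $n$ positions, this construction gives $\HD(c^*, c_b) \leq \lceil n/2 \rceil < (1-\sqrt R)n$ whenever $R < 1/4$, so $c^*$ fits the budget and lies inside the list-decoding radius of both codewords; the list decoding of $c^*$ therefore contains both valid pairs $(b, \sign(\sk, b))$ and $(1-b, \sigma^*)$ and no signature-verification step performed by the learner's model can break the tie.

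Next, I will run the entropy analysis. Let $Z = L(\cS^h, \lambda)$; since $s \draw \zo^\alpha$ is uniform and $|\Supp(Z)| \leq 2^{\alpha/2}$, Lemma~\ref{lem:min-ent} yields $\widetilde H_\infty(s \vert Z) \geq \alpha/2$, and the set $\mathsf{Good} = \set{z : H_\infty(s \vert Z = z) \geq \alpha/4}$ satisfies $\Pr[Z \in \mathsf{Good}] \geq 1 - 2^{-\alpha/4} = 1 - \negl(\lambda)$. For any fixed $z \in \mathsf{Good}$, picking $\kappa_1$ small in Lemma~\ref{thm:samp} produces a distribution $D$ over $\zo^n$ with $H_\infty(D) \geq (1/4 - \kappa_1)n \geq n/5$ such that $(u, s\vert_{\samp(u)}) \mid (Z = z) \approx_{\negl(\lambda)} (u, D)$. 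Theorem~\ref{thm:ip} applied to $D$ and uniform $v$ then gives $(v, \tuple{v, D}) \approx_{2^{-n/10}} (v, U_\zo)$, and chaining these two estimates shows that $\tuple{v, s\vert_{\samp(u)}}$ is $\negl(\lambda)$-close to a uniform bit independent of $b$ given $(Z, u, v)$; the coordinate $b + \tuple{v, s\vert_{\samp(u)}}$ is thus a near-perfect one-time-pad encryption of $b$.

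Putting the pieces together, conditional on $z \in \mathsf{Good}$ the tuple $([u], [v], [\vk])$ is independent of $b$, the midpoint $c^*$ is symmetric in the pair $\{c_b, c_{1-b}\}$ so its distribution given $(\vk, \sk)$ is independent of $b$, and the masked coordinate hides $b$ near-perfectly; combining these yields $(Z, x', b) \approx_{\negl(\lambda)} (Z, x', U_\zo)$. Hence every model $f$ achieves $\Pr[f(x') = b] \leq 1/2 + \negl(\lambda)$, and the robust risk is at least $1/2 - \negl(\lambda) > 1/3$. The main obstacle I anticipate is guaranteeing the symmetry of $c^*$ in $b$: if the signature scheme does not admit unique signatures, the unordered pair $\{c_b, c_{1-b}\}$ can depend on $b$ through the mismatch between $\sign(\sk, b)$ and the adversary's canonical forgery $\sigma^*$. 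I plan to resolve this by WLOG canonicalizing $\sign$ to output the lexicographically smallest valid signature, so that $\{c_b, c_{1-b}\} = \{\LEnc(0, \sign(\sk, 0)), \LEnc(1, \sign(\sk, 1))\}$ becomes a function of $\sk$ alone and the random half-choice renders $c^* \mid (\vk, \sk)$ independent of $b$.
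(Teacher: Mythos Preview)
Your proposal is correct and follows essentially the same route as the paper: the same unbounded adversary that forges $\sign(\sk,1-b)$ and replaces the $\LEnc$ block by a random ``midpoint'' between $c_b$ and $c_{1-b}$, combined with the same min-entropy/sampler/inner-product pipeline to show the masked bit $[b+\tuple{v,s\vert_{\samp(u)}}]$ is statistically close to $[U_{\zo}]$. The only cosmetic differences are that the paper picks its random half over all $n$ coordinates (not just the disagreement set) and, since the construction's $\sign$ is defined as a deterministic function, the canonicalization you worry about at the end is already automatic and need not be argued separately.
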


\else
\subsection{Proof of Theorem~\ref{thm:4-2'}}
\fi

\begin{proof}
We sample $s$ uniformly at random from $\zo^\alpha$ and
prove that
$$\Ex_{\cS \sim D_s^n; f \sim L(\cS^h, \lambda)}[ \Risk_{d,r}(h, D_s,f)]>1/3.$$
The proof is similar to the proof of Theorem~\ref{thm:eff-learn'}. 

Let $Z$ denote $L(\cdot,\lambda)$, i.e., the parameters of the model output by the learner. 
Given a test instance $x=([u],[v],[\vk],\LEnc(b,\sign(\sk,b)),[b+\tuple{v,s|_{\samp(u)}}])$, we first prove the following claim.

\begin{claim}\label{clm:4-2-1}
With overwhelming probability over $Z$,
\begin{align*}
&\quad\bigg(Z,([u],[v],[\vk],\LEnc(b,\sign(\sk,b)),[b+\tuple{v,s|_{\samp(u)}}])\bigg)\\
\approx_{\negl(\lambda)}&\quad \bigg(Z,([u],[v],[\vk],\LEnc(b,\sign(\sk,b)),[U_{\zo}])\bigg).
\end{align*}    
\end{claim}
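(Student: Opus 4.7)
My plan is to prove the claim by a standard ``conditional min-entropy $\to$ sampler $\to$ inner-product extractor'' pipeline applied to $s$, treating $b, u, v, (\vk,\sk)$ as auxiliary randomness that is independent of $s$.

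First I would bound the min-entropy of $s$ given the learner's output $Z$. Since $s \draw U_{\zo^\alpha}$ has $H_\infty(s) = \alpha$ and $|\Supp(Z)| \leq 2^{\alpha/2}$, Lemma~\ref{lem:min-ent} gives $\widetilde{H}_\infty(s \mid Z) \geq \alpha/2$, and hence for all but a $2^{-\alpha/8}$-fraction of $z$ we have $H_\infty(s \mid Z=z) \geq \alpha/2 - \alpha/8 \geq \alpha/4$. I condition on this ``good'' event for the rest of the argument; this contributes only a negligible term to the statistical distance.

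Second, I would apply the sampler (Lemma~\ref{thm:samp}) to the source $s \mid Z=z$, which has min-entropy rate $\geq 1/4$ inside $\zo^\alpha$. The lemma yields a distribution $Y$ over $\zo^n$ with $H_\infty(Y) \geq (1/4 - \kappa_1)\cdot n$ such that
\[
\bigl(u, s|_{\samp(u)}\bigr) \;\approx_{\negl(\lambda)}\; (u, Y),
\]
where $u \draw U_r$ is independent of everything. Choosing $\kappa_1$ to be a small constant, I get $H_\infty(Y) \geq n/5 \geq 2\log(1/\negl(\lambda))$.

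Third, I would apply the inner-product extractor (Theorem~\ref{thm:ip}) with $Y$ as the source and the independent uniform $v \draw U_n$ as the seed, concluding that $(v, \tuple{v, Y}) \approx_{\negl(\lambda)} (v, U_{\zo})$. Combining with the previous step and using independence of $u, v$ from $s$, I obtain
\[
\bigl(Z=z, u, v, \tuple{v, s|_{\samp(u)}}\bigr) \;\approx_{\negl(\lambda)}\; \bigl(Z=z, u, v, U_{\zo}\bigr).
\]
Since $b$ is a uniform bit independent of everything, adding $b$ preserves uniformity (indeed $b + U_{\zo}$ is uniform), and applying the deterministic encoding $[\cdot]$ preserves statistical distance. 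Finally, $[u], [v], [\vk], \LEnc(b, \sign(\sk, b))$ are all functions of $(u, v, \vk, \sk, b)$ and are independent of $s$, so they can be placed on both sides without changing the statistical distance. Averaging over the good $z$'s (and absorbing the bad ones into an additional $\negl(\lambda)$ term) gives the claim.

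The main obstacle is not any single step but the careful bookkeeping of which variables are independent of $s$ (namely $u, v, b, \vk, \sk$, all sampled freshly for the instance) versus which are functions of $s$ and $Z$. Once this is laid out, each step is a routine application of the tools already stated in the preliminaries.
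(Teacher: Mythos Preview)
Your proposal is correct and follows essentially the same pipeline as the paper's proof: bound $\widetilde{H}_\infty(s\mid Z)$ via Lemma~\ref{lem:min-ent}, condition on a good $z$ with $H_\infty(s\mid Z=z)\geq \alpha/4$, apply the sampler (Lemma~\ref{thm:samp}) to get a high-min-entropy $Y$ over $\zo^n$, and finish with the inner-product extractor (Theorem~\ref{thm:ip}). The only cosmetic difference is that you spell out more explicitly why the remaining components $b,[\vk],\LEnc(\cdot)$ can be carried along; note also that $\kappa_1$ is already fixed by the construction (to $\Theta(1/\log\lambda)$) rather than something you get to choose, but this does not affect the argument.
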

That is, the learner cannot distinguish the two distributions given $Z$.

First, we have $\widetilde{H}_\infty(s|Z)\geq \alpha/2$.
Define the set 
$$\mathsf{Good}=\{z\;:\; H_\infty(s|Z=z)\leq \alpha/4\}.$$
By Lemma~\ref{lem:min-ent}, $\prob{Z\in\mathsf{Good}}\leq 2^{-\alpha/4}=\negl(\lambda)$. For the rest of the analysis, we conditioned on $Z\notin\mathsf{Good}$.
Since $H_\infty(s|Z=z)>\alpha/4$, by the property of the sampler (Lemma~\ref{thm:samp}), there exists a distribution $D$ such that
$$H_\infty(D)\geq \left(\frac14-\kappa_1\right)\alpha >\frac15\alpha$$
and
\begin{equation}
\sd{(u,s|_{\samp(u)})\;,\; (u,D)}\leq \exp(-\Theta(\beta \kappa_1))+\exp\left(-n^{\kappa_2}\right)=\negl(\lambda).\label{eq:2}
\end{equation} 
Finally, by Theorem~\ref{thm:ip}, we have
\begin{align*}
&\;\sd{\left(v,\tuple{v,s|_{\samp(u)}}\right)\;,\; \left(v,U_{\zo}\right)}\\
\leq&\;\sd{\left(v,\tuple{v,s|_{\samp(u)}}\right)\;,\; \left(v,\tuple{v,D}\right)}+\sd{\left(v,\tuple{v,D}\right)\;,\; \left(v,U_{\zo}\right)}\tag{Triangle inequality}\\
\leq&\; \negl(\lambda)+\negl(\lambda).\tag{Equation~\ref{eq:2} and Theorem~\ref{thm:ip}}
\end{align*}
This completes the proof of Claim~\ref{clm:4-2-1}.

Now, consider the following adversary $A$ that perturbs $n/2$ symbols and does the following.
(Observe that $n/2<(1-\sqrt R)n$ for $R<1/4$ and, hence, the adversary is within budget.)
\begin{enumerate}
    \item $A$ decodes $\LEnc(b,\sign(\sk,b))$ to find $b$. Let $\bar b=1-b$. It forges a valid signature $ \sigma=\sign(\sk,\bar b)$ and encode it $\LEnc(\bar b,\sigma)$.
    \item Now, for a random subset $\cT\subseteq \{1,2,\ldots,n\}$ of size $\abs{\cT}=n/2$, $A$ replaces $\LEnc(b,\sign(\sk,b))$ with $\LEnc(\bar b,\sigma)$ on those $i\in \cT$.
    Then, after perturbation, the string $\LEnc(b,\sign(\sk,b))$ becomes a {\em random string} that has Hamming distance exactly $n/2$ from both $(0,\sign(\sk,0))$ and $(1,\sign(\sk,1))$. Let us call this distribution $X$. Note that $X$ is independent of $b$.
\end{enumerate}

Therefore, after the perturbation, the perturbed instance is statistically close to
$$([u],[v],[\vk],X,[U_{\zo}]),$$
which is independent of $b$.
Hence, the learner's output will not agree with $b$ with probability $\geq 1/2-\negl(\lambda)$.
Putting everything together, we have
\begin{align*}
&\;\Ex_{\cS \sim D_s^n; f \sim L(\cS^h, \lambda)}[ \Risk_{d,r}(h, D_s,f)]\\
\geq&\; \Pr_{\cS \sim D_s^n; f \sim L(\cS^h, \lambda)}[ z\in \mathsf{Good}] \\
&\; +\Pr_{\cS \sim D_s^n; f \sim L(\cS^h, \lambda)}[ z\notin \mathsf{Good}]\cdot \Ex_{\cS \sim D_s^n; f \sim L(\cS^h, \lambda)}[ \Risk_{d,r}(h, D_s,f)\vert  z\notin \mathsf{Good}] \\
\geq&\; \negl(\lambda)+(1- \negl(\lambda))\cdot \left(\frac12 
 - \negl(\lambda)\right) >1/3. &\qedhere
\end{align*}
\end{proof}

\section{Acknowledgement} Sanjam Garg and Mingyuan Wang are supported by DARPA under Agreement No. HR00112020026, AFOSR Award FA9550-19-1-0200, NSF CNS Award 1936826, and research grants by the Sloan Foundation, and Visa Inc. Any opinions, findings and conclusions or recommendations expressed in this material are those of the author(s) and do not necessarily reflect the views of the United States Government or DARPA. Mohammad Mahmoody is supported by NSF grants CCF-1910681 and CNS1936799.
Somesh Jha is partially supported by Air Force Grant FA9550-18-1-0166, the National Science Foundation (NSF) Grants CCF-FMitF-1836978, IIS-2008559, SaTC-Frontiers-1804648, CCF-2046710 and CCF-1652140, and ARO grant number W911NF-17-1-0405. Somesh Jha is also partially supported by the DARPA-GARD problem under agreement number 885000.

\bibliographystyle{plainnat}
\bibliography{ref,abbrev0,crypto}

\appendix

\section{Supplementary  material}
\label{app:add-prelim}

\ifnum\version=\neurips
\subsection{Learning}

\else
\fi

\subsection{Cryptographic primitives}

\begin{definition}[Computational indistinguishability]
We say two ensembles of distributions $X=\{X_\lambda\}_{\lambda\in\mathbb{N}}$ and $Y=\{Y_\lambda\}_{\lambda\in\mathbb{N}}$ are computationally indistinguishable if for any probabilistic polynomial-time (PPT) algorithm $A$, it holds that 
$$\abs{\probsub{x\draw X_\lambda}{A(x)=1}-\probsub{y\draw Y_\lambda}{A(y)=1}}=\negl(\lambda).$$
\end{definition}

\begin{definition}[One-way function]\label{def:owf}
An ensemble of functions $\{f_\lambda:\zo^\lambda\to\zo^\lambda\}_\lambda$ is called a one-way function if for all polynomial-time probabilistic algorithm $A$, it holds that
$$\prob{\begin{gathered}
x\draw\zo^\lambda,\ 
y = f_\lambda(x)\\
x'\draw A(1^\lambda, y)
\end{gathered}  \;:\; f_\lambda(x')=y}=\negl(\lambda).$$
\end{definition}

\subsection{Coding theory}

\begin{fact}\label{fact:RS} The following facts hold about the Reed-Solomon code.
\begin{itemize}
    \item The distance of the Reed-Solomon code is $d = n-k+1$. Moreover, the decoding is possible efficiently:  there is a PPT   algorithm that maps any erroneous codeword that contains up to $\leq(n-k)/2$ errors to the nearest correct (unique) codeword. 
    In other words, one can efficiently correct up to $\frac{1-R}{2}$ fraction of errors, where $R$ is the code's rate.  
    
    \item The encoding of a {\em random} message is $k$-wise independent. That is, for all subset $\cS\subseteq \{1,2,\ldots,n\}$ such that $\abs{\cS}\leq k$, the following distribution 
    $$\left\{ \begin{gathered}
        m\draw\bbF^k,\ \mathsf{c} = m\cdot G\\
        \text{Output }\mathsf{c}_\cS
    \end{gathered}\right\}$$
    is uniform over $\bbF^{\abs{\cS}}$. This follows from the fact that any $\leq k$ columns of the generator matrix of the RS code is full-rank. 
\end{itemize}
\end{fact}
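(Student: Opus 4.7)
The plan is to handle the two bullets of Fact~\ref{fact:RS} essentially independently, with the common thread being that RS codewords correspond to evaluations of polynomials of degree $\leq k-1$ at $n$ distinct points of the field $\bbF$. Throughout, I view the encoding map as $m \mapsto \big(f_m(\alpha_1), \ldots, f_m(\alpha_n)\big)$ where $f_m(x) = m_1 + m_2 x + \cdots + m_k x^{k-1}$ and $\alpha_1,\ldots,\alpha_n$ are distinct elements of $\bbF$ (the statement uses $1,2,\ldots,n$, which makes sense when $|\bbF| \geq n$).

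For the first bullet, I would first lower-bound the distance. Take two distinct codewords corresponding to messages $m \neq m'$; their difference is $f_m - f_{m'}$, a nonzero polynomial of degree $\leq k-1$. Such a polynomial has at most $k-1$ roots in $\bbF$, so the two codewords agree in at most $k-1$ coordinates, hence differ in at least $n-k+1$ positions. For the matching upper bound, I would invoke the Singleton bound $d \leq n-k+1$, or alternatively exhibit a codeword of weight exactly $n-k+1$ by choosing $f(x) = \prod_{i=1}^{k-1}(x-\alpha_i)$. For the efficient decoding claim, I would cite the classical Berlekamp--Welch algorithm, which in polynomial time recovers $f_m$ from any received word at Hamming distance $\leq (d-1)/2 = (n-k)/2$ by solving a linear system for a locator polynomial and a modified message polynomial; uniqueness up to $(n-k)/2$ errors follows from the distance bound already established.

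For the second bullet, fix any $\cS = \{i_1, \ldots, i_s\} \subseteq \{1,\ldots,n\}$ with $s \leq k$. The map $m \mapsto \mathsf{c}_\cS$ is $\bbF$-linear, given by multiplication by the $k \times s$ submatrix $G_\cS$ of the generator matrix whose columns are $(1, \alpha_{i_j}, \alpha_{i_j}^2, \ldots, \alpha_{i_j}^{k-1})^\top$ for $j = 1, \ldots, s$. Its transpose is an $s \times k$ Vandermonde matrix with distinct nodes $\alpha_{i_1}, \ldots, \alpha_{i_s}$, which has rank $s$ since $s \leq k$ (any $s$ columns of an $s \times k$ Vandermonde with distinct nodes are linearly independent). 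Hence $G_\cS$ has rank $s$, and the linear map $m \mapsto m \cdot G_\cS$ from $\bbF^k$ onto $\bbF^s$ is surjective. A uniform source pushed through a surjective $\bbF$-linear map yields the uniform distribution on the target (each fiber has the same cardinality $|\bbF|^{k-s}$), proving the desired $k$-wise independence.

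Neither part is technically hard; the main thing to be careful about is the framing that makes each claim obvious. Specifically, the key fact I would flag is that ``any $\leq k$ columns of the RS generator matrix are linearly independent,'' which is really just the Vandermonde determinant being nonzero for distinct nodes, and which simultaneously yields the MDS distance (via duality/Singleton) and the $k$-wise independence of random codeword projections. The only mildly nontrivial piece is invoking Berlekamp--Welch for the efficient decoding, which I would simply cite rather than reproduce.
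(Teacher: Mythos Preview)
Your proposal is correct and follows the standard textbook arguments. The paper itself does not supply a proof of this statement: it is presented as a ``Fact'' with no accompanying proof section, and the only justification given is the parenthetical remark in the second bullet that it ``follows from the fact that any $\leq k$ columns of the generator matrix of the RS code is full-rank.'' Your Vandermonde-rank argument is precisely the way to substantiate that remark, and your root-counting argument plus the Berlekamp--Welch citation is the standard way to establish the first bullet, so there is nothing to compare against.
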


\ifnum\version=\neurips

\subsection{Randomness Extraction}

\else
\fi

\section{Missing Proofs}\label{app:missing}

\allowdisplaybreaks

\subsection{Proof of Theorem~\ref{thm:ip}}\label{app:proof-ip}
The theorem follows from the following derivation.
\begin{align*}
&\;\sd{\;\Big(Y,\tuple{X,Y}\Big)\;,\; \Big(Y,U_{\zo}\Big)}\\
=&\;\expsub{y\draw Y}{\sd{\;\tuple{X,y}\;,\;U_{\zo}}\;}\\
=&\;\frac12\cdot \expsub{y\draw Y}{\abs{\prob{\tuple{X,y}=0}-\prob{\tuple{X,y}=1}}}\\
\leq&\;\frac12\cdot \sqrt{\expsub{y\draw Y}{\left( 
\prob{\tuple{X,y}=0}-\prob{\tuple{X,y}=1} \right)^2}}\tag{Jensen's inequality}\\
=&\;\frac12\cdot\sqrt{\expsub{y\draw Y}{\probsub{x,x'\draw X}{\tuple{x,y}=\tuple{x',y}}-\probsub{x,x'\draw X}{\tuple{x,y}\neq \tuple{x',y}}}}\\
=&\;\frac12\cdot\sqrt{\probsub{x,x'\draw X}{\probsub{y\draw Y}{\tuple{x-x',y}=0}-\probsub{y\draw Y}{\tuple{x-x',y}=1}}}\\
=&\;\frac12\cdot\sqrt{\probsub{x,x'\draw X}{x = x'}\cdot \frac12}\tag{when $x\neq x'$, the inner term is always 0}\\
=&\;\frac12\cdot\sqrt{\frac12\cdot \sum_\omega(\prob{X=\omega})^2}\\
\leq &\;\frac12\cdot\sqrt{\frac12\cdot \sum_\omega\prob{X=\omega}\cdot 2^{-H_\infty(X)}}\tag{By definition of min-entropy}\\
=&\;\frac12\cdot\sqrt{ 2^{-H_\infty(X)-1}}\leq \eps
\end{align*}

\subsection{Proof of Theorem~\ref{thm:masking}}\label{app:masking-proof}

Dodis and Smith~\cite{STOC:DodSmi05} proved this theorem for $\bbF_2$. We are simply revising their proof for the field $\bbF_{2^\ell}$.
Within this proof, we shall use $\bbF$ for $\bbF_{2^\ell}$.
We need the following claims.

\begin{claim}[Parseval's identity]
$\sum_{\alpha} \bias(X,\alpha)^2 =  {\abs{\bbF}^n}\cdot\sum_\omega(\prob{X=\omega})^2.$
\end{claim}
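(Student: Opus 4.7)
The plan is to expand the square of the bias using its definition, swap the order of summation between $\alpha$ and the pair $(\omega,\omega')$, and then evaluate the inner character sum using an orthogonality relation for the composed pairing $(\omega,\alpha)\mapsto \tr{\langle\omega,\alpha\rangle}$.

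Concretely, let $p_\omega = \prob{X=\omega}$. First, since the inner expectation is real, I would write
\[
\bias(X,\alpha)^2 \;=\; \Bigl(\sum_{\omega} p_\omega (-1)^{\tr{\langle \omega,\alpha\rangle}}\Bigr)^2 \;=\; \sum_{\omega,\omega'} p_\omega p_{\omega'} (-1)^{\tr{\langle \omega-\omega',\alpha\rangle}}.
\]
Summing over $\alpha \in \bbF^n$ and interchanging the order of summation gives
\[
\sum_\alpha \bias(X,\alpha)^2 \;=\; \sum_{\omega,\omega'} p_\omega p_{\omega'} \sum_\alpha (-1)^{\tr{\langle \omega-\omega',\alpha\rangle}}.
\]

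The key step, which I expect to be the main (and essentially only) technical point, is the character sum identity
\[
\sum_{\alpha\in\bbF^n} (-1)^{\tr{\langle \beta,\alpha\rangle}} \;=\; \begin{cases} \abs{\bbF}^n & \text{if } \beta = 0,\\ 0 & \text{otherwise.}\end{cases}
\]
For $\beta=0$ this is immediate. For $\beta\neq 0$, the map $\alpha\mapsto \langle\beta,\alpha\rangle$ is a surjective $\bbF$-linear map from $\bbF^n$ to $\bbF$, so each value in $\bbF$ is hit exactly $\abs{\bbF}^{n-1}$ times. The trace map $\tr{\cdot}:\bbF_{2^\ell}\to\bbF_2$ is a nonzero $\bbF_2$-linear functional, hence its kernel is a hyperplane of size $2^{\ell-1}$, so $\tr{\cdot}$ takes the values $0$ and $1$ equally often on $\bbF$. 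Combining these two facts shows the $(-1)^{\tr{\cdot}}$ terms cancel exactly.

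Plugging this identity back in, only the diagonal terms $\omega=\omega'$ survive, yielding
\[
\sum_\alpha \bias(X,\alpha)^2 \;=\; \abs{\bbF}^n \sum_\omega p_\omega^2 \;=\; \abs{\bbF}^n \cdot \sum_\omega (\prob{X=\omega})^2,
\]
which is the claimed identity. No tools beyond the definition of bias and elementary properties of the trace pairing on $\bbF_{2^\ell}$ are needed; the whole argument is a standard Parseval/orthogonality computation adapted from the $\bbF_2$ setting to $\bbF_{2^\ell}$ by using $\tr{\cdot}$ to define the character.
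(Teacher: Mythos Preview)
Your proof is correct and follows essentially the same route as the paper: expand the square of the bias as a double sum (the paper phrases this as an expectation over independent $x,x'\draw X$), swap the sum over $\alpha$ inside, and apply the character orthogonality for $(-1)^{\tr{\langle\cdot,\alpha\rangle}}$ so that only the diagonal $\omega=\omega'$ survives. Your justification of the character-sum identity is in fact slightly more detailed than the paper's, which simply appeals to the trace being balanced.
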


\begin{proof}
Observe that
\begin{align*}
&\sum_{\alpha}\bias(X,\alpha)^2\\
=&\;\sum_\alpha\left(\expsub{x\draw X}{(-1)^{\tr{\tuple{x,\alpha}}}}\right)^2\\
=&\; \sum_\alpha\expsub{x,x'\draw X}{(-1)^{\tr{\tuple{x,\alpha}}}\cdot (-1)^{\tr{\tuple{x',\alpha}}}}\\
=&\; \sum_\alpha\expsub{x,x'\draw X}{(-1)^{\tr{\tuple{x+x',\alpha}}}}\tag{Since the trace map is additive}\\
=&\; \expsub{x,x'\draw X}{\sum_\alpha(-1)^{\tr{\tuple{x+x',\alpha}}}}\\
=&\; \abs{\bbF}^n\probsub{x,x'\draw X}{x=x'}.
\end{align*}
Here, we use the fact that, when $x\neq x'$, the inner term is 0 as the trace map maps half of the field to 0 and the other half to 1.

Note that the last line is exactly equal to
\[
\abs{\bbF}^n\cdot\sum_\omega(\prob{X=\omega})^2.  \qedhere
\]
\end{proof}

\begin{claim}[Bias of Convolusion is product of bias]\label{clm:convol}
$\bias(X+Y,\alpha) =  \bias(X,\alpha)\cdot\bias(Y,\alpha).$
\end{claim}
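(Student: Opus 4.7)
The plan is to unfold the definition of bias and exploit two algebraic facts: additivity of the trace map (so that $\tr{\tuple{x+y,\alpha}} = \tr{\tuple{x,\alpha}} + \tr{\tuple{y,\alpha}}$ by bilinearity of the inner product over $\bbF_{2^\ell}$), and independence of the summands $X$ and $Y$ in the convolution $X+Y$. These two facts will let me factor the expectation defining $\bias(X+Y,\alpha)$ into a product of two expectations, one for each marginal.

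Concretely, I would start from
\[
\bias(X+Y,\alpha) = \left| \expsub{x\draw X,\, y \draw Y}{(-1)^{\tr{\tuple{x+y,\alpha}}}} \right|,
\]
using the fact that $X+Y$ is defined as the distribution of $x+y$ for independent $x \sim X$ and $y \sim Y$. Next I would rewrite the exponent via bilinearity of $\tuple{\cdot,\cdot}$ and $\bbF_2$-linearity of the trace map $\mathsf{Tr}$: since $\tuple{x+y,\alpha} = \tuple{x,\alpha} + \tuple{y,\alpha}$ in $\bbF_{2^\ell}$ and $\tr{a+b} = \tr{a} + \tr{b}$ in $\bbF_2$, the sign $(-1)^{\tr{\tuple{x+y,\alpha}}}$ splits as the product $(-1)^{\tr{\tuple{x,\alpha}}} \cdot (-1)^{\tr{\tuple{y,\alpha}}}$.

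Then independence of $x$ and $y$ lets me split the joint expectation into a product of two marginal expectations, and since $|ab| = |a|\cdot|b|$ for real numbers I can pull the absolute value through to conclude
\[
\bias(X+Y,\alpha) = \left|\expsub{x \draw X}{(-1)^{\tr{\tuple{x,\alpha}}}}\right| \cdot \left|\expsub{y \draw Y}{(-1)^{\tr{\tuple{y,\alpha}}}}\right| = \bias(X,\alpha) \cdot \bias(Y,\alpha),
\]
which is exactly the claim.

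There is no real obstacle here; the only thing to be careful about is making the three ingredients (bilinearity of inner product over $\bbF_{2^\ell}$, $\bbF_2$-additivity of the trace, and independence of the two summands) explicit in sequence, so that the reader sees why the step from a sum in the exponent to a product of signs, and then from a joint expectation to a product of expectations, is justified. Aside from that, the proof is a one-line unfolding of definitions.
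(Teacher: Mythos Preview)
Your proposal is correct and is essentially the same argument as the paper's: both unfold the definition, use bilinearity of the inner product together with additivity of the trace to split the sign, and then use independence of $X$ and $Y$ to factor the expectation. The only cosmetic difference is that the paper writes the computation as explicit sums over outcomes (and silently omits the absolute value throughout), whereas you phrase it via expectations and explicitly note $|ab|=|a|\cdot|b|$ at the end; if anything your version is slightly more careful on that point.
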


\begin{proof}
Observe that
\begin{align*}
&\bias(X+Y,\alpha)\\
=&\; \sum_{\omega}\prob{X+Y=\omega}\cdot {(-1)^{\tr{\tuple{\omega,\alpha}}}}\\
=&\; \sum_\omega\sum_{\omega'}\prob{X=\omega'}\prob{Y=\omega-\omega'}\cdot {(-1)^{\tr{\tuple{\omega,\alpha}}}}\\
=&\; \sum_{\omega''}\sum_{\omega'}\prob{X=\omega'}\prob{Y=\omega''}\cdot {(-1)^{\tr{\tuple{\omega'+\omega'',\alpha}}}}\\
=&\; \left(\sum_{\omega'}\prob{X=\omega'}\cdot {(-1)^{\tr{\tuple{\omega',\alpha}}}}\right)\cdot \left(\sum_{\omega''}\prob{Y=\omega''}\cdot {(-1)^{\tr{\tuple{\omega'',\alpha}}}}\right)\\
=&\; \bias(X,\alpha)\cdot \bias(Y,\alpha).&&\qedhere
\end{align*}
\end{proof}

Given these two claims, we prove the theorem as follows.

\begin{align*}
& \sd{X+Y,U_{\bbF^n}}\\
=&\; \frac12\cdot \sum_{\omega}\abs{\prob{X+Y=\omega}-\prob{U_{\bbF^n}=\omega}}\\
\leq&\; \frac12\cdot\sqrt{\abs{\bbF}^n\cdot\sum_\omega \left(\prob{X+Y=\omega}-\prob{U_{\bbF^n}=\omega}\right)^2}\tag{Cauchy-Schwartz}\\
=&\; \frac12\cdot \sqrt{ \sum_{\alpha}\left(\bias(X+Y,\alpha)-\bias(U_{\bbF^n},\alpha)\right)^2}\tag{Parseval}\\
=&\;\frac12\cdot \sqrt{ \sum_{\alpha\neq 0^n}\bias(X+Y,\alpha)^2}\tag{Since $\bias(U_{\bbF^n},\alpha)=0$ for all $\alpha\neq 0^n$.}\\
=&\; \frac12\cdot\sqrt{ \sum_{\alpha\neq 0^{n}}\bias(X,\alpha)^2\cdot\bias(Y,\alpha)^2}\tag{By Claim~\ref{clm:convol}}\\
\leq&\; \frac\eps2\cdot\sqrt{ \sum_{\alpha\neq 0^n}\bias(X,\alpha)^2}\tag{Since $Y$ is small-biased}\\
\leq&\; \frac\eps2\cdot\sqrt{\abs{\bbF}^{n}\sum_{\omega}(\prob{X=\omega})^2}\tag{Parseval}\\
=&\; \frac\eps2\cdot\sqrt{\abs{\bbF}^n\sum_\omega\prob{X=\omega}\cdot2^{-H_\infty(X)}}\tag{Definition of min-entropy}\\
=&\; \frac\eps2\cdot\sqrt{\abs{\bbF}^n \cdot2^{-H_\infty(X)}}\\
=&\; 2^{\frac{n\ell-k}{2}-1}\cdot \eps
\end{align*}

\subsection{Proof of Theorem~\ref{thm:noisy-code}}\label{app:noisy-proof}

We divide all possible linear tests $\alpha$ into two cases.
\begin{itemize}
    \item {\bfseries Small linear tests are fooled by RS code.} We say that $\alpha$ is a small linear test if ${\abs{\{i\colon \alpha_i\neq 0\}}\leq Rn}$. By Fact~\ref{fact:RS}, a random codeword  projects onto any $\leq Rn$ coordinates is always a uniform distribution. Hence, $\tuple{D,\alpha}$ is always uniform. Consequently, $\bias( D,\alpha)=0$ for all small linear test.  
    \item {\bfseries Large linear tests are fooled by the noise.} Suppose $\alpha$ is such that $\abs{\cT}> Rn$, where $\cT=\{i\colon \alpha_i\neq 0\}$. Observe that $\cS$ is a random subset of size $s$ and $\cT$ is a fixed set of size $> Rn$. Clearly, $\cS\cap \cT=\emptyset$ happens with probability $\leq (1-R)^s$. Now, conditioned on the event that $\cS\cap \cT\neq \emptyset$, we again have $\tuple{D,\alpha}$ is a uniform distribution (because of the random noise). Consequently, for large $\alpha$, we have $\bias(D,\alpha)\leq (1-R)^s$.
\end{itemize}
Therefore, for all possible $\alpha$, $\bias(D,\alpha)$ is small. Hence, the theorem follows.

\else       
\documentclass{article}


\usepackage[final]{neurips_2022}

\usepackage{algorithm}
\usepackage{lipsum}

\usepackage{algorithmic}




\usepackage[utf8]{inputenc} 
\usepackage[T1]{fontenc}    
\usepackage[colorlinks,linkcolor=blue,urlcolor=blue,citecolor=blue]{hyperref}       
\usepackage{url}            
\usepackage{booktabs}       
\usepackage{amsfonts}       
\usepackage{nicefrac}       
\usepackage{microtype}      
\usepackage{xcolor} 



%

\begin{document}

\author{
Sanjam Garg\thanks{UC Berkeley and NTT Research \href{mailto:sanjamg@berkeley.edu}{sanjamg@berkeley.edu}} \And 
Somesh Jha\thanks{University of Wisconsin, Madison \href{mailto:jha@cs.wisc.edu}{jha@cs.wisc.edu}} \And 
Saeed Mahloujifar\thanks{Princeton University \href{mailto:sfar@princeton.edu}{sfar@princeton.edu}}  \AND 
Mohammad Mahmoody\thanks{University of Virginia \href{mailto:mohammad@virginia.edu}{mohammad@virginia.edu}} \And
Mingyuan Wang\thanks{UC Berkeley \href{mailto:mingyuan@berkeley.edu}{mingyuan@berkeley.edu}}
}

\maketitle




\bibliographystyle{plainnat}
\bibliography{ref,abbrev0,crypto}

\begin{thebibliography}{36}
\providecommand{\natexlab}[1]{#1}
\providecommand{\url}[1]{\texttt{#1}}
\expandafter\ifx\csname urlstyle\endcsname\relax
  \providecommand{\doi}[1]{doi: #1}\else
  \providecommand{\doi}{doi: \begingroup \urlstyle{rm}\Url}\fi

\bibitem[Allen-Zhu et~al.(2019{\natexlab{a}})Allen-Zhu, Li, and
  Liang]{allen2019learning}
Zeyuan Allen-Zhu, Yuanzhi Li, and Yingyu Liang.
\newblock Learning and generalization in overparameterized neural networks,
  going beyond two layers.
\newblock \emph{Advances in neural information processing systems}, 32,
  2019{\natexlab{a}}.

\bibitem[Allen-Zhu et~al.(2019{\natexlab{b}})Allen-Zhu, Li, and
  Song]{allen2019convergence}
Zeyuan Allen-Zhu, Yuanzhi Li, and Zhao Song.
\newblock A convergence theory for deep learning via over-parameterization.
\newblock In \emph{International Conference on Machine Learning}, pages
  242--252. PMLR, 2019{\natexlab{b}}.

\bibitem[Arora et~al.(2018{\natexlab{a}})Arora, Cohen, and
  Hazan]{arora2018optimization}
Sanjeev Arora, Nadav Cohen, and Elad Hazan.
\newblock On the optimization of deep networks: Implicit acceleration by
  overparameterization.
\newblock In \emph{International Conference on Machine Learning}, pages
  244--253. PMLR, 2018{\natexlab{a}}.

\bibitem[Arora et~al.(2018{\natexlab{b}})Arora, Ge, Neyshabur, and
  Zhang]{arora2018stronger}
Sanjeev Arora, Rong Ge, Behnam Neyshabur, and Yi~Zhang.
\newblock Stronger generalization bounds for deep nets via a compression
  approach.
\newblock In \emph{International Conference on Machine Learning}, pages
  254--263. PMLR, 2018{\natexlab{b}}.

\bibitem[Arora et~al.(2019)Arora, Du, Hu, Li, and Wang]{arora2019fine}
Sanjeev Arora, Simon Du, Wei Hu, Zhiyuan Li, and Ruosong Wang.
\newblock Fine-grained analysis of optimization and generalization for
  overparameterized two-layer neural networks.
\newblock In \emph{International Conference on Machine Learning}, pages
  322--332. PMLR, 2019.

\bibitem[Bubeck and Sellke(2021)]{bubeck2021universal}
S{\'e}bastien Bubeck and Mark Sellke.
\newblock A universal law of robustness via isoperimetry.
\newblock \emph{Advances in Neural Information Processing Systems}, 34, 2021.

\bibitem[Bubeck et~al.(2019)Bubeck, Lee, Price, and
  Razenshteyn]{bubeck2019adversarial}
S{\'e}bastien Bubeck, Yin~Tat Lee, Eric Price, and Ilya Razenshteyn.
\newblock Adversarial examples from computational constraints.
\newblock In \emph{International Conference on Machine Learning}, pages
  831--840. PMLR, 2019.

\bibitem[Chang et~al.(2020)Chang, Li, Oymak, and
  Thrampoulidis]{chang2020provable}
Xiangyu Chang, Yingcong Li, Samet Oymak, and Christos Thrampoulidis.
\newblock Provable benefits of overparameterization in model compression: From
  double descent to pruning neural networks.
\newblock \emph{arXiv preprint arXiv:2012.08749}, 2020.

\bibitem[Dai et~al.(2021)Dai, Liu, Le, and Tan]{dai2021coatnet}
Zihang Dai, Hanxiao Liu, Quoc~V Le, and Mingxing Tan.
\newblock Coatnet: Marrying convolution and attention for all data sizes.
\newblock \emph{Advances in Neural Information Processing Systems},
  34:\penalty0 3965--3977, 2021.

\bibitem[Degwekar et~al.(2019)Degwekar, Nakkiran, and
  Vaikuntanathan]{degwekar2019computational}
Akshay Degwekar, Preetum Nakkiran, and Vinod Vaikuntanathan.
\newblock Computational limitations in robust classification and win-win
  results.
\newblock In \emph{Conference on Learning Theory}, pages 994--1028. PMLR, 2019.

\bibitem[Dodis and Smith(2005)]{STOC:DodSmi05}
Yevgeniy Dodis and Adam Smith.
\newblock Correcting errors without leaking partial information.
\newblock In Harold~N. Gabow and Ronald Fagin, editors, \emph{37th Annual {ACM}
  Symposium on Theory of Computing}, pages 654--663, Baltimore, MA, USA,
  May~22--24, 2005. {ACM} Press.
\newblock \doi{10.1145/1060590.1060688}.

\bibitem[Dodis et~al.(2008)Dodis, Ostrovsky, Reyzin, and Smith]{DORS08}
Yevgeniy Dodis, Rafail Ostrovsky, Leonid Reyzin, and Adam~D. Smith.
\newblock Fuzzy extractors: How to generate strong keys from biometrics and
  other noisy data.
\newblock \emph{{SIAM} J. Comput.}, 38\penalty0 (1):\penalty0 97--139, 2008.
\newblock \doi{10.1137/060651380}.
\newblock URL \url{https://doi.org/10.1137/060651380}.

\bibitem[Du et~al.(2018)Du, Zhai, Poczos, and Singh]{du2018gradient}
Simon~S Du, Xiyu Zhai, Barnabas Poczos, and Aarti Singh.
\newblock Gradient descent provably optimizes over-parameterized neural
  networks.
\newblock \emph{arXiv preprint arXiv:1810.02054}, 2018.

\bibitem[Garg et~al.(2020{\natexlab{a}})Garg, Jha, Mahloujifar, and
  Mahmoody]{GJMM20}
Sanjam Garg, Somesh Jha, Saeed Mahloujifar, and Mohammad Mahmoody.
\newblock Adversarially robust learning could leverage computational hardness.
\newblock In Aryeh Kontorovich and Gergely Neu, editors, \emph{Algorithmic
  Learning Theory, {ALT} 2020, 8-11 February 2020, San Diego, CA, {USA}},
  volume 117 of \emph{Proceedings of Machine Learning Research}, pages
  364--385. {PMLR}, 2020{\natexlab{a}}.

\bibitem[Garg et~al.(2020{\natexlab{b}})Garg, Jha, Mahloujifar, and
  Mohammad]{garg2020adversarially}
Sanjam Garg, Somesh Jha, Saeed Mahloujifar, and Mahmoody Mohammad.
\newblock Adversarially robust learning could leverage computational hardness.
\newblock In \emph{Algorithmic Learning Theory}, pages 364--385. PMLR,
  2020{\natexlab{b}}.

\bibitem[Goldwasser and Micali(1984)]{GM84}
Shafi Goldwasser and Silvio Micali.
\newblock Probabilistic encryption.
\newblock \emph{J. Comput. Syst. Sci.}, 28\penalty0 (2):\penalty0 270--299,
  1984.

\bibitem[Goodfellow et~al.(2014)Goodfellow, Shlens, and
  Szegedy]{goodfellow2014explaining}
Ian~J Goodfellow, Jonathon Shlens, and Christian Szegedy.
\newblock Explaining and harnessing adversarial examples.
\newblock \emph{arXiv preprint arXiv:1412.6572}, 2014.

\bibitem[Guruswami and Sudan(1998)]{FOCS:GurSud98}
Venkatesan Guruswami and Madhu Sudan.
\newblock Improved decoding of {Reed}-{Solomon} and algebraic-geometric codes.
\newblock In \emph{39th Annual Symposium on Foundations of Computer Science},
  pages 28--39, Palo Alto, CA, USA, November~8--11, 1998. {IEEE} Computer
  Society Press.
\newblock \doi{10.1109/SFCS.1998.743426}.

\bibitem[H{\aa}stad et~al.(1999)H{\aa}stad, Impagliazzo, Levin, and
  Luby]{hill99}
Johan H{\aa}stad, Russell Impagliazzo, Leonid~A. Levin, and Michael Luby.
\newblock A pseudorandom generator from any one-way function.
\newblock \emph{{SIAM} Journal on Computing}, 28\penalty0 (4):\penalty0
  1364--1396, 1999.

\bibitem[Kawaguchi et~al.(2017)Kawaguchi, Kaelbling, and
  Bengio]{kawaguchi2017generalization}
Kenji Kawaguchi, Leslie~Pack Kaelbling, and Yoshua Bengio.
\newblock Generalization in deep learning.
\newblock \emph{arXiv preprint arXiv:1710.05468}, 2017.

\bibitem[Lee et~al.(2019)Lee, Xiao, Schoenholz, Bahri, Novak, Sohl-Dickstein,
  and Pennington]{lee2019wide}
Jaehoon Lee, Lechao Xiao, Samuel Schoenholz, Yasaman Bahri, Roman Novak, Jascha
  Sohl-Dickstein, and Jeffrey Pennington.
\newblock Wide neural networks of any depth evolve as linear models under
  gradient descent.
\newblock \emph{Advances in neural information processing systems}, 32, 2019.

\bibitem[Mahloujifar and Mahmoody(2019)]{mahloujifar2019can}
Saeed Mahloujifar and Mohammad Mahmoody.
\newblock Can adversarially robust learning leveragecomputational hardness?
\newblock In \emph{Algorithmic Learning Theory}, pages 581--609. PMLR, 2019.

\bibitem[Montasser et~al.(2019)Montasser, Hanneke, and Srebro]{montasser2019vc}
Omar Montasser, Steve Hanneke, and Nathan Srebro.
\newblock Vc classes are adversarially robustly learnable, but only improperly.
\newblock In \emph{Conference on Learning Theory}, pages 2512--2530. PMLR,
  2019.

\bibitem[Naor and Naor(1993)]{NN93}
Joseph Naor and Moni Naor.
\newblock Small-bias probability spaces: Efficient constructions and
  applications.
\newblock \emph{{SIAM} J. Comput.}, 22\penalty0 (4):\penalty0 838--856, 1993.

\bibitem[Naor and Yung(1989)]{STOC:NaoYun89}
Moni Naor and Moti Yung.
\newblock Universal one-way hash functions and their cryptographic
  applications.
\newblock In \emph{21st Annual {ACM} Symposium on Theory of Computing}, pages
  33--43, Seattle, WA, USA, May~15--17, 1989. {ACM} Press.
\newblock \doi{10.1145/73007.73011}.

\bibitem[Neyshabur et~al.(2014)Neyshabur, Tomioka, and
  Srebro]{neyshabur2014search}
Behnam Neyshabur, Ryota Tomioka, and Nathan Srebro.
\newblock In search of the real inductive bias: On the role of implicit
  regularization in deep learning.
\newblock \emph{arXiv preprint arXiv:1412.6614}, 2014.

\bibitem[Novak et~al.(2018)Novak, Bahri, Abolafia, Pennington, and
  Sohl-Dickstein]{novak2018sensitivity}
Roman Novak, Yasaman Bahri, Daniel~A Abolafia, Jeffrey Pennington, and Jascha
  Sohl-Dickstein.
\newblock Sensitivity and generalization in neural networks: an empirical
  study.
\newblock \emph{arXiv preprint arXiv:1802.08760}, 2018.

\bibitem[Pietrzak(2012)]{pietrzak2012cryptography}
Krzysztof Pietrzak.
\newblock Cryptography from learning parity with noise.
\newblock In \emph{International Conference on Current Trends in Theory and
  Practice of Computer Science}, pages 99--114. Springer, 2012.

\bibitem[Rompel(1990)]{STOC:Rompel90}
John Rompel.
\newblock One-way functions are necessary and sufficient for secure signatures.
\newblock In \emph{22nd Annual {ACM} Symposium on Theory of Computing}, pages
  387--394, Baltimore, MD, USA, May~14--16, 1990. {ACM} Press.
\newblock \doi{10.1145/100216.100269}.

\bibitem[Shalev-Shwartz and Ben-David(2014)]{BenDBook}
Shai Shalev-Shwartz and Shai Ben-David.
\newblock \emph{Understanding machine learning: From theory to algorithms}.
\newblock Cambridge university press, 2014.

\bibitem[Vadhan(2003)]{C:Vadhan03}
Salil~P. Vadhan.
\newblock On constructing locally computable extractors and cryptosystems in
  the bounded storage model.
\newblock In Dan Boneh, editor, \emph{Advances in Cryptology -- {CRYPTO}~2003},
  volume 2729 of \emph{Lecture Notes in Computer Science}, pages 61--77, Santa
  Barbara, CA, USA, August~17--21, 2003. Springer, Heidelberg, Germany.
\newblock \doi{10.1007/978-3-540-45146-4_4}.

\bibitem[Vadhan(2004)]{Vadhan04}
Salil~P. Vadhan.
\newblock Constructing locally computable extractors and cryptosystems in the
  bounded-storage model.
\newblock \emph{J. Cryptol.}, 17\penalty0 (1):\penalty0 43--77, 2004.

\bibitem[Wortsman et~al.(2022)Wortsman, Ilharco, Gadre, Roelofs, Gontijo-Lopes,
  Morcos, Namkoong, Farhadi, Carmon, Kornblith, et~al.]{wortsman2022model}
Mitchell Wortsman, Gabriel Ilharco, Samir~Yitzhak Gadre, Rebecca Roelofs,
  Raphael Gontijo-Lopes, Ari~S Morcos, Hongseok Namkoong, Ali Farhadi, Yair
  Carmon, Simon Kornblith, et~al.
\newblock Model soups: averaging weights of multiple fine-tuned models improves
  accuracy without increasing inference time.
\newblock \emph{arXiv preprint arXiv:2203.05482}, 2022.

\bibitem[Xu et~al.(2018)Xu, Hsu, and Maleki]{xu2018benefits}
Ji~Xu, Daniel~J Hsu, and Arian Maleki.
\newblock Benefits of over-parameterization with em.
\newblock \emph{Advances in Neural Information Processing Systems}, 31, 2018.

\bibitem[Yu et~al.(2022)Yu, Wang, Vasudevan, Yeung, Seyedhosseini, and
  Wu]{yu2022coca}
Jiahui Yu, Zirui Wang, Vijay Vasudevan, Legg Yeung, Mojtaba Seyedhosseini, and
  Yonghui Wu.
\newblock Coca: Contrastive captioners are image-text foundation models.
\newblock \emph{arXiv preprint arXiv:2205.01917}, 2022.

\bibitem[Zhang et~al.(2021)Zhang, Bengio, Hardt, Recht, and
  Vinyals]{zhang2021understanding}
Chiyuan Zhang, Samy Bengio, Moritz Hardt, Benjamin Recht, and Oriol Vinyals.
\newblock Understanding deep learning (still) requires rethinking
  generalization.
\newblock \emph{Communications of the ACM}, 64\penalty0 (3):\penalty0 107--115,
  2021.

\end{thebibliography}

\appendix

\input{checklist}

\newpage

\ifnum\temp=1
\section{Revision}

\fi

\fi

\end{document}